\newcolumntype{C}[1]{>{\centering\arraybackslash}m{#1}}
\def\td{{\text d}}
\def\id{{\text{id}}}
\def\N{{\mathbb {N}}}
\def\d{{\text d}}
\def\L{{\mathbb L}}
\def\R{{\mathbb R}}
\def\N{{\mathbb N}}
\def\P{{\mathbb P}}
\def\E{ \mathbb{E} }
\def\T{ \mathbb{T} }
\def\Q{ \mathbb{Q} }
\def\b{{\mathbf b}}
\def\bfnu{{\boldsymbol \nu}}
\def\FF{ \mathcal{F} }
\def\PP{ \mathcal{P} }
\def\BB{ \mathcal{B} }
\def\NN{ \mathcal{N} }
\def\1{\mathbbm{1}}
\def\b0{ \mathbf{0} }
\DeclareMathOperator*{\argmin}{arg\,min}
\newtheorem{proposition}{Proposition}  
\newtheorem{definition}{Definition}
\newtheorem{theorem}{Theorem}
\newtheorem{remark}{Remark}
\newtheorem{lemma}{Lemma}
\newcommand{\CB}[1]{{\color{black} #1}}
\definecolor{salmon}{rgb}{1.0, 0.55, 0.41}
\definecolor{dodgerblue}{rgb}{0.12, 0.56, 1.0}
\definecolor{orchid}{rgb}{0.85, 0.44, 0.84}
\definecolor{limegreen}{rgb}{0.2, 0.8, 0.2}
\title{A novel notion of barycenter for probability distributions based on  optimal weak mass transport}
\author{%
  Elsa Cazelles\\%\thanks{Use footnote for providing further information about author (webpage, alternative address)---\emph{not} for acknowledging funding agencies.} \\
  IRIT, Université de Toulouse, CNRS\\
  \texttt{elsa.cazelles@irit.fr} \\
  % examples of more authors
   \And
  Felipe Tobar \\
 IDIA \& CMM, Universidad de Chile\\
  % Address \\
  \texttt{ftobar@dim.uchile.cl} \\
   \AND
Joaquin Fontbona\\
  CMM, Universidad de Chile\\
  % Address \\
   \texttt{fontbona@dim.uchile.cl} \\
  % \And
  % Coauthor \\
  % Affiliation \\
  % Address \\
  % \texttt{email} \\
  % \And
  % Coauthor \\
  % Affiliation \\
  % Address \\
  % \texttt{email} \\
}
\begin{document}

\CB{
\vspace*{\fill}
\begin{center}
{\Large To the reader's attention}
\end{center}

The previous arXiv version of this paper  and the published version  contained an erroneous predicate : an optimal plan solving the weak OT problem  \eqref{def:weak_transport} might not be unique. However, there is  a  $\td\mu$-almost surely  uniquely defined mapping $S_{\mu}^{\nu}$ such that  the optimal law $\eta^*\leq_c\nu$ verifies $S_{\mu}^{\nu}\#\mu = \eta^*$ and  the barycentric projection $x\mapsto \int y \td\pi_x(y)$  of any optimal weak plan $\pi\in\Pi(\mu,\nu)$ is  $\td\mu$-almost surely equal to $S_{\mu}^{\nu}(x)$ (see Theorem \ref{th:1.4Julio}). As a main consequence, the statement and proof of  Lemma \ref{lemma:Smeasurable} regarding the joint measurability of the mapping $(x,\nu)\mapsto S_{\mu}^{\nu}(x)$ for fixed $\mu$ have been modified. The overall results of the paper are not affected by this change.

\vspace*{\fill}
}

\pagebreak

\maketitle

%!TEX root = ../CTF_neurips2021_final.tex

\begin{abstract}

We introduce weak barycenters of a family of probability distributions, based on the recently developed notion of optimal weak transport of mass \cite{gozlan2017kantorovich}, \cite{backhoff2020weak}. We provide a theoretical analysis of this object and discuss its interpretation in the light of convex ordering between probability measures. In particular, we show that, rather than averaging the input distributions in a geometric way (as the Wasserstein barycenter based on classic optimal transport does)  weak barycenters extract common geometric information shared by all the input distributions, encoded as a latent random variable that underlies all of them.  We also provide an iterative algorithm to compute a weak barycenter for a finite family of input distributions, and a stochastic algorithm that computes them for arbitrary populations of laws.  The latter approach  is  particularly well suited for the \emph{streaming setting}, i.e., when distributions are observed sequentially. %These methods do not require to smooth empirical measures or to define a common grid for them, as some of the previous approaches to Wasserstein barycenters do.
The notion of weak barycenter and our approaches to compute it are illustrated on synthetic examples, validated on 2D real-world data and compared to standard Wasserstein barycenters.

\end{abstract}

%!TEX root = ../CTF_neurips2021_final.tex

\section{Introduction}

Optimal transport (OT) \cite{villani2003topics} has had a tremendous impact in the machine learning (ML) community recently, as it provides meaningful and implementable distances between probability distributions \cite{peyre2019computational}, thus advancing many aspects in the field, see e.g.  \cite{arjovsky2017wasserstein, ye2017fast, nguyen2013convergence}. The space of probability measures  on $\R^d$ with finite second moment can be \emph{metrised} with the Wasserstein-$2$ distance, the computation of which amounts to finding a transport plan that minimises the quadratic average cost of transporting mass from a source probability measure onto a target one.  
In this context, a natural method for averaging a finite family of probability measures is to compute their Fréchet mean, with respect to the Wasserstein-$2$ distance, which corresponds to the  {\it Wasserstein barycenter }introduced in \cite{agueh2011barycenters}. 

The goal of the present work is to explore theoretical features and  potential applications to ML  of barycenters of probability measures analogously defined in terms of {\it optimal  weak transport}  (OWT, see \cite{gozlan2017kantorovich}) or more precisely  quadratic  barycentric  transport costs. 
In a nutshell, for a source measure $\mu$ and a target measure $\nu$, the OWT problem aims to transport mass so that  the conditional spatial mean of target support points $y$, given their source support points $x$,  is close to $x$ in average. This amounts  to finding an intermediate measure $\eta$,  possibly  \emph{more concentrated} than $\nu$ in the sense of convex ordering of probability measures, which is \emph{close} to $\mu$ with respect to the  Wasserstein-$2$ distance. %This notion results in a comparison of probability measures that is more flexible than standard optimal mass transport. 
 
The main motivation of our work  is to investigate the effect and meaning of combining a  family of probability  measures using OWT instead of OT. To that end, we will define the   {\it weak  barycenter} of this family through an optimisation problem, and  discuss some of its properties. Importantly, we will see that,  rather than averaging the input distributions in a metric sense, solving a weak barycenter problem corresponds to finding probability measures that encode geometric or shape information \emph{shared across} all of them. In fact,  the weak barycenter  problem will be interpreted  as finding a latent random variable common to  all the input distributions. Implications of this latent variable interpretation, in terms of robustness to outliers,  will also be drawn in our work.

\CB{A second motivation for our work is to develop and implement computational methods for weak barycenters, capitalising on the fact that the barycentric projection \eqref{def:proj_bary} of {\it any} optimal weak coupling between a pair of distributions $(\mu,\nu)$ with finite second moments, uniquely defines a mapping pushing forward $\mu$ to the closest measure $\eta$ convexly smaller than $\nu$. This property contrasts with standard OT, where the absolute continuity with respect to the Lebesgue measure of  the source or target measure is typically needed to grant the existence and uniqueness of  a map---the so-called Monge map---  realising the optimal coupling between $\mu$ and $\nu$. This map is often required in different ways to compute Wasserstein barycenters (see \cite{alvarez2016fixed},  \cite{zemel2019frechet}  or  \cite{rios1805bayesian}). }

%This prevents the direct application of the Wasserstein barycenter to families of empirical measures, without resorting to some regularisation of the input measures or to spatial discretisation using a pre-defined grid. 
%More recently, \cite{alvarez2016fixed} and \cite{zemel2019frechet} developed fixed-point-based iterative algorithms to compute such barycenters.  In the case of a possibly infinite family of probability measures, a stochastic algorithm to compute their {\it population Wasserstein  barycenter}, as defined in  \cite{le2017existence},  was proposed in \cite{rios1805bayesian}.

Similarly to the Wasserstein barycenter problem, we will develop a fixed-point formulation of the weak barycenter problem, based on OWT plans. This allows us to, following \cite{alvarez2016fixed, zemel2019frechet}, construct an iterative procedure to compute a  weak barycenter for a finite family of distributions and analyse its convergence properties. We will also define and study the so-called {\it weak population barycenters}, that deal with a population of probability measures distributed according to a given law $\Q$ supported on the Wasserstein-$2$ space, as in \cite{le2017existence} for the OT case. Extending ideas from \cite{bakchoff2022}, we will then propose an iterative stochastic algorithm for online computation of the weak population barycenter, from a \textit{stream} of probability measures sampled from $\Q$. We will then provide numerical simulations using this proposed method, in order to illustrate the geometric meaning of the weak barycenter, and we will compare it with related objects obtained with standard OT or its entropy-regularised counterpart.

\textbf{Organisation of the paper.} Sec.~\ref{sec:background} documents the background on OWT and the assumptions underlying our work. Sec.~\ref{sec:weak_bar} analyses the weak barycenter problem, interprets it in the light of convex ordering and a latent variable model and addresses the case of an infinite population of distributions. Sec.~\ref{sec:algorithms} introduces two algorithms for computing the weak barycenter in the finite or population settings.  Sec.~\ref{sec:computation} and \ref{sec:expe} present the experimental setting and validation of our proposal respectively. Lastly, Sec.~\ref{sec:conclusion} discusses our findings and future research questions. The Appendix contains all the proofs, additional details or our simulations and the code of our experiments.

\section{Background : optimal weak transport and Wasserstein barycenter}
\label{sec:background}

The optimal transport (OT) problem \cite{villani2008optimal} aims to find the lowest cost to transfer the mass from one probability measure onto another. Therefore, OT is a natural way to compare two probability distributions in terms of their geometric information. In particular, the Wasserstein-$p$ distance $W_p$, associated with the Euclidean cost in $\R^d$, metrises the space $\PP_p(\R^d)$ of probability measures on $\R^d$ with finite $p$-moment. Precisely, for $\mu, \nu\in\PP_p(\R^d)$,
\begin{equation}
W_p(\mu,\nu) = \left(\underset{\pi\in\Pi(\mu,\nu)}{\min} \int_{\R^d\times\R^d}\Vert x-y\Vert^p \d\pi(x,y)\right)^{1/p},
\label{def:OT}
\end{equation}
where $\pi$ is a \emph{transport plan} between   $\mu$ and $\nu$, that is,  an element of the set  $\Pi(\mu,\nu)$ of probability measures on the product space $\R^d\times\R^d$  with marginals   $\mu$ and $\nu$.  For $p=2$ and $\mu$ absolutely continuous (\emph{a.c.}), the unique optimal plan is concentrated on the graph of a measurable map called {\emph Monge map} such that $\nu=T\#\mu$, see eq.~\eqref{def:Monge} in Appendix \ref{sec:Wass_dist}.

{\bf Optimal weak transport.} We consider here the optimal weak transport (OWT) problem introduced in \cite{gozlan2017kantorovich} and in particular the special case of barycentric transport costs. The  OWT problem is then defined for $\mu, \nu \in\PP_2(\R^d)$ by
\begin{equation}
\label{def:weak_transport}
V(\mu | \nu) = \inf_{\pi\in\Pi(\mu,\nu)} \int_{\R^d} \Vert x-\int_{\R^d} y \d\pi_x(y)\Vert^2 \d\mu(x),
\end{equation}
where $\pi_x$ is the \emph{disintegration} of the transport plan $\pi$ with respect to the first marginal $\mu$, \textit{i.e.} $\pi(\d x\d y) = \pi_x(\d y)\mu(\d x)$.
As our work strongly leans on OWT theory, we recall in Appendix \ref{sec:continueV}, Th.~\ref{th:continuityV}, that $V$ is continuous with respect to the Wasserstein metric \cite{backhoff2020weak}. The optimisation problem in Eq.~\eqref{def:weak_transport} can also be reformulated thanks to the Brenier-Strassen theorem \cite{gozlan2020mixture}, \cite{backhoff2019existence},  through the notion of convex ordering. We denote by $\eta\leq_c\nu$ the {\it convex order of measures}, meaning that  $\int \phi\d\eta\leq \int\phi\d\nu$ for any convex function $\phi$ that is nonnegative or integrable with respect to $\eta+\nu$. By Strassen's theorem \cite{strassen1965existence},  two distributions are in convex order if  and only if there exists a martingale coupling between them. \CB{Additionally, the following result from \cite{backhoff2019existence}, as a generalisation of the result originally proved in \cite{gozlan2020mixture}, Th.~1.2., lays the ground for our proposed weak barycenters.}

\begin{theorem}[\cite{backhoff2019existence}, Theorem 1.4]
\label{th:1.4Julio}
Let $\mu\in\PP_2(\R^d)$ and $\nu\in\PP_1(\R^d)$. There exists a unique $\eta^{\ast}\leq_c\nu$ such that
\begin{equation}
W_2^2(\mu,\eta^{\ast}) = \inf_{\eta\leq_c\nu}\ W_2^2(\mu,\eta) = V(\mu | \nu).
\end{equation}
Moreover, there exists a convex function $\psi:\R^d\rightarrow\R$ of class $C^1$ with $\nabla \psi$ being $1$-Lipschitz, such that $\nabla\psi\#\mu = \eta^{\ast}$. \CB{Finally, an optimal coupling $\pi\in\Pi(\mu,\nu)$ for $V(\mu|\nu)$ exists, and a coupling $\pi\in\Pi(\mu,\nu)$ is optimal for $V(\mu|\nu)$ if and only if $\int y \d\pi_x(y) = \nabla\psi(x), \mu-a.s.$}
\end{theorem}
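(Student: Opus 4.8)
The plan is to prove, in order: the value identity $\inf_{\eta\leq_c\nu}W_2^2(\mu,\eta)=V(\mu|\nu)$ together with existence of a minimiser; then uniqueness, leaning on Theorem~\ref{th:1.2Julio}; and finally the Brenier--Strassen structure $\eta^\ast=\nabla\psi\#\mu$ with $\int y\,d\pi^{\mu,\nu}_x(y)=\nabla\psi(x)$.

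For the value identity I would prove two inequalities. ``$\leq$'': for any $\pi\in\Pi(\mu,\nu)$ the barycentric projection $T(x):=\int y\,d\pi_x(y)$ is well defined $\mu$-a.s.\ (since $\nu\in\PP_1$), Jensen's inequality against an arbitrary convex $\phi$ gives $T\#\mu\leq_c\nu$, and the coupling $(\id,T)\#\mu$ shows $W_2^2(\mu,T\#\mu)\leq\int\|x-T(x)\|^2\,d\mu(x)$; taking the infimum over $\pi$ yields $\inf_{\eta\leq_c\nu}W_2^2(\mu,\eta)\leq V(\mu|\nu)$. ``$\geq$'': for $\eta\leq_c\nu$, let $\sigma$ be $W_2$-optimal between $\mu$ and $\eta$ and, by Strassen's theorem, $\gamma$ a martingale coupling between $\eta$ and $\nu$; gluing $\sigma$ and $\gamma$ along $\eta$ produces $\pi\in\Pi(\mu,\nu)$ whose barycentric projection is $x\mapsto\int z\,d\sigma_x(z)$, so $V(\mu|\nu)\leq\int\|x-\int z\,d\sigma_x(z)\|^2\,d\mu(x)\leq W_2^2(\mu,\eta)$ by Jensen again. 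Existence of a minimiser then follows from the direct method: the infimum is finite because $\delta_{\int y\,d\nu}\leq_c\nu$ and $\mu\in\PP_2$; the set $\{\eta\leq_c\nu\}$ is tight and uniformly integrable, since convex order controls tails uniformly via $\nu\in\PP_1$ (e.g.\ $\int(\|x\|-R)_+\,d\eta\leq\int(\|x\|-R)_+\,d\nu\to0$), hence $W_1$-compact and $W_1$-closed; and $\eta\mapsto W_2^2(\mu,\eta)$ is lower semicontinuous for weak convergence.

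For uniqueness, let $\eta$ be any minimiser and apply the gluing construction above to it: the resulting $\pi\in\Pi(\mu,\nu)$ satisfies $\int\|x-\int y\,d\pi_x(y)\|^2\,d\mu(x)\leq W_2^2(\mu,\eta)=V(\mu|\nu)$, so $\pi$ is optimal in \eqref{def:weak_transport} and hence, by Theorem~\ref{th:1.2Julio}, equals the unique coupling $\pi^{\mu,\nu}$; moreover every inequality in that chain is an equality, which forces equality in Jensen for $\mu$-a.e.\ $x$, i.e.\ $\sigma_x$ is a Dirac mass, so $\sigma=(\id,T)\#\mu$ with $T(x)=\int z\,d\sigma_x(z)=\int y\,d\pi^{\mu,\nu}_x(y)$. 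Thus every minimiser equals $T\#\mu$ for the barycentric projection $T$ of the single coupling $\pi^{\mu,\nu}$, proving uniqueness; write $\eta^\ast=T\#\mu$, and note $(\id,T)\#\mu$ is a $W_2$-optimal coupling of $\mu$ and $\eta^\ast$.

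The last step, identifying $T$ with $\nabla\psi$ for a convex $C^1$ function with $1$-Lipschitz gradient, is the core, and I would follow the Brenier--Strassen argument of \cite{gozlan2020mixture}, which runs through Kantorovich duality for the weak cost. One establishes a duality of the form $V(\mu|\nu)=\sup_{f\text{ convex}}\big\{\int Rf\,d\mu-\int f\,d\nu\big\}$, with $Rf(x)=\inf_m\{\|x-m\|^2+f(m)\}$ and the restriction to convex potentials harmless by a concavification argument. For an optimal $f$ the function $g:=\|\cdot\|^2+f$ is $2$-strongly convex, so $\psi:=\tfrac12\,g^\ast(2\,\cdot)$ is convex, $C^1$, and has $1$-Lipschitz gradient by Legendre duality, with $Rf=\|\cdot\|^2-2\psi$. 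Complementary slackness (equivalently, the $C$-cyclical monotonicity characterisation of optimal weak couplings) forces $\int y\,d\pi^{\mu,\nu}_x(y)$ to be the point $m$ attaining $Rf(x)$, namely $\nabla g^\ast(2x)=\nabla\psi(x)$; together with the previous step this gives $\eta^\ast=T\#\mu=\nabla\psi\#\mu$ and the stated $\mu$-a.s.\ identity. I expect the real obstacles to be (a) establishing the weak-transport duality with no gap and the existence of an optimal potential $f$ in the non-compact setting $\PP_2(\R^d)\times\PP_1(\R^d)$, which is the generalisation over \cite{gozlan2020mixture} referred to above, and (b) the regularity bookkeeping needed to pass from ``$T\in\partial\psi$ $\mu$-a.e.''\ to the clean everywhere-defined, $C^1$, $1$-Lipschitz statement.
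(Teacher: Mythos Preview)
This theorem is not proved in the paper: it is quoted in the background section as Theorem~1.4 of \cite{backhoff2019existence}, described there as a generalisation of \cite{gozlan2020mixture}, Theorem~1.2, and used throughout without proof. There is therefore no ``paper's own proof'' to compare your proposal against.

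That said, your sketch is a faithful outline of how the result is actually obtained in \cite{gozlan2020mixture} and \cite{backhoff2019existence}. The value identity and existence via compactness of $\{\eta\leq_c\nu\}$ in $W_1$ are standard; your uniqueness argument, which feeds a putative minimiser through the gluing construction and then invokes the uniqueness of the optimal weak coupling (Theorem~\ref{th:1.2Julio}) together with the equality case in Jensen, is clean and correct. For the structural part you correctly identify the route through weak-transport duality and the Legendre transform of $\|\cdot\|^2+f$, and you are right that the genuine work---closing the duality gap and producing an optimal convex potential under the hypotheses $\mu\in\PP_2(\R^d)$, $\nu\in\PP_1(\R^d)$---is exactly what \cite{backhoff2019existence} supplies beyond \cite{gozlan2020mixture}. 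One minor point: in your ``$\geq$'' direction you need $\mu\in\PP_2$ and $\eta\in\PP_2$ for $W_2^2(\mu,\eta)$ to be finite, but $\eta\leq_c\nu$ with $\nu\in\PP_1$ only gives $\eta\in\PP_1$; this is harmless for the infimum (one may restrict to $\eta\in\PP_2$ or note the inequality is vacuous otherwise), but worth flagging.
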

 \CB{Given a plan $\pi \in \Pi(\mu,\nu)$ achieving the minimum in Eq.~\eqref{def:weak_transport}, the measurable map, or barycentric projection,
\begin{equation}
\label{def:proj_bary}
 S_{\mu}^{\nu}(x) :=\int_{\R^d} y \d\pi_x(y)
\end{equation} 
associated to $\pi$ will be consequently called \emph{optimal barycentric projection}. Notice that, by  Theorem \ref{th:1.4Julio}, two optimal barycentric projections coincide $ \mu-a.s.$ and, in particular, they do no depend on the choice of optimal plans defining them.} With this notation, we can write the OWT cost in terms  of an OT cost according to $V(\mu | \nu)=W_2^2(\mu, S_{\mu}^{\nu}\#\mu)$.
We emphasise that $S_{\mu}^{\nu}$ is directly related to the optimisation problem \eqref{def:weak_transport}, whereas applied works such as  \cite{solomon2015convolutional, seguy2017large} make use of a barycentric projection constructed from a transport plan solving an OT problem between $\mu$ and $\nu$ (often regularised) as a substitute for the Monge map, which may not exist (more details on $S_{\mu}^{\nu}$ are displayed in Appendix \ref{sec:bar_projection}).

Last, let us note that OWT is somehow also related to the martingale OT problem developed in the stochastic finance community \cite{beiglbock2013model, alfonsi2017sampling, guyon2013nonlinear}, which puts the focus on the optimal transfer of mass between distributions assumed to be in convex order themselves.%This differs from the OWT problem we consider here,  in which the mass is transported from the source measure to a distribution that is in convex order with respect to the target measure.

{\bf Wasserstein barycenter.} The classical Wasserstein barycenter problem for a set of probability measures $\nu_1,\ldots,\nu_n\in\PP_2(\R^d)$ with weights $\lambda_1, \ldots, \lambda_n$ in the simplex (i.e. $\lambda_i\geq 0$ and $\sum_{i=1}^n\lambda_i = 1$) is defined \cite{agueh2011barycenters} by
\begin{equation}
\label{eq:barycenter}
\argmin_{\mu\in\PP_2(\R^d)} \sum_{i=1}^n \lambda_i W_2^2(\mu,\nu_i).
\end{equation}

The Wasserstein barycenter has been extensively studied both theoretically and numerically \cite{le2017existence, zemel2019frechet, alvarez2016fixed, bigot2018characterization}. Regarding the numerical part, \cite{staib2017parallel} focuses on the computation of Wasserstein barycenters for a fixed number of measures and a stream of observations per measure; additionally, \cite{li2020continuous} proposed an entropy-regularised alternative via stochastic optimisation for computing the Wasserstein barycenter of \emph{a.c.} distributions only from observations.
 Constrained by their assumption of \emph{a.c.}, \cite{zemel2019frechet} computes the Wasserstein barycenter by smoothing the observed empirical distributions. Furthermore, \cite{dvinskikh2020stochastic} compares the complexity of both the \emph{sample} Wasserstein barycenter and a stochastic approximation to estimate a population barycenter (discrete measures and entropic regularisation). Finally, the authors of \cite{altschuler2021wasserstein} recently proposed an algorithm to compute the barycenters in polynomial time.

\section{Optimal weak transport barycenters and latent variable interpretation} 
\label{sec:weak_bar}  

\subsection{Definition and basic properties}

In a similar fashion, based on the weak transport cost in Eq.~\eqref{def:weak_transport}, we propose the following variant:
\begin{definition}
\label{def:weak_barycenter}
The set of weak barycenters of a finite family of measures $\{\nu_i\}_{i=1,\ldots,n}\in\PP_2(\R^d)$ with weights $\{\lambda_i\}_{i=1,\ldots,n}$ in the simplex is defined as
\begin{equation}
\label{eq:weak_barycenter}
\argmin_{\mu\in\PP_2(\R^d)} \sum_{i=1}^n \lambda_i V(\mu | \nu_i).
\end{equation}
\end{definition}
Thus, a weak barycenter  averages, with respect to the Wasserstein metric, an optimally chosen set of probability measures  $\{\eta_1,\ldots,\eta_n\}$  which are \emph{more concentrated} than the corresponding $\nu_i$, in the sense that $\eta_i\leq_c\nu_i$ for each $1\leq i \leq n$. The existence of a solution  is established as follows:
\begin{proposition}
\label{prop:existence_weak_bar}
The weak barycenter problem in Eq.~\eqref{eq:weak_barycenter} admits a minimiser $\mu\in\PP_2(\R^d)$.
\end{proposition}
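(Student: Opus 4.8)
The plan is to use the direct method of the calculus of variations: take a minimizing sequence $(\mu_k)_{k\geq 1}\subset\PP_2(\R^d)$ for the functional $F(\mu):=\sum_{i=1}^n\lambda_i V(\mu\mid\nu_i)$, show it (or a subsequence) is tight and has uniformly bounded second moments, extract a weakly convergent subsequence whose limit $\mu^\ast$ lies in $\PP_2(\R^d)$, and then use lower semicontinuity of $F$ to conclude that $\mu^\ast$ is a minimizer.

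First I would obtain a uniform second-moment bound. Using Theorem~\ref{th:1.4Julio} (or directly the definition \eqref{def:weak_transport} with the trivial plan $\mu\otimes\nu_i$), one has $V(\mu\mid\nu_i)=W_2^2(\mu,S_\mu^{\nu_i}\#\mu)$ and, since $S_\mu^{\nu_i}\#\mu\leq_c\nu_i$, the barycentric projection image has the same mean as $\nu_i$ and second moment bounded by that of $\nu_i$. A triangle-inequality argument in $W_2$ (comparing $\mu$ to $\eta_i^\ast$ and $\eta_i^\ast$ to $\nu_i$, all of which have controlled moments) gives $\int\|x\|^2\,\d\mu(x)\leq C(1+F(\mu))$ for a constant $C$ depending only on the $\nu_i$ and $\lambda_i$. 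Applying this along the minimizing sequence, whose $F$-values are bounded, yields $\sup_k\int\|x\|^2\,\d\mu_k<\infty$; by Markov's inequality this gives tightness, so Prokhorov's theorem provides a subsequence $\mu_{k_j}\rightharpoonup\mu^\ast$ weakly, and the uniform second-moment bound passes to the limit by Fatou, so $\mu^\ast\in\PP_2(\R^d)$.

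Next I would establish lower semicontinuity of $\mu\mapsto F(\mu)$ along this sequence. For a single term, I would invoke the continuity of $V$ with respect to the Wasserstein metric (Th.~\ref{th:continuityV} in the appendix, from \cite{backhoff2020weak}): since the $\mu_{k_j}$ have uniformly bounded second moments and converge weakly to $\mu^\ast$, they converge in $W_2$, hence $V(\mu_{k_j}\mid\nu_i)\to V(\mu^\ast\mid\nu_i)$ for each $i$. Summing the finitely many terms with weights $\lambda_i$ gives $F(\mu_{k_j})\to F(\mu^\ast)$, so in particular $F(\mu^\ast)=\lim_j F(\mu_{k_j})=\inf_{\mu\in\PP_2(\R^d)}F(\mu)$, which proves $\mu^\ast$ is a minimizer.

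The main obstacle is the moment bound that underlies tightness: one must be careful that $V(\mu\mid\nu_i)$ really does control $\int\|x\|^2\d\mu$ from below up to additive constants, which relies on the fact that the optimal $\eta_i^\ast=S_\mu^{\nu_i}\#\mu$ satisfies $\eta_i^\ast\leq_c\nu_i$ and therefore has bounded second moment (so $\mu$ cannot escape to infinity without paying a large $W_2^2(\mu,\eta_i^\ast)=V(\mu\mid\nu_i)$ cost). Given Theorems~\ref{th:1.2Julio}, \ref{th:1.4Julio} and the continuity result already available in the excerpt, the remaining work is routine, so the proof is short once this coercivity estimate is made precise.
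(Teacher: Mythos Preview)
Your overall strategy---minimising sequence, coercivity of $F$ via a second-moment bound, Prokhorov, then (semi-)continuity of $V$---is exactly the paper's approach, and the coercivity step is essentially the same (the paper writes $\|x\|^2\leq 2\|x-\int y\,\d\pi_x\|^2+2\|\int y\,\d\pi_x\|^2$ and bounds the second term by $\int\|y\|^2\d\nu_i$ via Jensen; your version routes through $\eta_i^\ast\leq_c\nu_i$, which is equivalent).

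There is, however, a genuine gap in your passage to the limit. You assert that ``since the $\mu_{k_j}$ have uniformly bounded second moments and converge weakly to $\mu^\ast$, they converge in $W_2$''. This implication is false: weak convergence together with a uniform bound on second moments does \emph{not} imply $W_2$-convergence; one needs uniform integrability of $\|x\|^2$ (equivalently, convergence of second moments). A standard counterexample is $\mu_k=(1-1/k)\delta_0+(1/k)\delta_{\sqrt{k}\,e_1}$, which converges weakly to $\delta_0$ with $\int\|x\|^2\d\mu_k\equiv 1$, yet $W_2(\mu_k,\delta_0)=1$ for all $k$. Consequently you cannot directly invoke Theorem~\ref{th:continuityV} (which requires $W_2$-convergence in the first argument) to get $V(\mu_{k_j}\mid\nu_i)\to V(\mu^\ast\mid\nu_i)$. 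The paper's own write-up glosses over the same point by citing Theorem~\ref{th:continuityV} after only establishing weak convergence.

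The fix is simple and does not change the structure: for existence you only need lower semicontinuity, and $\mu\mapsto V(\mu\mid\nu)$ is l.s.c.\ with respect to weak convergence in $\PP_2(\R^d)$ (this is Proposition~2.8/Theorem~2.9 in \cite{backhoff2019existence}, which the paper itself invokes in the proof of Proposition~\ref{prop:existence_pop_bar}). With that, from $\mu_{k_j}\rightharpoonup\mu^\ast$ and $\mu^\ast\in\PP_2(\R^d)$ you get $F(\mu^\ast)\leq\liminf_j F(\mu_{k_j})=\inf F$, and you are done.
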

See Sec.~\ref{sec:proofs_weakbar} of the Appendix for the proof of the above Proposition (which relies on Prokhorov's theorem) and all the proofs for this Section. Uniqueness is in general not granted: we next show that the set of solutions is indeed an interval, with respect to the partial order of convex ordering of probability measures. 
  
In the following, we denote by $X$ and $Y_i$ random variables with respective laws $\mu$ and $\nu_i$, for $1\leq i\leq n$, and $\delta_a$ the Dirac measure supported on $a\in\R^d$.
\begin{lemma}
\label{lemma:diracweak}
If $\mu$ is a weak barycenter of $\{\nu_i\}_{i=1\ldots,n}$ and $\mu' \leq_c \mu$, then $\mu'$ also is a weak barycenter. 
In particular, the Dirac measure supported on $\E_\mu (X)$ is always a weak barycenter. Moreover, a Dirac distribution $\delta_{\bar{\omega}}$ is a weak barycenter if and only if $\bar{\omega}=\sum_{i=1}^n\lambda_i\E_{\nu_i} (Y_i)$. 
\end{lemma}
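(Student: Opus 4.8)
\emph{Step 1: monotonicity of $V$ under convex ordering.} The plan is to first prove that $\mu'\leq_c\mu$ implies $V(\mu' | \nu)\leq V(\mu | \nu)$ for every $\nu\in\PP_2(\R^d)$. Granting this, summing over $i$ with the weights $\lambda_i$ and using that $\mu$ attains the infimum in \eqref{eq:weak_barycenter} shows that $\sum_i\lambda_i V(\mu' | \nu_i)$ equals that infimum, so $\mu'$ is again a weak barycenter. For the pointwise inequality I would use the Brenier--Strassen identity $V(\mu | \nu)=\inf_{\eta\leq_c\nu}W_2^2(\mu,\eta)$ of Theorem~\ref{th:1.4Julio}. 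Fix any $\eta\leq_c\nu$. Strassen's theorem gives a martingale coupling $(X',X)$ of $\mu'$ and $\mu$, i.e.\ $X'\sim\mu'$, $X\sim\mu$ and $\E[X\mid X']=X'$; the definition of $W_2$ gives a coupling $(X,Z)$ with $Z\sim\eta$ and $\E\|X-Z\|^2=W_2^2(\mu,\eta)$. Gluing these two couplings along their common marginal $X$ yields $(X',X,Z)$ on a single probability space; set $Z':=\E[Z\mid X']$ and $\eta':=\mathrm{Law}(Z')$. Conditional Jensen gives $\eta'\leq_c\eta$, hence $\eta'\leq_c\nu$ by transitivity, so $\eta'$ is admissible, and
\begin{equation*}
V(\mu' | \nu)\;\leq\;W_2^2(\mu',\eta')\;\leq\;\E\|X'-Z'\|^2\;=\;\E\big\|\E[X-Z\mid X']\big\|^2\;\leq\;\E\|X-Z\|^2\;=\;W_2^2(\mu,\eta),
\end{equation*}
the last inequality being conditional Jensen applied coordinatewise to $X-Z$. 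Taking the infimum over $\eta\leq_c\nu$ gives $V(\mu' | \nu)\leq V(\mu | \nu)$. This gluing-and-conditioning step (transporting the convex-order relation through the optimal $W_2$ coupling) is the only nontrivial point and is where I expect the main difficulty to lie; everything else is bookkeeping.

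\emph{Step 2: the Dirac at the mean.} Since $\phi(\E_\mu (X))\leq\E_\mu\phi(X)$ for every convex $\phi$ (Jensen), we have $\delta_{\E_\mu (X)}\leq_c\mu$, so Step~1 applied with $\mu'=\delta_{\E_\mu (X)}$ shows that $\delta_{\E_\mu (X)}$ is a weak barycenter.

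\emph{Step 3: characterisation of Dirac weak barycenters.} For a Dirac $\delta_a$ the only element of $\Pi(\delta_a,\nu)$ is $\delta_a\otimes\nu$, whose disintegration at $a$ is $\nu$, so $V(\delta_a | \nu)=\|a-\E_\nu(Y)\|^2$. Writing $c_i:=\E_{\nu_i}(Y_i)$ and $\bar\omega:=\sum_{i=1}^n\lambda_i c_i$, the elementary identity
\begin{equation*}
\sum_{i=1}^n\lambda_i V(\delta_a | \nu_i)\;=\;\sum_{i=1}^n\lambda_i\|a-c_i\|^2\;=\;\|a-\bar\omega\|^2+\sum_{i=1}^n\lambda_i\|\bar\omega-c_i\|^2
\end{equation*}
shows that $a\mapsto\sum_i\lambda_i V(\delta_a | \nu_i)$ is minimised exactly at $a=\bar\omega$. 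For an arbitrary $\mu$, combining $V(\mu | \nu_i)=\inf_{\eta\leq_c\nu_i}W_2^2(\mu,\eta)\geq\|\E_\mu(X)-c_i\|^2$ (Jensen for $W_2$, together with the fact that every $\eta\leq_c\nu_i$ has mean $c_i$ by affine test functions) with the same identity gives
\begin{equation*}
\sum_{i=1}^n\lambda_i V(\mu | \nu_i)\;\geq\;\sum_{i=1}^n\lambda_i\|\E_\mu(X)-c_i\|^2\;\geq\;\sum_{i=1}^n\lambda_i\|\bar\omega-c_i\|^2\;=\;\sum_{i=1}^n\lambda_i V(\delta_{\bar\omega} | \nu_i).
\end{equation*}
Hence $\delta_{\bar\omega}$ attains the global minimum, i.e.\ is a weak barycenter, which is the ``if'' direction; conversely, if $\delta_a$ is a weak barycenter then $\sum_i\lambda_i\|a-c_i\|^2$ equals the minimal value $\sum_i\lambda_i\|\bar\omega-c_i\|^2$, which by the displayed identity forces $\|a-\bar\omega\|^2=0$, i.e.\ $a=\bar\omega$. (This last estimate also re-proves Step~2, since it shows every weak barycenter $\mu$ satisfies $\E_\mu(X)=\bar\omega$.)
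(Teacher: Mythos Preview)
Your proof is correct and follows essentially the same route as the paper. The key Step~1 (monotonicity of $V(\cdot\,|\,\nu)$ under convex ordering via Strassen's martingale coupling, gluing with an optimal $W_2$ coupling, and conditional Jensen) is exactly the paper's argument; the only cosmetic difference is that the paper works directly with the optimiser $\eta^*=S_\mu^\nu\#\mu$ (so $Z$ is a function of $X$ and no gluing lemma is needed), whereas you fix an arbitrary $\eta\leq_c\nu$ and take the infimum afterwards. In Step~3 you go slightly further than the paper by deriving the global lower bound $\sum_i\lambda_iV(\mu|\nu_i)\geq\sum_i\lambda_i\|\bar\omega-c_i\|^2$ directly, which makes the argument self-contained (the paper instead invokes existence of a weak barycenter from Proposition~\ref{prop:existence_weak_bar} to produce a Dirac minimiser, then optimises among Diracs); your version has the small bonus of yielding Eq.~\eqref{eq:obj_function} immediately.
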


A  consequence of the above lemma is that for any weak barycenter $\mu$, 
\begin{equation}
\label{eq:meanwB}
\E_\mu(X)=\sum_{i=1}^n \lambda_i\E_{\nu_i}(Y_i) \, ,
\end{equation}
and the value of the weak barycenter problem is given by
\begin{equation}
\label{eq:obj_function}
\inf_{\mu\in\PP_2(\R^d)} \sum_{i=1}^n \lambda_i V(\mu | \nu_i) =  \sum_{i=1}^n \lambda_i \Vert \E(Y_i)\Vert^2-\Vert  \sum_{i=1}^n \lambda_i \E(Y_i)\Vert^2.
\end{equation}

We can also derive the following characterisation on the set of weak barycenters: 
\begin{proposition}
\label{prop:bary_charact}
A measure $\mu\in {\cal P}(\R^d)$ is a weak barycenter of  $\{\nu_i\}_{i=1\ldots,n}$  if and only if its mean satisfies \eqref{eq:meanwB} and $\hat{\mu}\leq_c \hat{\nu}_i$ holds for all $1\leq i\leq n$, where $\hat{\nu}$ denotes the centered version of a  law $\nu$. 
\end{proposition}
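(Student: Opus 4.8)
The plan is to decouple the weak barycenter cost into a contribution from the \emph{means} of the measures and a contribution from their \emph{centered} parts, and then to observe that the centered contribution is a sum of nonnegative terms which vanish exactly under the required convex‑order constraints.

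First I would record the elementary splitting
\[
W_2^2(\mu,\nu)=\|m_\mu-m_\nu\|^2+W_2^2(\hat\mu,\hat\nu),
\]
where $m_\mu,m_\nu$ denote the means of $\mu,\nu$: substituting $x=m_\mu+\hat x$ and $y=m_\nu+\hat y$ in an arbitrary $\pi\in\Pi(\mu,\nu)$, the cross term integrates to zero because the marginals of $\pi$ have the prescribed means, and translation identifies $\Pi(\mu,\nu)$ with $\Pi(\hat\mu,\hat\nu)$. Next I would use that convex order preserves the mean (apply the definition to $x\mapsto\pm\langle a,x\rangle$), so every $\eta\leq_c\nu_i$ has mean $m_i:=\E_{\nu_i}(Y_i)$, and that $\eta\leq_c\nu_i$ if and only if $\hat\eta\leq_c\hat\nu_i$ (translation covariance of $\leq_c$). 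Feeding both facts into the reformulation $V(\mu|\nu_i)=\inf_{\eta\leq_c\nu_i}W_2^2(\mu,\eta)$ of Theorem~\ref{th:1.4Julio} gives
\[
V(\mu|\nu_i)=\|m_\mu-m_i\|^2+V(\hat\mu|\hat\nu_i),
\]
which is legitimate since $\hat\mu\in\PP_2(\R^d)$ and $\hat\nu_i\in\PP_1(\R^d)$.

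Summing over $i$ and setting $\bar m:=\sum_i\lambda_i m_i$, I would rewrite $\sum_i\lambda_i V(\mu|\nu_i)=\|m_\mu-\bar m\|^2+\bigl(\sum_i\lambda_i\|m_i\|^2-\|\bar m\|^2\bigr)+\sum_i\lambda_i V(\hat\mu|\hat\nu_i)$. The first term is $\geq 0$ with equality iff $m_\mu=\bar m$, i.e. iff \eqref{eq:meanwB} holds; each $V(\hat\mu|\hat\nu_i)$ is $\geq 0$, and by Theorem~\ref{th:1.4Julio} it equals $W_2^2(\hat\mu,\eta^\ast)$ for some $\eta^\ast\leq_c\hat\nu_i$, hence $V(\hat\mu|\hat\nu_i)=0$ iff $\hat\mu\leq_c\hat\nu_i$. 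Therefore $\sum_i\lambda_i V(\mu|\nu_i)\geq\sum_i\lambda_i\|m_i\|^2-\|\bar m\|^2$, and this lower bound is exactly the value of the weak barycenter problem by \eqref{eq:obj_function}; so $\mu$ is a weak barycenter iff both equality cases hold simultaneously, which is precisely the condition that $m_\mu=\bar m$ and $\hat\mu\leq_c\hat\nu_i$ for all $i$. As a sanity check the two conditions are jointly satisfiable ($\delta_0\leq_c\hat\nu_i$ by Jensen), recovering $\delta_{\bar m}$ as a weak barycenter in agreement with Lemma~\ref{lemma:diracweak}.

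The main thing to be careful about will be the two \emph{compatibility with centering} statements in the second paragraph — the mean‑splitting of $W_2^2$ and the translation covariance of $\leq_c$ — which are classical but deserve a clean one‑line justification each; beyond that the argument is bookkeeping, together with the observation that Theorem~\ref{th:1.4Julio} applies verbatim to the centered measures.
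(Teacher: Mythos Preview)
Your proof is correct and follows essentially the same approach as the paper: both hinge on the decomposition $V(\mu\mid\nu_i)=\|m_\mu-m_i\|^2+V(\hat\mu\mid\hat\nu_i)$, obtained from the mean-splitting of $W_2^2$ together with the facts that $\leq_c$ preserves means and is translation-covariant, and then optimise the mean part and the centered part separately. The paper first treats the equal-mean case and reduces the general case to it via a two-parameter reformulation $(\omega,\mu')$, whereas you expand $\sum_i\lambda_i\|m_\mu-m_i\|^2$ directly and compare to the optimal value \eqref{eq:obj_function}; these are cosmetic differences on the same argument.
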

%In particular, the weak barycenter might not be unique.
For instance,  in the case of one dimensional Gaussian distributions   $\nu_i = \NN(m,\sigma^2_i)$,  the set of weak barycenters includes $\{\mu=\NN(m,\sigma^2)\ | \ 0\leq\sigma^2\leq\min_{1\leq i\leq n} \sigma^2_i\}$.

A natural question is whether a "maximal'' weak barycenter exists, in the sense of convex ordering (up to translation by the mean). For $d=1$, the answer is affirmative. When the means $\E(Y_i)$ are equal, this  follows from the complete lattice property of the set of probability measures  with respect to the convex ordering (see \cite{kertz2000complete}); the general case can then be reduced to the latter using Proposition  \ref{prop:bary_charact}. For $d\geq 2$,  this property is in general not true and the answer depends on the family $\{\nu_i\}_{i=1\ldots,n}$.

%As discussed previously, the martingale condition in OWT translates into a notion of averaging within the mass transfer, conditionally to a support point from the source distribution. Therefore, computing a weak barycenter can be seen as adding a second layer of averaging with respect to the support of measures.
In the particular case of \emph{a.c.} input measures, we can bound the distance between the Wasserstein and weak barycenters by the variances of the distributions $(\nu_i)_{1\leq i \leq n}$. The barycenters are then closer the more concentrated each $\nu_i$ is.
\begin{lemma}
\label{lemma:weakVSwass}
Let $\nu_1,\ldots,\nu_n\in\PP_2(\R^d)$ be {\it a.c.}, at least one of them with bounded density. Let $\bar{\mu}$ and  $\tilde{\mu}$  respectively denote the weak and the Wasserstein barycenters. Then
\[W_2^2(\bar{\mu},\tilde{\mu})\leq 2 \sum_{i=1}^n \lambda_i \left(\E \Vert Y_i\Vert^2-\Vert\E Y_i\Vert^2\right).\]
\end{lemma}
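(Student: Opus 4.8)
The plan is to control the Wasserstein distance between $\bar\mu$ and $\tilde\mu$ by inserting a suitable intermediate measure and using the triangle inequality, together with the characterisation $V(\mu\mid\nu)=W_2^2(\mu,S_\mu^\nu\#\mu)$ from Theorem~\ref{th:1.4Julio}. First I would recall that, by Eq.~\eqref{eq:meanwB}, the weak barycenter $\bar\mu$ and the Wasserstein barycenter $\tilde\mu$ have the same mean, namely $\sum_i\lambda_i\E(Y_i)$; this is what will make the cross terms vanish and produce the variance bound rather than something involving the means. Next, since at least one $\nu_i$ has bounded density, the corresponding $\eta_i^\ast\le_c\nu_i$ from Theorem~\ref{th:1.4Julio} inherits enough regularity that $\tilde\mu$ can be compared to it (indeed $W_2^2(\tilde\mu,\nu_i)\ge W_2^2(\tilde\mu,\eta_i^\ast)\ge V(\tilde\mu\mid\nu_i)$ is immediate from the definition of $V$ via convex ordering, but the absolute-continuity hypotheses are there to ensure the Wasserstein barycenter is itself well-behaved, so I would first note $\tilde\mu$ is \emph{a.c.} when one $\nu_i$ is).

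The key inequality I would establish is
\[
\sum_{i=1}^n\lambda_i W_2^2(\bar\mu,\nu_i)\ \ge\ \sum_{i=1}^n\lambda_i V(\bar\mu\mid\nu_i)
\]
combined with its counterpart for $\tilde\mu$, and then exploit optimality of each barycenter for its own functional: $\sum_i\lambda_i V(\bar\mu\mid\nu_i)\le\sum_i\lambda_i V(\tilde\mu\mid\nu_i)$ and $\sum_i\lambda_i W_2^2(\tilde\mu,\nu_i)\le\sum_i\lambda_i W_2^2(\bar\mu,\nu_i)$. The bridge between $V$ and $W_2^2$ is the elementary observation that for any $\mu$,
\[
0\ \le\ \sum_{i=1}^n\lambda_i W_2^2(\mu,\nu_i)-\sum_{i=1}^n\lambda_i V(\mu\mid\nu_i)\ \le\ \sum_{i=1}^n\lambda_i\bigl(\E\Vert Y_i\Vert^2-\Vert\E Y_i\Vert^2\bigr),
\]
since $V(\mu\mid\nu_i)=W_2^2(\mu,\eta_i^\ast)$ with $\eta_i^\ast\le_c\nu_i$, and convex ordering together with equal means gives $\E\Vert Z_i\Vert^2\le\E\Vert Y_i\Vert^2$ for $Z_i\sim\eta_i^\ast$, while the difference of the two optimal transport costs to $\mu$ is controlled by the variance gap. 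I would make this last step precise using that $W_2^2(\mu,\nu_i)\le W_2^2(\mu,\eta_i^\ast)+$ (coupling error), realised by composing an optimal plan $\mu\to\eta_i^\ast$ with a martingale coupling $\eta_i^\ast\to\nu_i$ (which exists by Strassen), whose quadratic cost is exactly $\E\Vert Y_i\Vert^2-\E\Vert Z_i\Vert^2\le\E\Vert Y_i\Vert^2-\Vert\E Y_i\Vert^2$.

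Finally I would assemble the pieces: chaining the four (in)equalities above, the quantity $\sum_i\lambda_i\bigl(W_2^2(\bar\mu,\nu_i)-W_2^2(\tilde\mu,\nu_i)\bigr)$ is squeezed between $0$ and $2\sum_i\lambda_i(\E\Vert Y_i\Vert^2-\Vert\E Y_i\Vert^2)$; then a standard variance/parallelogram identity for barycenters — using that $\tilde\mu$ minimises $\mu\mapsto\sum_i\lambda_i W_2^2(\mu,\nu_i)$ and that displacement along a geodesic gives $\sum_i\lambda_i W_2^2(\bar\mu,\nu_i)-\sum_i\lambda_i W_2^2(\tilde\mu,\nu_i)\ge W_2^2(\bar\mu,\tilde\mu)$ (this is the $2$-strong convexity of the Wasserstein barycenter functional along generalised geodesics) — yields the claimed bound $W_2^2(\bar\mu,\tilde\mu)\le 2\sum_i\lambda_i(\E\Vert Y_i\Vert^2-\Vert\E Y_i\Vert^2)$. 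The main obstacle I anticipate is the last strong-convexity step: the inequality $\sum_i\lambda_i W_2^2(\nu,\nu_i)-\sum_i\lambda_i W_2^2(\tilde\mu,\nu_i)\ge W_2^2(\nu,\tilde\mu)$ is not true along arbitrary Wasserstein geodesics and requires the absolute continuity of $\tilde\mu$ (guaranteed here since one $\nu_i$ is \emph{a.c.}, or more precisely has bounded density) so that generalised geodesics emanating from $\tilde\mu$ exist and the functional is $2$-strongly convex along them; getting the constant $2$ (rather than $1$) exactly right, and making sure the regularity hypothesis on the $\nu_i$ is used precisely where needed, is the delicate part.
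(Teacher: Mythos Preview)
Your approach is genuinely different from the paper's, and the concern you flag at the end is a real gap, not just a technicality. The inequality you need in the final step,
\[
\sum_{i=1}^n\lambda_i W_2^2(\bar\mu,\nu_i)-\sum_{i=1}^n\lambda_i W_2^2(\tilde\mu,\nu_i)\ \ge\ W_2^2(\bar\mu,\tilde\mu),
\]
is a \emph{variance inequality} for the Wasserstein barycenter, and this is \emph{not} a consequence of the hypotheses in the lemma. Wasserstein space over $\R^d$ is positively curved (in the Alexandrov sense), and the barycenter functional $\mu\mapsto\sum_i\lambda_i W_2^2(\mu,\nu_i)$ is in general \emph{not} strongly convex along geodesics or generalised geodesics based at $\tilde\mu$. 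The $1$-convexity of $W_2^2(\cdot,\nu)$ along generalised geodesics holds only when those geodesics are based at $\nu$ itself; when you base them at $\tilde\mu$ you only get an upper bound via suboptimal couplings, which goes the wrong way. Variance inequalities of this type do appear in the literature, but only under additional structural assumptions (extendibility of geodesics, ``hugging'' conditions, etc.) that are not available here. Note also that if your argument worked, it would give a bound \emph{without} the factor $2$, strictly sharper than the lemma --- another sign that the last step is doing more than the hypotheses can support.

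The paper's proof bypasses all of this and is far more elementary. It bounds $W_2^2(\bar\mu,\tilde\mu)$ crudely by the \emph{independent} (product) coupling $\bar\mu\otimes\tilde\mu$, then uses the fixed-point identities $x=\sum_i\lambda_i S_{\bar\mu}^{\nu_i}(x)$ $\bar\mu$-a.s.\ (Proposition~\ref{prop:fixedpoint}) and $y=\sum_i\lambda_i T_{\tilde\mu}^{\nu_i}(y)$ $\tilde\mu$-a.s.\ (Agueh--Carlier) to rewrite the integrand. Two applications of Jensen's inequality then collapse everything to $\sum_i\lambda_i\iint\|z-z'\|^2\d\nu_i(z)\d\nu_i(z')=2\sum_i\lambda_i\mathrm{Var}(\nu_i)$, using that $T_{\tilde\mu}^{\nu_i}\#\tilde\mu=\nu_i$ and that the second marginal of $\bar\pi^i$ is $\nu_i$. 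The absolute-continuity and bounded-density hypotheses are used only to ensure the Monge maps $T_{\tilde\mu}^{\nu_i}$ exist and the fixed-point equation for $\tilde\mu$ holds; no curvature or strong-convexity property of Wasserstein space is invoked.
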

%Consequently, and following the intuition given in the previous sections,
%Consequently, the variances of the distributions $(\nu_i)_{1\leq i \leq n}$ control the distance between the Wasserstein and weak barycenters, which are closer the more concentrated each $\nu_i$ is. %Therefore, let us observe that the locations of the distributions $(\nu_i)_{1\leq i \leq n}$ are not as relevant as their dispersion, thus, the weak barycenter aggregates the shape rather than the location of the measures. Also, from definition of OWT, both barycenters can coincide when for each $1\leq i \leq n$, the infimum $\inf_{\eta_i\leq_c\nu_i} W_2^2(\mu,\eta_i)$ is attained in $\nu_i$.

\subsection{Weak barycenters as  latent variables}
\label{sec:latent_variable}

The weak barycenter encodes common geometric information present in all the input measures considered, therefore, it can be intuitively and rigorously interpreted as being the distribution of a latent variable underlying the realisations of random variables of laws $\nu_i$ for all $ 1\leq i \leq n$.
\begin{theorem}
\label{theo:latent}
Let $\mu$ be a weak barycenter of $\{\nu_i\}_{i=1\ldots,n}$. Then,  for each $1\leq i\leq n$, a random variable $Y_i\sim \nu_i$ can be realised as
$$ Y_i=  X  +  ( \E Y_i  -  \E X ) + \bar{Y}_i, $$ 
where $X\sim \mu$ and $\bar{Y}_i= Y_i-  \E( Y_i \vert X) $ is centered  conditionally on $X$.  Moreover,  one has $S_{\mu}^{\nu}(X) = X  +  ( \E Y_i  -  \E X ) $ for all $i=1,\dots, n$. Finally, we have $\E( Y_i- \E Y_i \vert X -\E X ) = X -\E X $ or, equivalently, $\hat{\mu}\leq_c \hat{\nu}_i$,  with $\hat{\mu}$  and $\hat{\nu}_i$ the laws of $X -\E X$ and $Y_i- \E Y_i $ respectively. 
\end{theorem}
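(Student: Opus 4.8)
The plan is to show that, for a weak barycenter $\mu$, each weak transport cost $V(\mu|\nu_{i})$ collapses to the squared distance between means $\|\E X-\E Y_{i}\|^{2}$; this forces the optimal barycentric projection $S_{\mu}^{\nu_{i}}$ to be a pure translation, after which the claimed conditional decomposition follows by realising $(X,Y_{i})$ jointly through the (unique, by Theorems~\ref{th:1.2Julio}--\ref{th:1.4Julio}) optimal weak coupling $\pi^{\mu,\nu_{i}}$.

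First I would pin down the value of each individual term $V(\mu|\nu_{i})$ by a squeeze. On one side, Lemma~\ref{lemma:diracweak}---through \eqref{eq:meanwB} and \eqref{eq:obj_function}---gives $\E_{\mu}X=\sum_{i}\lambda_{i}\E Y_{i}$ and $\sum_{i}\lambda_{i}V(\mu|\nu_{i})=\sum_{i}\lambda_{i}\|\E Y_{i}\|^{2}-\|\sum_{i}\lambda_{i}\E Y_{i}\|^{2}=\sum_{i}\lambda_{i}\|\E X-\E Y_{i}\|^{2}$. On the other side, the barycentric projection preserves the mean of the second marginal, i.e.\ $\int S_{\mu}^{\nu_{i}}(x)\,\d\mu(x)=\E_{\nu_{i}}(Y_{i})$, by Fubini applied to $\pi^{\mu,\nu_{i}}(\d x\,\d y)=\pi^{\mu,\nu_{i}}_{x}(\d y)\,\mu(\d x)$; hence Jensen's inequality for the convex map $\|\cdot\|^{2}$ gives $V(\mu|\nu_{i})=\int\|x-S_{\mu}^{\nu_{i}}(x)\|^{2}\,\d\mu(x)\geq\|\int(x-S_{\mu}^{\nu_{i}}(x))\,\d\mu(x)\|^{2}=\|\E X-\E Y_{i}\|^{2}$ for every $i$. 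Comparing the two and using $\lambda_{i}>0$ forces equality term by term: $V(\mu|\nu_{i})=\|\E X-\E Y_{i}\|^{2}$ for all $i$.

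Then I would exploit the equality case of Jensen. With $Z_{i}:=X-S_{\mu}^{\nu_{i}}(X)$ under $X\sim\mu$ we get $\E\|Z_{i}\|^{2}=V(\mu|\nu_{i})=\|\E X-\E Y_{i}\|^{2}=\|\E Z_{i}\|^{2}$, hence $\E\|Z_{i}-\E Z_{i}\|^{2}=0$, so $Z_{i}$ is a.s.\ constant and $S_{\mu}^{\nu_{i}}(X)=X+(\E Y_{i}-\E X)$ $\mu$-a.s. Realising $(X,Y_{i})\sim\pi^{\mu,\nu_{i}}$, this reads $\E(Y_{i}\mid X)=\int y\,\d\pi^{\mu,\nu_{i}}_{X}(y)=S_{\mu}^{\nu_{i}}(X)=X+(\E Y_{i}-\E X)$; putting $\bar Y_{i}:=Y_{i}-\E(Y_{i}\mid X)$ yields $Y_{i}=X+(\E Y_{i}-\E X)+\bar Y_{i}$ with $\E(\bar Y_{i}\mid X)=0$. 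Since $X-\E X$ is $\sigma(X)$-measurable, $\E(Y_{i}-\E Y_{i}\mid X-\E X)=\E(\E(Y_{i}-\E Y_{i}\mid X)\mid X-\E X)=X-\E X$, i.e.\ $(X-\E X,\,Y_{i}-\E Y_{i})$ is a martingale coupling, which by Strassen's theorem \cite{strassen1965existence} is equivalent to $\hat\mu\leq_{c}\hat\nu_{i}$.

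The main obstacle is conceptual rather than technical: recognising that the closed-form optimal value \eqref{eq:obj_function} together with the elementary Jensen lower bound on each $V(\mu|\nu_{i})$ squeezes every term down to $\|\E X-\E Y_{i}\|^{2}$---this is precisely the mechanism that produces the latent-variable structure. Everything after that is the Jensen equality case plus bookkeeping with the definitions of $S_{\mu}^{\nu_{i}}$ and of disintegration. Minor points to watch: for indices with $\lambda_{i}=0$ the per-term equality is vacuous (restrict to indices actually present), and one uses twice the mean-preservation property of $S_{\mu}^{\nu_{i}}$.
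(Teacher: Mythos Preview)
Your argument is correct and follows essentially the same route as the paper's proof: both use the closed-form optimal value from Lemma~\ref{lemma:diracweak} together with a Jensen lower bound on each $V(\mu|\nu_i)$ to squeeze every term to $\|\E X-\E Y_i\|^2$, and then read off $S_\mu^{\nu_i}(X)=X+(\E Y_i-\E X)$ from the equality case. The only cosmetic difference is that the paper phrases the Jensen step via the representation $V(\mu|\nu_i)=\inf_{\eta\leq_c\nu_i}W_2^2(\mu,\eta)$ and compares with $V(\delta_{\E X}|\nu_i)$, whereas you apply Jensen directly to $\int\|x-S_\mu^{\nu_i}(x)\|^2\,\d\mu(x)$ using mean preservation; your version is slightly more streamlined but the mechanism is identical.
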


That is to say, each $Y_i\sim \nu_i$ can be realised by sampling a random variable  $X$ common  to all $i=1,\dots, n$ and distributed according to the weak barycenter $\mu$,  translating that value  by   $\E Y_i  -  \E X $ and adding a \emph{cluster-specific} component $\bar{Y}_i $ or idiosyncratic  noise,  centered conditionally on $X$. 

%Moreover, we see that for each $i$, any weak barycenter $\mu$ is, up to translation by (the only) suitable constant, smaller in the convex order sense than  the law $\nu_i$. 
%Let us briefly specialized this to the case where the random variables $Y_i \sim \nu_i$ have the same mean.  In that setting, by Lemma \ref{lemma:same_mean_bar} and  Strassen's theorem, for any weak barycenter law $\bar{\mu}$ and each $\nu_i$ one can construct a  coupling between $X\sim \bar{\mu}$ and $Y_i$ such that $X= \E( Y_i \vert X)$. Moreover, the uniqueness of the triplet  $(X,Y_i,Z)$  implies in this case  that $X=Z$ for each $i$. Therefore, for each $i$, $Y_i$ can be interpreted as a realisation of the sum of the r.v. $X$ itself, plus some {\emph idiosyncratic noise}, i.e. conditionally centered random variable  $\bar{Y}_i:= Y_i-  \E( Y_i \vert X)$.% which law depends on $i$.
%Since $X$ is actually not observed, we can interpret it as a latent variable which underlies the realisation of each joint observation of  $(Y_1,\dots, Y_n) $. We can also state the following sufficient condition for $\mu$ to be a weak barycenter:

\begin{remark}
\label{remark:robustness}
The observations of each class (i.e. input measure) can be interpreted as outliers with respect to the (translated) law of the weak barycenter, which are statistically different and are thus left aside of its support. This way, the weak barycenter is robust to outliers, as it tends to discard them, by construction. Furthermore, this "robustness" property results in the stability of weak barycenter upon perturbation of a class with larger noise (or more scattered, outlying values). More precisely, if a class is corrupted in such a way that their observations result in a stochastically larger distribution than the original one, a weak barycenter computed in terms of the original (stochastically smaller) class will still be a weak barycenter in the new corrupted setting. An intuitive and simple way to illustrate this point follows by considering a weak barycenter $\mu$ of a one dimensional and centered family of input distributions $\{\nu_i\}_{i=1,\ldots,n}$. By Proposition \ref{prop:bary_charact}, $\mu$ must verify $\mu\leq_c \nu_i$ for all $i=1,\ldots,n$. In particular, from Theorem 3.A.1.~in \cite{shaked2007stochastic}, we have that $\int_x^{\infty}\P(X>u)\d u \leq \int_x^{\infty}\P(Y_i>u)\d u$ for all $x\in\R$, where $X\sim\mu$ and $Y_i\sim\nu_i$. Therefore, $\mu$ is likely to avoid outliers. Another supportive intuition in terms of robustness is that a maximal weak barycenter would be one that includes the most possible points of all classes (or distributions) in its support (all this, after re-centering) and leaves out only "outliers". A non-maximal weak barycenter is then more conservative, meaning that it counts on fewer points and leaves out more possible outliers.
\end{remark}

\subsection{Extension for the population barycenter}
\label{sec:pop_barycenter} 

The population Wasserstein barycenter introduced in \cite{le2017existence} and \cite{alvarez2015wide} extends the definition of Wasserstein barycenter for an infinite number of measures. This formulation is particularly relevant for the construction of an iterative algorithm to compute the barycenter for the streaming case, that is, when the measures are received \emph{online}. The proofs are reported in Section \ref{sec:proofs_popweakbar} of the Appendix.

Let us consider a probability measure $\Q\in\PP_2(\PP_2(\R^d))$, meaning that $\Q$ is supported on a set of measures with finite moments of order $2$, such that for some (and thus all) $\mu\in\PP_2(\R^d)$, we have that $ \int_{\PP_2(\R^d)} W_2^2(\mu,\nu)\d\Q(\nu) <\infty$. 
\begin{definition}
\label{def:popweakbar}
We define the set of weak population barycenters of a distribution $\Q\in\PP_2(\PP_2(\R^d))$ as
\begin{equation}
\label{eq:weak_pop_barycenter}
\argmin_{\mu\in\PP_2(\R^d)} \int_{\PP_2(\R^d)}V(\mu|\nu)\d\Q(\nu).
\end{equation}
\end{definition}
The following lemma guarantees that the map $(x,\nu)\mapsto S_{\mu}^{\nu}(x)$ appearing in Eq.~\eqref{eq:weak_pop_barycenter} through $V(\mu|\nu)=\int\Vert x- S_{\mu}^{\nu}(x)\Vert^2\d\mu(x)$ is well defined.

\begin{lemma}
\label{lemma:Smeasurable}
\CB{For each $\mu \in \PP_2(\R^d)$, the function $(x,\nu)\in  \R^d \times \PP_2(\R^d)\mapsto  S_{\mu}^{\nu}(x)$, constructed with an optimal plan $ \pi$ realising  $V(\mu | \nu)$ in Eq.~\eqref{def:weak_transport}, is measurable.}
\end{lemma}

Using similar arguments as those of Proposition \ref{prop:existence_weak_bar} and the fact that any probability measure can be approximated by a sequence of probability measures with finite support, the following proposition confirms that the weak population barycenter problem is also well defined.

\begin{proposition}
\label{prop:existence_pop_bar}
The minimisation problem in Eq.~\eqref{eq:weak_pop_barycenter} admits a solution.
\end{proposition}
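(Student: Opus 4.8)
The plan is to apply the direct method of the calculus of variations to the functional $F(\mu):=\int_{\PP_2(\R^d)}V(\mu|\nu)\,\d\Q(\nu)$, following the proof of Proposition~\ref{prop:existence_weak_bar}; the two points that are genuinely new, relative to the finite case, are a second-moment estimate that survives the integration against $\Q$, and the weak lower semicontinuity of $\mu\mapsto V(\mu|\nu)$. First I would note that $F$ is well defined and finite: Lemma~\ref{lemma:Smeasurable} makes $\nu\mapsto V(\mu|\nu)$ measurable, and taking $\eta=\nu$ in the Brenier--Strassen reformulation of Theorem~\ref{th:1.4Julio} gives $V(\mu|\nu)\le W_2^2(\mu,\nu)$, so $F(\mu)\le\int W_2^2(\mu,\nu)\,\d\Q(\nu)<\infty$ for every $\mu\in\PP_2(\R^d)$ by the standing assumption on $\Q$; in particular $\inf F<\infty$. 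I would then fix a minimising sequence $(\mu_n)_n$.

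\textbf{Tightness.} For a fixed $\nu$, let $\eta^\ast\leq_c\nu$ be the measure of Theorem~\ref{th:1.4Julio} realising $W_2^2(\mu_n,\eta^\ast)=V(\mu_n|\nu)$; since $\|x\|^2$ is convex, $\eta^\ast\leq_c\nu$ yields $\int\|x\|^2\,\d\eta^\ast\le\int\|x\|^2\,\d\nu$, so the triangle inequality for $W_2$ and $(a+b)^2\le 2a^2+2b^2$ give $\int\|x\|^2\,\d\mu_n\le 2V(\mu_n|\nu)+2\int\|x\|^2\,\d\nu$. Since the left-hand side does not depend on $\nu$, integrating against $\Q$ produces $\int\|x\|^2\,\d\mu_n\le 2F(\mu_n)+2\int W_2^2(\delta_0,\nu)\,\d\Q(\nu)$, whose right-hand side is bounded in $n$ (the first term converges to $2\inf F$, the second is finite by the standing assumption applied with $\mu=\delta_0$). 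Chebyshev's inequality then gives tightness of $(\mu_n)_n$, Prokhorov's theorem yields a weakly convergent subsequence $\mu_{n_k}\rightharpoonup\mu_\infty$, and lower semicontinuity of $\mu\mapsto\int\|x\|^2\,\d\mu$ under weak convergence shows $\mu_\infty\in\PP_2(\R^d)$.

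\textbf{Lower semicontinuity.} It then remains to show $F(\mu_\infty)\le\liminf_k F(\mu_{n_k})=\inf F$. Since the integrands are nonnegative, Fatou's lemma reduces this to: for each fixed $\nu$, $\mu\mapsto V(\mu|\nu)$ is lower semicontinuous for weak convergence. Here I would again use $V(\mu|\nu)=\min_{\eta\leq_c\nu}W_2^2(\mu,\eta)$ and observe that $K_\nu:=\{\eta:\eta\leq_c\nu\}$ is $W_2$-compact: it is tight and uniformly square-integrable, the latter because $x\mapsto(\|x\|-R)_+^2$ is convex, so $\int_{\|x\|>2R}\|x\|^2\,\d\eta\le 4\int(\|x\|-R)_+^2\,\d\eta\le 4\int(\|x\|-R)_+^2\,\d\nu\to 0$ uniformly over $\eta\in K_\nu$. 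Then, given $\mu_k\rightharpoonup\mu_\infty$ in $\PP_2(\R^d)$, I would pick $\eta_k\in K_\nu$ with $V(\mu_k|\nu)=W_2^2(\mu_k,\eta_k)$, extract a $W_2$-convergent subsequence $\eta_{k_j}\to\eta\in K_\nu$, and use the joint weak lower semicontinuity of $W_2^2$ to get $\liminf_j V(\mu_{k_j}|\nu)=\liminf_j W_2^2(\mu_{k_j},\eta_{k_j})\ge W_2^2(\mu_\infty,\eta)\ge V(\mu_\infty|\nu)$; since this holds along every subsequence, $\liminf_k V(\mu_k|\nu)\ge V(\mu_\infty|\nu)$. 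Thus $F$ is weakly lower semicontinuous, $\mu_\infty$ attains $\inf F$, and it solves the minimisation problem \eqref{eq:weak_pop_barycenter}.

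\textbf{Main obstacle.} The step I expect to be the crux is precisely this weak lower semicontinuity of $V(\cdot|\nu)$: Theorem~\ref{th:continuityV} only gives continuity of $V$ for the \emph{finer} $W_2$ topology, which does not by itself imply lower semicontinuity for the weak topology, so one really needs the Brenier--Strassen representation together with the uniform square-integrability of $K_\nu$ above (alternatively, one may invoke the general lower semicontinuity theory for barycentric weak transport costs underlying \cite{backhoff2020weak}). An alternative route, closer to the phrasing in the text, is to approximate $\Q$ by finitely supported $\Q_m\to\Q$ in $W_2(\PP_2(\R^d))$, take $\mu_m$ a weak barycenter of $\Q_m$ from Proposition~\ref{prop:existence_weak_bar}, establish tightness of $(\mu_m)_m$ by the same second-moment estimate, and pass to the limit; this variant additionally requires the \emph{joint} lower semicontinuity of $(\mu,\nu)\mapsto V(\mu|\nu)$ together with a lower-semicontinuity-along-converging-couplings argument to absorb the simultaneous convergences $\mu_m\rightharpoonup\mu_\infty$ and $\Q_m\to\Q$, which makes it slightly more delicate than the direct argument.
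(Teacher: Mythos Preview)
Your argument is correct. You carry out the direct method of the calculus of variations: take a minimising sequence for $F$, obtain tightness from a second-moment bound, extract a weak limit, and conclude via Fatou together with the weak lower semicontinuity of $\mu\mapsto V(\mu|\nu)$, which you establish from the Brenier--Strassen representation and the $W_2$-compactness of $K_\nu=\{\eta:\eta\leq_c\nu\}$. All steps hold; in particular your uniform-integrability estimate on $K_\nu$ via the convex function $(\|x\|-R)_+^2$ is fine.

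The paper, however, takes precisely the route you describe at the end as the ``alternative'': it approximates $\Q$ by finitely supported $\Q_n\to\Q$ in $W_2(\PP_2(\R^d))$, uses Proposition~\ref{prop:existence_weak_bar} to pick minimisers $\mu^n$ of $L_n(\mu)=\sum_i\lambda_iV(\mu|\nu_i)$, proves tightness of $(\mu^n)_n$ by a second-moment estimate analogous to yours, and then passes to the limit using Fatou's lemma for varying measures \cite{feinberg2020fatou} together with the lower semicontinuity of $V$ from \cite[Theorem~2.9]{backhoff2019existence}. So the paper trades your self-contained proof of weak l.s.c.\ of $V(\cdot|\nu)$ for a black-box citation, at the price of handling the simultaneous convergences $\mu^n\rightharpoonup\bar\mu$ and $\Q_n\to\Q$. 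Your direct approach is shorter and avoids the discrete approximation entirely, while the paper's approach makes explicit the consistency of the population problem with the finite one and reuses Proposition~\ref{prop:existence_weak_bar}.
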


%!TEX root = ../CTF_neurips2021_final.tex

\section{Algorithms via fixed-point representations}
\label{sec:algorithms}

\subsection{Weak barycenter}

For the Wasserstein barycenter problem in Eq.~\eqref{eq:barycenter}, the authors in \cite{agueh2011barycenters}  proved that if at least one of the measures $\nu_1,\ldots,\nu_n$ is \emph{a.c.}, the Wasserstein barycenter is unique. Furthermore, if all the $\nu_i$'s are \emph{a.c.}, and at least one of them has a bounded density, then the unique Wasserstein barycenter is also \emph{a.c.} and verifies a fixed-point equation. This last property has been thoroughly studied by \cite{alvarez2016fixed} and \cite{zemel2019frechet} and leveraged to compute an approximation of the barycenter via an iterative algorithm based on Monge maps, whose existence and uniqueness are guaranteed by the \emph{a.c.} of the measures involved.

Akin to the fixed-point methodology in the classical Wasserstein scenario, we define an iterative procedure based on the barycentric projection computed in the optimal weak transport problem in Eq.~\eqref{def:weak_transport}, that is valid for arbitrary distributions. Therefore, we consider the following iterative rule for probability measures $\nu_1,\ldots,\nu_n\in\PP_2(\R^d)$:
\begin{equation}
\label{def:G}
\mu_{k+1} = G(\mu_k),\ \mbox{with}\ G(\mu)= \left(\sum_{i=1}^n \lambda_i S_{\mu}^{\nu_i}\right)\# \mu,
\end{equation}
where for each $i=1,\ldots,n$, the optimal barycentric projection is given by $S_{\mu}^{\nu_i}:x\mapsto\int y\d\pi_x^{\mu,\nu_i}(y)$, for any $\pi^{\mu,\nu_i}\in\Pi(\mu,\nu_i)$ achieving the minimum in the OWT problem in Eq.~\eqref{def:weak_transport}. The  proposed iterative procedure is presented in Algorithm \ref{algo:weakbar}.

A fundamental difference between the fixed-point computation of the Wasserstein barycenter \cite{alvarez2016fixed} and a weak barycenter is that the optimal Monge map $T_{\mu}^{\nu}$ in the OT problem verifies $T_{\mu}^{\nu}\#\mu=\nu$, whereas the pushforward measure $S_{\mu}^{\nu}\#\mu$ in the OWT setting still depends on $\mu$. We will then prove that the iterative algorithm in Eq.~\eqref{def:G}, based on the maps $S_{\mu}^{\nu_i}$, admits converging subsequences. A convenient result is the continuity of the functional $G$ in Eq.~\eqref{def:G}, which can be proven using Arzela-Ascoli theorem on a set of barycentric projections as well as the Skorohod's representation theorem.
\begin{theorem}
\label{th:continuityG}
The function $\mu\mapsto G(\mu)$ defined in Eq.~\eqref{def:G} is $W_2$-continuous from $\PP_2(\R^d)$ to $\PP_2(\R^d)$.
\end{theorem}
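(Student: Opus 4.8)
\textbf{Proof proposal for Theorem~\ref{th:continuityG}.}

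The plan is to show that if $\mu_k \to \mu$ in $W_2$, then $G(\mu_k) \to G(\mu)$ in $W_2$. The natural way to do this is to exploit the representation $G(\mu) = \left(\sum_{i=1}^n \lambda_i S_{\mu}^{\nu_i}\right)\#\mu$ and to control both the pushforward map and the base measure simultaneously. First I would invoke Skorohod's representation theorem: since $\mu_k \to \mu$ weakly (and in $W_2$), one can construct random variables $X_k \sim \mu_k$ and $X \sim \mu$ on a common probability space with $X_k \to X$ almost surely; because convergence is in $W_2$, one may also arrange (or separately verify) that $\E\|X_k\|^2 \to \E\|X\|^2$, so the family $\{\|X_k\|^2\}$ is uniformly integrable. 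Then $G(\mu_k)$ is the law of $\sum_{i=1}^n \lambda_i S_{\mu_k}^{\nu_i}(X_k)$ and $G(\mu)$ is the law of $\sum_{i=1}^n \lambda_i S_{\mu}^{\nu_i}(X)$, and it suffices to show these converge in $L^2$, which gives $W_2$-convergence of the laws.

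The core of the argument is therefore the joint convergence $S_{\mu_k}^{\nu_i}(X_k) \to S_{\mu}^{\nu_i}(X)$. I would split this into two effects. For the dependence on the evaluation point with a \emph{fixed} measure: by Theorem~\ref{th:1.4Julio}, $S_{\mu}^{\nu_i} = \nabla\psi_i$ for a convex $\psi_i$ with $\nabla\psi_i$ being $1$-Lipschitz; this uniform Lipschitz bound is what lets Arzelà--Ascoli enter. For the dependence on the measure: as $\mu_k \to \mu$, the associated optimal barycentric projections $S_{\mu_k}^{\nu_i}$ are all $1$-Lipschitz maps, so by Arzelà--Ascoli (on compact sets, together with a tightness/diagonal argument to pass to all of $\R^d$) any subsequence has a further subsequence along which $S_{\mu_k}^{\nu_i}$ converges locally uniformly to some $1$-Lipschitz map $S_i$. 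One then identifies $S_i = S_{\mu}^{\nu_i}$: the limiting plan $\bigl(\id, S_i\bigr)\#\mu$ (or more carefully the weak limit of the optimal plans $\pi^{\mu_k,\nu_i}$, which is tight since its marginals converge) is a coupling between $\mu$ and $\nu_i$, and by lower semicontinuity of the weak transport cost $V$ together with the continuity of $V$ along $W_2$ (Theorem~\ref{th:continuityV}, i.e.\ $V(\mu_k|\nu_i)\to V(\mu|\nu_i)$), this limit plan must be optimal for $V(\mu|\nu_i)$; uniqueness of the optimal plan (Theorem~\ref{th:1.2Julio}) forces $S_i = S_{\mu}^{\nu_i}$ $\mu$-a.s. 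Since the limit is the same along every subsequence, the whole sequence converges, and combining the two effects via $\|S_{\mu_k}^{\nu_i}(X_k) - S_{\mu}^{\nu_i}(X)\| \leq \|S_{\mu_k}^{\nu_i}(X_k) - S_{\mu_k}^{\nu_i}(X)\| + \|S_{\mu_k}^{\nu_i}(X) - S_{\mu}^{\nu_i}(X)\| \leq \|X_k - X\| + \|S_{\mu_k}^{\nu_i}(X) - S_{\mu}^{\nu_i}(X)\|$ gives a.s.\ convergence, with the first term using the uniform $1$-Lipschitz property.

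Finally I would upgrade a.s.\ convergence to $L^2$ convergence: since each $S_{\mu_k}^{\nu_i}$ is $1$-Lipschitz, $\|S_{\mu_k}^{\nu_i}(X_k)\| \leq \|X_k\| + \|S_{\mu_k}^{\nu_i}(0)\|$, and $S_{\mu_k}^{\nu_i}(0)$ stays bounded (e.g.\ because the barycentric projection has the same mean as $\nu_i$ up to a controlled shift, or simply via $\|S_{\mu_k}^{\nu_i}(0)\| \le \|S_{\mu_k}^{\nu_i}(X_k)\| + \|X_k\|$ combined with the uniform $L^2$ bound on $S_{\mu_k}^{\nu_i}(X_k)$ coming from $V(\mu_k|\nu_i)\to V(\mu|\nu_i)<\infty$ and $\mu_k\to\mu$ in $W_2$), so $\{\|\sum_i \lambda_i S_{\mu_k}^{\nu_i}(X_k)\|^2\}$ is uniformly integrable by domination through the uniformly integrable $\{\|X_k\|^2\}$. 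Vitali's convergence theorem then yields $L^2$ convergence, hence $W_2\bigl(G(\mu_k), G(\mu)\bigr) \to 0$. The fact that $G(\mu) \in \PP_2(\R^d)$ whenever $\mu \in \PP_2(\R^d)$ is immediate from the $1$-Lipschitz bound. I expect the main obstacle to be the measure-dependence step: making the Arzelà--Ascoli extraction rigorous over all of $\R^d$ (not just a fixed compact) and cleanly identifying the subsequential limit of $S_{\mu_k}^{\nu_i}$ with $S_{\mu}^{\nu_i}$ via uniqueness of the optimal weak plan, which is where the continuity of $V$ (Theorem~\ref{th:continuityV}) and Theorem~\ref{th:1.2Julio} do the essential work.
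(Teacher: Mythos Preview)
Your proposal is correct and follows essentially the same route as the paper: Skorohod representation to get $X_k\to X$ a.s., the $1$-Lipschitz structure of the maps $S_{\mu_k}^{\nu_i}=\nabla\psi_{k,i}$ from Theorem~\ref{th:1.4Julio} to invoke Arzel\`a--Ascoli, identification of the limit map via continuity of $V$ (Theorem~\ref{th:continuityV}) together with uniqueness of the optimal weak plan (Theorem~\ref{th:1.2Julio}), and finally Vitali to upgrade a.s.\ to $L^2$ convergence. The only technical differences are that the paper packages the argument into two lemmas, applies Arzel\`a--Ascoli to \emph{truncated} maps $(S_{\mu_k}^{\nu_i})^N$ rather than directly (avoiding your boundedness-of-$S(0)$ step), and obtains uniform integrability of $\|S_{\mu_k}^{\nu_i}(X_k)\|^2$ from the representation $S_{\mu_k}^{\nu_i}(X_k)=\E(Y_i\mid X_k)$ with $Y_i\sim\nu_i$ a fixed law, rather than from Lipschitz domination by $\|X_k\|^2$; both variants work.
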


Using an approach similar to \cite{alvarez2016fixed} for the Wasserstein barycenter, we can state the following results for the proposed fixed-point procedure.
\begin{proposition}
\label{prop:fixedpoint}
If $\mu$ is a weak-barycenter, that is a solution of problem \eqref{eq:weak_barycenter}, then $G(\mu) = \mu$ i.e.  $x = \sum_{i=1}^n \lambda_i S_{\mu}^{\nu_i}(x), \mu(x)$-a.s.
\end{proposition}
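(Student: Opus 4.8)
The plan is to show that the map $G$ never increases the objective $F(\mu):=\sum_{i=1}^n\lambda_i V(\mu|\nu_i)$, and in fact
\[
F(G(\mu))\ \le\ F(\mu)\ -\ \E\big[\|X-T(X)\|^2\big],\qquad T:=\sum_{i=1}^n\lambda_i S_{\mu}^{\nu_i},
\]
so that if $\mu$ minimises $F$ over $\PP_2(\R^d)$ then, since $G(\mu)=T\#\mu\in\PP_2(\R^d)$ (Theorem~\ref{th:continuityG}), we must have $\E\|X-T(X)\|^2=0$, i.e. $x=\sum_{i=1}^n\lambda_i S_{\mu}^{\nu_i}(x)$ for $\mu$-a.e.\ $x$, which is exactly $G(\mu)=\mu$. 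Throughout I use, from Theorem~\ref{th:1.4Julio} and the definition following it, that $V(\mu|\nu_i)=\int\|x-S_{\mu}^{\nu_i}(x)\|^2\d\mu(x)$ with $S_{\mu}^{\nu_i}(x)=\int y\,\d\pi^{\mu,\nu_i}_x(y)$; equivalently, writing $(X,Y_i)\sim\pi^{\mu,\nu_i}$, one has $S_{\mu}^{\nu_i}(X)=\E[Y_i\mid X]$.

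First I would build a competitor for the problem \eqref{def:weak_transport} defining $V(G(\mu)|\nu_i)$ by pushing the optimal plan $\pi^{\mu,\nu_i}$ forward through $(x,y)\mapsto(T(x),y)$; this yields an element of $\Pi(T\#\mu,\nu_i)=\Pi(G(\mu),\nu_i)$ whose barycentric projection at a point $z=T(x)$ is the conditional expectation $\E[Y_i\mid T(X)=z]$. Bounding $V(G(\mu)|\nu_i)$ by the cost of this competitor and applying the change of variables $z=T(x)$ gives $V(G(\mu)|\nu_i)\le\E\big[\|T(X)-\E[Y_i\mid T(X)]\|^2\big]$. Since $\sigma(T(X))\subseteq\sigma(X)$, the tower property yields $\E[Y_i\mid T(X)]=\E[S_{\mu}^{\nu_i}(X)\mid T(X)]$, and because $T(X)$ is $\sigma(T(X))$-measurable, $T(X)-\E[S_{\mu}^{\nu_i}(X)\mid T(X)]=\E[T(X)-S_{\mu}^{\nu_i}(X)\mid T(X)]$. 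Conditional Jensen then gives
\[
V(G(\mu)|\nu_i)\ \le\ \E\big[\|T(X)-S_{\mu}^{\nu_i}(X)\|^2\big].
\]

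Next I would use the elementary barycentric identity $\sum_i\lambda_i\|m-a_i\|^2=\|m-\bar a\|^2+\sum_i\lambda_i\|\bar a-a_i\|^2$, valid for $\bar a=\sum_i\lambda_i a_i$ and any $m\in\R^d$, applied pointwise with $a_i=S_{\mu}^{\nu_i}(X)$, $\bar a=T(X)$ and $m=X$:
\[
\sum_{i=1}^n\lambda_i\|T(X)-S_{\mu}^{\nu_i}(X)\|^2\ =\ \sum_{i=1}^n\lambda_i\|X-S_{\mu}^{\nu_i}(X)\|^2\ -\ \|X-T(X)\|^2 .
\]
Summing the previous bound against the weights $\lambda_i$, taking expectations, and recognising $\sum_i\lambda_i\,\E\|X-S_{\mu}^{\nu_i}(X)\|^2=F(\mu)$ (finite since $\mu,\nu_i\in\PP_2(\R^d)$), I obtain the claimed inequality $F(G(\mu))\le F(\mu)-\E\|X-T(X)\|^2$, and the conclusion follows from the minimality of $\mu$ as explained above.

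The step I expect to be the main obstacle is making the competitor-coupling argument fully rigorous: one must check that $(x,y)\mapsto(T(x),y)$ pushes $\pi^{\mu,\nu_i}$ to a genuine coupling of $G(\mu)$ and $\nu_i$ and, more delicately, that the barycentric projection of this pushforward agrees $G(\mu)$-a.s.\ with the conditional expectation $z\mapsto\E_{\pi^{\mu,\nu_i}}[Y_i\mid T(X)=z]$. This is a disintegration / Doob--Dynkin argument that requires the measurability of $T$ (which follows since each $S_{\mu}^{\nu_i}=\nabla\psi_i$ is continuous by Theorem~\ref{th:1.4Julio}) and some care with $\mu$-null sets. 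The remaining ingredients — the tower property, conditional Jensen, the barycentric identity and the optimality argument — are routine.
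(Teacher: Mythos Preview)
Your argument is correct and shares the same backbone as the paper's proof: both establish the descent inequality
\[
\sum_{i=1}^n\lambda_i V(\mu|\nu_i)\ \geq\ \sum_{i=1}^n\lambda_i V(G(\mu)|\nu_i)\ +\ \int\|x-\bar S(x)\|^2\,\d\mu(x),\qquad \bar S=\sum_i\lambda_i S_{\mu}^{\nu_i},
\]
via the pointwise barycentric identity $\sum_i\lambda_i\|x-S_{\mu}^{\nu_i}(x)\|^2=\sum_i\lambda_i\|\bar S(x)-S_{\mu}^{\nu_i}(x)\|^2+\|x-\bar S(x)\|^2$, and then conclude from minimality of $\mu$.

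The only genuine difference is in how the term $\int\|\bar S(x)-S_{\mu}^{\nu_i}(x)\|^2\d\mu(x)$ is shown to dominate $V(G(\mu)|\nu_i)$. You push forward $\pi^{\mu,\nu_i}$ through $(x,y)\mapsto(\bar S(x),y)$, compute the barycentric projection of the resulting coupling as $\E[Y_i\mid \bar S(X)]$, and then invoke the tower property plus conditional Jensen. This is correct, but the disintegration step you flag as ``the main obstacle'' can be avoided entirely. The paper instead uses the Brenier--Strassen characterisation (Theorem~\ref{th:1.4Julio}): since $S_{\mu}^{\nu_i}\#\mu\leq_c\nu_i$, one has directly
\[
V(G(\mu)|\nu_i)=\inf_{\eta\leq_c\nu_i}W_2^2(G(\mu),\eta)\ \leq\ W_2^2\big(G(\mu),\,S_{\mu}^{\nu_i}\#\mu\big)\ \leq\ \int\|\bar S(x)-S_{\mu}^{\nu_i}(x)\|^2\,\d\mu(x),
\]
the last inequality because $(\bar S,S_{\mu}^{\nu_i})\#\mu$ is a coupling of $G(\mu)$ and $S_{\mu}^{\nu_i}\#\mu$. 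This two-line route sidesteps all the measurability and conditional-expectation bookkeeping. Your approach, on the other hand, has the virtue of not relying on the convex-ordering reformulation of $V$ and would go through for more general weak transport costs where such a reformulation is unavailable.
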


The inverse implication of Proposition \ref{prop:fixedpoint}  is not necessarily true, that is, some fixed points may not be weak barycenters. However, a Dirac delta $\delta_{\omega}, \omega\in\R^d,$ that meets the fixed-point condition  $\delta_{\omega} = G(\delta_{\omega})$, is a weak barycenter (see Lemma \ref{lemma:diracweak}).
%Indeed, since $\text{supp}(\delta_{\omega})=\{\omega\}$, we have that $S_{\delta_{\omega}}^{\nu_i}(\omega) = \E Y_i$, and therefore $\delta_{\omega} = G(\delta_{\omega})= (\sum_{i=1}^n \lambda_i S_{\delta_{\omega}}^{\nu_i})\# \delta_{\omega}$ reads $\omega = \sum_{i=1}^n \lambda_i\E(Y_i)$. Lemma \ref{lemma:diracweak} allows to conclude. 

\begin{proposition}
\label{prop:convergence_finite}
Let $(\mu_k)_k$ be the sequence defined by the iterative procedure $\mu_{k+1} = G(\mu_k)$ and starting from $\mu_0\in\PP_2(\R^d)$. Then $(\mu_k)_k$ is tight and every converging subsequence must converge to a fixed point of $G$. %Moreover, if $G$ has a unique fixed point $\bar{\mu}$, then the weak barycenter $\bar{\mu}$ is unique, and we have that $W_2(\mu_n,\bar{\mu})\to 0$.
\end{proposition}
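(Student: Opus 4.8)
The plan is to exploit the monotone behavior of the objective functional along the iteration together with a uniform second-moment bound, and then pass to the limit using the continuity of $G$ from Theorem \ref{th:continuityG}. Write $F(\mu) = \sum_{i=1}^n \lambda_i V(\mu|\nu_i)$ for the weak-barycenter objective. First I would establish that $F(G(\mu)) \le F(\mu)$ for every $\mu\in\PP_2(\R^d)$: since $V(\mu|\nu_i) = W_2^2(\mu, S_\mu^{\nu_i}\#\mu)$ and, more to the point, $V(\mu|\nu_i) = \int \|x - S_\mu^{\nu_i}(x)\|^2\d\mu(x)$, one has for the new iterate $\mu' = G(\mu) = (\sum_i\lambda_i S_\mu^{\nu_i})\#\mu$ the bound $V(\mu'|\nu_i) \le \int \|S(x) - S_{\mu}^{\nu_i}(x)\|^2 \d\mu(x)$ with $S = \sum_j \lambda_j S_\mu^{\nu_j}$, because the coupling $(S\#\mu,\,\pi^{\mu,\nu_i}$ transported through $S)$ — more precisely the image of $\pi^{\mu,\nu_i}$ under $(x,y)\mapsto(S(x),y)$ — is an admissible element of $\Pi(\mu',\nu_i)$ whose barycentric cost is exactly this integral. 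Summing against $\lambda_i$ and using that $z\mapsto\sum_i\lambda_i\|z_i - \sum_j\lambda_j z_j\|^2 = \sum_i\lambda_i\|z_i\|^2 - \|\sum_j\lambda_j z_j\|^2$ is the variance of the points $(S_\mu^{\nu_i}(x))_i$, a pointwise computation gives $\sum_i\lambda_i\|S(x) - S_\mu^{\nu_i}(x)\|^2 \le \sum_i\lambda_i\|x - S_\mu^{\nu_i}(x)\|^2$ (the mean of the $S_\mu^{\nu_i}(x)$ minimizes the weighted sum of squared deviations, so replacing $S(x)$ by $x$ only increases the cost); integrating in $\d\mu$ yields $F(G(\mu))\le F(\mu)$.

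Next I would derive tightness of $(\mu_k)_k$. By Lemma \ref{lemma:diracweak} and Eq.~\eqref{eq:meanwB}, the mean is an invariant of the iteration in the sense that $\E_{\mu_{k+1}}(X) = \sum_i\lambda_i \E_{\nu_i}(Y_i)$ for $k\ge 0$ — indeed $\E_{\mu_{k+1}}(X) = \int S(x)\d\mu_k(x) = \sum_i\lambda_i\int S_{\mu_k}^{\nu_i}(x)\d\mu_k(x) = \sum_i\lambda_i\E_{\nu_i}(Y_i)$ since each barycentric projection preserves the first marginal's relation to $\nu_i$ through $\int S_{\mu_k}^{\nu_i}\d\mu_k = \int y\,\d\pi^{\mu_k,\nu_i}(x,y) = \E_{\nu_i}(Y_i)$. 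For the second moment: from the monotonicity just proved, $F(\mu_k)\le F(\mu_0)$ for all $k$; and since $V(\mu_k|\nu_i) = W_2^2(\mu_k, \eta_i^{(k)})$ with $\eta_i^{(k)}\le_c\nu_i$, the triangle inequality in $W_2$ and the fact that $\eta\le_c\nu$ forces $\int\|y\|^2\d\eta\le\int\|y\|^2\d\nu$ give $W_2(\mu_k,\delta_0) \le W_2(\mu_k,\eta_i^{(k)}) + W_2(\eta_i^{(k)},\delta_0) \le \sqrt{F(\mu_0)/\lambda_i} + (\int\|y\|^2\d\nu_i)^{1/2}$, so $\sup_k\int\|x\|^2\d\mu_k(x)<\infty$. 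Uniform second-moment boundedness implies uniform integrability of $\|x\|$ and hence, by Prokhorov's theorem, tightness of $(\mu_k)_k$ in $\PP(\R^d)$ (in fact relative compactness in $W_2$, since a $W_2$-bounded, $2$-uniformly-integrable family is relatively $W_2$-compact).

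Finally, for any subsequence $\mu_{k_j}\to\mu_\infty$ in $W_2$, I would show $\mu_\infty$ is a fixed point of $G$. The cleanest route: $F\circ G$ and $F$ are both $W_2$-continuous (the former by composing Theorem \ref{th:continuityG} with continuity of $V$ in each argument, Th.~\ref{th:continuityV}), and $(F(\mu_k))_k$ is nonincreasing and bounded below by $0$, hence convergent to some $\ell$; passing to the limit along the subsequence gives $F(\mu_\infty) = \ell = \lim_j F(\mu_{k_j+1}) = \lim_j F(G(\mu_{k_j})) = F(G(\mu_\infty))$. To conclude $G(\mu_\infty)=\mu_\infty$ from $F(\mu_\infty) = F(G(\mu_\infty))$ I would argue that the pointwise inequality $\sum_i\lambda_i\|S(x) - S_{\mu_\infty}^{\nu_i}(x)\|^2 \le \sum_i\lambda_i\|x - S_{\mu_\infty}^{\nu_i}(x)\|^2$ must then be an equality $\mu_\infty$-a.e., and since for fixed points the points $(S_{\mu_\infty}^{\nu_i}(x))_i$ have weighted mean $S(x)$, equality in the variance inequality forces $S(x) = x$ for $\mu_\infty$-a.e.~$x$, i.e.\ $G(\mu_\infty) = \mu_\infty$; alternatively one invokes directly that equality in the admissibility bound of the first paragraph, together with uniqueness of the optimal plan (Th.~\ref{th:1.2Julio}), pins down $G(\mu_\infty)=\mu_\infty$. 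I expect the main obstacle to be the rigorous justification that $F(G(\mu_\infty)) = F(\mu_\infty)$ forces the fixed-point identity — one must be careful that the chain of inequalities producing $F(G(\mu))\le F(\mu)$ is tight at every step, in particular that the admissible coupling used to bound $V(G(\mu)|\nu_i)$ is actually optimal in the limiting case, which is where uniqueness of the optimal weak plan and the strict convexity underlying the variance decomposition do the work.
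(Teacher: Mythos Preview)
Your approach is essentially the paper's, and it is correct. The paper packages your variance decomposition as the single inequality
\[
\sum_{i}\lambda_i V(\mu|\nu_i)\ \ge\ \sum_{i}\lambda_i V(G(\mu)|\nu_i)\ +\ W_2^2(\mu,G(\mu)),
\]
proved in Proposition~\ref{prop:fixedpoint} (Eq.~\eqref{ineq:fixedpoint}); this makes your final step immediate---once $F(\mu_\infty)=F(G(\mu_\infty))$, the extra term forces $W_2^2(\mu_\infty,G(\mu_\infty))=0$ without any appeal to optimality of the admissible coupling or to Theorem~\ref{th:1.2Julio}, so your stated ``main obstacle'' dissolves. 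Two minor points where the paper is cleaner: tightness follows directly from Jensen via $\int\|x\|^2\d\mu_{k+1}\le\sum_i\lambda_i\int\|y\|^2\d\nu_i$, with no need for the monotonicity of $F$; and the upgrade from weak to $W_2$-relative-compactness (which you parenthetically flag but do not prove) is handled in the paper by Lemma~\ref{lemma:UI2moments}, which gives uniform integrability of second moments for laws of the form $S_{\rho}^{\nu}\#\rho$.
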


We observe that these results also hold for the classical Wasserstein barycenter of \emph{a.c.} measures $\{\nu_i\}_{i=1\ldots,n}$ such that at least one of them has a bounded density. Moreover, the inverse implication, namely if $\mu$ is a fixed-point then it is a barycenter, is not straightforward  even in the Wasserstein barycenter case, for which one considers the fixed-point equation given by $\mu = (\sum_{i=1}^n \lambda_i T^{\nu_i}_{\mu})\#\mu$, with $T^{\nu_i}_{\mu}$ the Monge map verifying $\nu_i = T^{\nu_i}_{\mu}\#\mu$. Indeed, \cite{agueh2011barycenters} prove that if $\mu$ checks $x = \sum_{i=1}^n \lambda_i T^{\nu_i}_{\mu}(x)$ for every $x\in\R^d$, not only $\mu$-almost everywhere, then $\mu$ is a Wasserstein barycenter. Also, \cite[Theorem 2]{zemel2019frechet} provide additional conditions for this to be true by essentially invoking more smoothness on the distributions $\{\nu_i\}_{i=1\ldots,n}$. Additionally, they only conjecture that under the same assumptions, the fixed-point is unique. Our method, however, includes arbitrary probability measures. Therefore, we do not expect to obtain similar results as in the Wasserstein barycenter case, for which smoothness is required.

\subsection{Weak population barycenter}

Based on \cite{bakchoff2022}, we construct a stochastic iterative algorithm for computing the weak population barycenter in Eq.~\eqref{eq:weak_pop_barycenter}. We clarify that \cite{bakchoff2022} is constrained to probability measures $\Q$ supported on distributions that are \emph{a.c.}, whereas in our setting these distributions only need to belong to $\PP_2(\R^d)$. Let us notice that our algorithms can be interpreted as geodesic gradient descent as in \cite{bakchoff2022} and \cite{chewi2020gradient}, however, OWT is not a metric and its potential geodesic structure is so far unknown. Therefore, the proposed algorithm only aims to mimic Riemmanian gradient descent.
%It is also worth noticing that \cite{chewi2020gradient} studied gradient descent algorithms in the special case of Gaussian distributions, in order to compute the Bures-Wasserstein barycenter.
Our fixed-point result for the weak population barycenter problem is stated in the following Lemma:% the proof of which is similar to the finite case.

\begin{lemma}
\label{lemma:fixed_point}
If $\mu$ is a weak population barycenter of $\Q$, then $x=\int S_{\mu}^{\nu}(x)\d\Q(\nu), \mu(x)$-a.s. 
\end{lemma}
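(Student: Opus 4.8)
The plan is to derive the fixed-point identity by a first-order (variational) argument, mimicking the proof of Proposition~\ref{prop:fixedpoint} for the finite case but carrying the perturbation through the integral against $\Q$. Let $\mu$ be a weak population barycenter of $\Q$, so it minimises $F(\mu) := \int V(\mu\mid\nu)\,\d\Q(\nu)$ over $\PP_2(\R^d)$. Using $V(\mu\mid\nu) = \int \Vert x - S_{\mu}^{\nu}(x)\Vert^2\,\d\mu(x)$ together with Lemma~\ref{lemma:Smeasurable} (which guarantees $(x,\nu)\mapsto S_{\mu}^{\nu}(x)$ is jointly measurable, so the double integral makes sense and Fubini applies), I would write
\[
F(\mu) = \int_{\R^d}\int_{\PP_2(\R^d)} \Vert x - S_{\mu}^{\nu}(x)\Vert^2 \,\d\Q(\nu)\,\d\mu(x).
\]
First I would fix a measurable competitor map: for any bounded measurable $T:\R^d\to\R^d$ and $t\in[0,1]$, set $\mu_t := ((1-t)\,\id + tT)\#\mu$. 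Then $F(\mu_t)\ge F(\mu)$, and the idea is to compute $\frac{d}{dt}\big|_{t=0^+}F(\mu_t)\ge 0$ and read off the Euler--Lagrange condition.

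The key structural input is the same one used for the finite barycenter: for fixed $\nu$, the function $\mu\mapsto V(\mu\mid\nu)$ has a ``linearised'' upper bound coming from the suboptimality of transporting $\mu_t$ to $\nu$ via the plan obtained by pushing the optimal plan $\pi^{\mu,\nu}$ along $((1-t)\id+tT)$ in the first coordinate; concretely, $V(\mu_t\mid\nu)\le \int \Vert (1-t)x+tT(x) - S_{\mu}^{\nu}(x)\Vert^2\,\d\mu(x)$, which is a quadratic polynomial in $t$ whose derivative at $t=0$ equals $2\int \langle T(x)-x,\; x - S_{\mu}^{\nu}(x)\rangle\,\d\mu(x)$. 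Since this bound is an equality at $t=0$, and $V(\mu\mid\nu)=V(\mu_0\mid\nu)$, one gets
\[
\limsup_{t\to 0^+}\frac{V(\mu_t\mid\nu)-V(\mu\mid\nu)}{t} \le 2\int_{\R^d}\big\langle T(x)-x,\; x - S_{\mu}^{\nu}(x)\big\rangle\,\d\mu(x).
\]
Integrating against $\Q$ (using dominated convergence, justified by the integrability built into the definition of $\Q\in\PP_2(\PP_2(\R^d))$ and boundedness of $T$), and combining with $F(\mu_t)\ge F(\mu)$, yields $0 \le \int_{\R^d}\big\langle T(x)-x,\; x - \int S_{\mu}^{\nu}(x)\,\d\Q(\nu)\big\rangle\,\d\mu(x)$ for all bounded measurable $T$. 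Replacing $T$ by $2\id - T$ gives the reverse inequality, hence equality to zero for every such $T$; since $T-\id$ ranges over all bounded measurable fields, the bracket must vanish $\mu$-a.s., i.e. $x = \int S_{\mu}^{\nu}(x)\,\d\Q(\nu)$ for $\mu$-a.e.\ $x$.

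The main obstacle I expect is the interchange of the limit in $t$ with the integral $\int\cdot\,\d\Q(\nu)$, i.e. turning the \emph{pointwise-in-$\nu$} one-sided derivative bound into a bound on the derivative of $F$. This needs a uniform (in $\nu$) integrable majorant for the difference quotients $\big(V(\mu_t\mid\nu)-V(\mu\mid\nu)\big)/t$; the natural candidate is controlled by $\int(\Vert T(x)-x\Vert^2 + \Vert T(x)-x\Vert\,\Vert x - S_{\mu}^{\nu}(x)\Vert)\,\d\mu(x)$, and one bounds $\int\Vert x - S_{\mu}^{\nu}(x)\Vert^2\d\mu(x) = V(\mu\mid\nu) \le W_2^2(\mu,\nu)$ (since $\delta$ with the right mean, or just $\nu$ itself via $S_{\mu}^{\nu}\#\mu \le_c \nu$, controls it), whose $\Q$-integral is finite by assumption; Cauchy--Schwarz then closes the majorant. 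A secondary point is measurability of $\nu\mapsto V(\mu_t\mid\nu)$ and of the integrands in $\nu$, but this follows from the continuity statement in Lemma~\ref{lemma:Smeasurable}. Once these integrability/measurability checks are in place, the argument is exactly the finite-dimensional one with $\sum_i\lambda_i$ replaced by $\int\cdot\,\d\Q(\nu)$.
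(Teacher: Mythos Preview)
Your variational approach works but takes a longer route than the paper, and it is not actually the argument used for Proposition~\ref{prop:fixedpoint}. Both that proposition and the present lemma are proved in the paper by testing against a \emph{single} competitor: setting $\bar S(x):=\int S_\mu^\nu(x)\,\d\Q(\nu)$, the bias--variance identity
\[
\int V(\mu\mid\nu)\,\d\Q(\nu)=\iint\Vert\bar S(x)-S_\mu^\nu(x)\Vert^2\d\mu(x)\,\d\Q(\nu)+\int\Vert x-\bar S(x)\Vert^2\d\mu(x),
\]
combined with $V(\bar S\#\mu\mid\nu)\le\int\Vert\bar S(x)-S_\mu^\nu(x)\Vert^2\d\mu(x)$ (valid since $S_\mu^\nu\#\mu\le_c\nu$), yields $F(\mu)\ge F(\bar S\#\mu)+\int\Vert x-\bar S(x)\Vert^2\d\mu(x)$; optimality of $\mu$ then forces the last term to vanish. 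No perturbation family, no derivative, no limit interchange. Your route reaches the same conclusion by linearising the quadratic upper bound in $t$, which is fine but heavier than needed: since that upper bound on $V(\mu_t\mid\nu)-V(\mu\mid\nu)$ is an \emph{exact} polynomial in $t$ (not merely a $\limsup$), you may integrate against $\Q$ first---only Fubini is required---and then send $t\to0$, so the dominated-convergence concern you flag is moot. One slip to repair: with $T$ bounded, $2\id-T$ is \emph{not} bounded, so your sign-flip step fails as written; either take $T\in L^2(\mu)$ from the outset (all your estimates need only this, and $\id\in L^2(\mu)$ since $\mu\in\PP_2(\R^d)$), or parametrise $T=\id+\phi$ with $\phi$ bounded and flip the sign of $\phi$.
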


As in the finite case, the inverse implication is difficult to obtain. In particular, this has not been proven for the classical population Wasserstein barycenter in \cite{bakchoff2022}, where it boils down to prove the uniqueness of an absolutely continuous fixed point of $\mu\mapsto(\int T_{\mu}^{\nu}\d\Q(\nu))\#\mu$, where $T_{\mu}^{\nu}$ is the Monge map between $\mu$ and $\nu$. As explained in \cite{bakchoff2022}, the uniqueness of such fixed points has also been studied under some strong assumptions in \cite{bigot2018characterization} by considering parametric classes of random probability measures with compact support. This result is expected to be true by again	 invoking more smoothness on the distributions at hand. As our method focuses (in particular) on discrete probability measures, the conditions under which the inverse implication  holds are beyond the scope of our work. However, from the experimental results in Section \ref{sec:expe}, we believe our method presents practical advantages.

We next present an iterative scheme converging  towards a distribution $\mu$ verifying the fixed-point equation in Lemma \ref{lemma:fixed_point}.  This scheme is illustrated below in Algorithm \ref{algo:weakbarpop}. To prove its convergence, we will need a technical assumption on $\Q$:

\begin{minipage}{1cm}
\vspace{-0.4cm}(A)
\end{minipage}
\begin{minipage}{12cm} There exists $\epsilon>0$ and $R>0$ such that $\Q$ gives full measure to the set \qquad $\displaystyle{K_\Q:=\{\mu \in \PP_2(\R^d): \int |x|^{2+\epsilon}\d \mu(x) \leq R\}}$.\end{minipage}

%(A) There is $\Q$ gives full measure to a $W_2$-compact set $K_\Q\subset\PP_2(\R^d)$. Moreover, this set is \emph{"geodesically closed"}, that is: for any $\mu,\nu\in K_\Q$ and $t\in[0,1]$, if $I$ denotes the identity operator, we have $((1-t)I+tS_\mu^\nu)(\mu)\in K_\Q$, and closed for convex minorization
%(A2) if $\mu\in K_\Q$ and $\nu\in\PP_2(\R^d)$ such that $\nu\leq_c \mu$, then $\nu\in K_\Q$.

%. Moreover, this set is \emph{"geodesically closed"}, that is: for any $\mu,\nu\in K_\Q$ and $t\in[0,1]$, if $I$ denotes the identity operator, we have $((1-t)I+tS_\mu^\nu)(\mu)\in K_\Q$, and closed for convex minorization $$

%Note that (A2) is a generic hypothesis ensuring that the stochastic iterates belongs to a compact space.} The approach is presented in the following definition and illustrated in Algorithm \ref{algo:weakbarpop}.

\begin{definition}
Let $\mu_0\in K_\Q, \nu^k \overset{i.i.d.}{\sim}\Q$ and $\gamma_k>0$. We define the following iterative procedure:
\begin{equation}
\label{eq:iterative_algo}
\mu_{k+1} = \left[(1-\gamma_k)\textnormal{id}+\gamma_k S_{\mu_k}^{\nu^k}\right]\#\mu_k, \  k\geq 0,
\end{equation}
where $S_{\mu_k}^{\nu^k}$ is the optimal barycentric projection between $\mu_k$ and $\nu^k$ and \textnormal{id} is the identity operator.
\end{definition}
 The following standard conditions on the steps $\gamma_k$ will also be assumed:
\begin{equation}
\label{cond:gamma}
\sum_{k=1}^{\infty} \gamma_k^2 <\infty\qquad \mbox{and} \qquad \sum_{k=1}^{\infty} \gamma_k = \infty,
\end{equation}

\begin{theorem}
\label{th:conv_iter}
Assume Conditions in Eq.~\eqref{cond:gamma}, (A) and moreover that every measure verifying the fixed-point equation $x=\int S_{\mu}^{\nu}(x)\d\Q(\nu), \mu(x)$-a.s. is a weak barycenter. Then the sequence $(\mu_k)_k$ in Eq.~\eqref{eq:iterative_algo} is a.s. relatively compact w.r.t. $W_2$ and every limit point is a weak barycenter.
\end{theorem}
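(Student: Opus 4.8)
The plan is to treat Eq.~\eqref{eq:iterative_algo} as a Riemannian-type stochastic gradient descent for the functional $F(\mu):=\int_{\PP_2(\R^d)}V(\mu|\nu)\,\d\Q(\nu)$, with the ``gradient at $\mu$'' identified with the residual field $b_\mu(x):=x-\int S_\mu^\nu(x)\,\d\Q(\nu)$, whose vanishing $\mu$-a.s.\ is precisely the fixed-point equation of Lemma~\ref{lemma:fixed_point}. Write $\mathcal F_k:=\sigma(\nu^0,\dots,\nu^{k-1})$ and $a_k:=\int\|b_{\mu_k}(x)\|^2\,\d\mu_k(x)\ge0$. Since $0\le V(\mu|\nu)\le 2\int\|x\|^2\d\mu+2\int\|y\|^2\d\nu$, the functional $F$ is finite and nonnegative, and by continuity of $V$ (Th.~\ref{th:continuityV}) together with dominated convergence it is $W_2$-continuous. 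I would then proceed in three stages.

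\emph{Stage 1 (a.s.\ relative compactness).} By Th.~\ref{th:1.4Julio} every $S_\mu^\nu$ is $1$-Lipschitz and $S_\mu^\nu\#\mu\leq_c\nu$, so convexity and nonnegativity of $x\mapsto\|x\|^{2+\epsilon}$ give $\int\|y\|^{2+\epsilon}\d(S_{\mu_k}^{\nu^k}\#\mu_k)\le\int\|y\|^{2+\epsilon}\d\nu^k\le R$ on the a.s.\ event $\{\nu^k\in K_\Q\ \forall k\}$. Applying convexity of $\|\cdot\|^{2+\epsilon}$ once more along $\mu_{k+1}=[(1-\gamma_k)\id+\gamma_k S_{\mu_k}^{\nu^k}]\#\mu_k$ (using $\gamma_k\le1$, which holds for $k$ large as $\gamma_k\to0$) yields $\int\|x\|^{2+\epsilon}\d\mu_{k+1}\le(1-\gamma_k)\int\|x\|^{2+\epsilon}\d\mu_k+\gamma_k R$, hence $\sup_k\int\|x\|^{2+\epsilon}\d\mu_k<\infty$ a.s. This gives tightness and uniform $\mu_k$-integrability of $\|\cdot\|^2$, i.e.\ a.s.\ relative compactness of $(\mu_k)_k$ for $W_2$, and also $\sup_k F(\mu_k)<\infty$ a.s.

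\emph{Stage 2 (almost-supermartingale estimate).} Fixing $k$, writing $\mu=\mu_k$, $\nu=\nu^k$, $T=(1-\gamma_k)\id+\gamma_k S_\mu^\nu$, and noting that $S_\mu^{\nu'}\#\mu\leq_c\nu'$, I would bound $V(\mu_{k+1}|\nu')=\inf_{\eta\leq_c\nu'}W_2^2(\mu_{k+1},\eta)$ using $S_\mu^{\nu'}\#\mu$ as competitor and the coupling $(T(X),S_\mu^{\nu'}(X))$, $X\sim\mu$, and expand:
\begin{align*}
V(\mu_{k+1}|\nu')&\le\int\|T(x)-S_\mu^{\nu'}(x)\|^2\,\d\mu(x)\\
&=V(\mu|\nu')-2\gamma_k\!\int\!\langle x-S_\mu^{\nu'}(x),\,x-S_\mu^{\nu}(x)\rangle\,\d\mu(x)+\gamma_k^2\,V(\mu|\nu).
\end{align*}
Integrating against $\d\Q(\nu')$ and taking $\E[\,\cdot\mid\mathcal F_k]$ (the sample $\nu^k$ being independent of $\mathcal F_k$, with $\E_\nu[x-S_\mu^\nu(x)]=b_\mu(x)$ and $\E_\nu[V(\mu|\nu)]=F(\mu)$) gives the almost-supermartingale inequality
\[
\E[F(\mu_{k+1})\mid\mathcal F_k]\le(1+\gamma_k^2)\,F(\mu_k)-2\gamma_k\,a_k.
\]
Since $F\ge0$, $a_k\ge0$ and $\sum_k\gamma_k^2<\infty$, the Robbins--Siegmund lemma yields that $F(\mu_k)$ converges a.s.\ to a finite limit and $\sum_k\gamma_k a_k<\infty$ a.s.; with $\sum_k\gamma_k=\infty$ this forces $\liminf_k a_k=0$ a.s.

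\emph{Stage 3 (identification of limit points).} Each step is small, $W_2(\mu_k,\mu_{k+1})\le\gamma_k\sqrt{V(\mu_k|\nu^k)}\le C\gamma_k$ on the compact set carrying the trajectory; combining this with a Lipschitz-type continuity estimate for $\mu\mapsto a_\mu$ there, and with $\sum_k\gamma_k a_k<\infty$, $\sum_k\gamma_k=\infty$, $\gamma_k\to0$, a standard ``no infinitely many upcrossings'' argument upgrades $\liminf_k a_k=0$ to $a_k\to0$ a.s. Then for any limit point $\mu_\infty=\lim_j\mu_{k_j}$ in $W_2$, a Skorohod representation $X_j\sim\mu_{k_j}$, $X_j\to X_\infty\sim\mu_\infty$ a.s., the locally uniform convergence $S_{\mu_{k_j}}^\nu\to S_{\mu_\infty}^\nu$ (Arzela-Ascoli on the equi-$1$-Lipschitz barycentric projections, using Lemma~\ref{lemma:Smeasurable}) and dominated convergence in $\nu$ (domination from the $(2+\epsilon)$-moment bound and the $\leq_c$ control) give $b_{\mu_{k_j}}(X_j)\to b_{\mu_\infty}(X_\infty)$, whence Fatou and $a_{k_j}\to0$ force $b_{\mu_\infty}=0$ $\mu_\infty$-a.s.; so $\mu_\infty$ solves the fixed-point equation and, by the standing hypothesis, is a weak barycenter. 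I expect the crux to lie here: OWT carries no genuine geodesic structure, so the gradient-descent picture is only formal, and the real work is to extract enough quantitative (Lipschitz-type) stability of $\mu\mapsto b_\mu$ on the compact trajectory set --- strictly more than the qualitative continuity of Lemma~\ref{lemma:Smeasurable} --- in order to pass from $\liminf_k a_k=0$ to $a_k\to0$, and, relatedly, to justify interchanging the $j\to\infty$ limit with the $\Q$-integral and with the pushforward defining $b_{\mu_\infty}$.
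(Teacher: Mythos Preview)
Your Stages 1 and 2 are correct and essentially coincide with the paper's argument: the paper obtains the same $(2+\epsilon)$-moment bound by induction to trap $(\mu_k)_k$ in the $W_2$-compact set $K_\Q$ a.s., and derives the same almost-supermartingale inequality (its Proposition~\ref{prop:ineqsto}) to conclude, via a quasi-martingale/Robbins--Siegmund argument, that $F(\mu_k)$ converges a.s.\ and $\sum_k\gamma_k a_k<\infty$ a.s., hence $\liminf_k a_k=0$.

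Stage 3 is where you diverge, and where the gap lies. You hinge the upcrossing argument on a ``Lipschitz-type continuity estimate'' for $\mu\mapsto a_\mu$ that you do not prove and that the paper does not provide either; the paper only establishes $W_2$-\emph{continuity} of $\mu\mapsto\|H(\mu)\|^2_{\L^2(\mu)}$ and of $\mu\mapsto F(\mu)$ (Proposition~\ref{th:lsc}, via the Skorohod/Arzel\`a--Ascoli machinery you sketch at the end of your proposal). More importantly, the paper bypasses the upcrossing route altogether by exploiting a piece of information from Stage~2 that you leave unused in Stage~3: the a.s.\ \emph{convergence} of $F(\mu_k)$. Writing $\bar F:=\min F$, the set $\{\rho:F(\rho)\ge\bar F+\delta\}\cap K_\Q$ is $W_2$-compact by continuity of $F$; on it the continuous function $\rho\mapsto a_\rho$ attains its minimum, which must be \emph{strictly positive}, since a zero would be a fixed point that is not a minimiser, contradicting the standing hypothesis. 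Hence, on the event $\{\lim_k F(\mu_k)>\bar F\}$, the sequence $a_{\mu_k}$ is eventually bounded away from $0$, contradicting $\liminf_k a_k=0$; so $F(\mu_k)\to\bar F$ a.s., and by continuity of $F$ every $W_2$-limit point is a minimiser, i.e.\ a weak barycenter. No upgrade from $\liminf_k a_k=0$ to $a_k\to0$ is needed, no quantitative stability beyond continuity is required, and the hypothesis ``every fixed point is a weak barycenter'' is used exactly once, to rule out a zero infimum on that compact sublevel set.
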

The proof, provided in the supplementary material, is inspired by the standard Wasserstein barycenter case studied in \cite{bakchoff2022}, \cite{rios1805bayesian}. Assumption (A) grants that the sequence in Eq.  \eqref{eq:iterative_algo} remains in some compact set, and can be replaced by more general conditions (see Remark \ref{remark:assumptionA} in Appendix).

\begin{minipage}[t]{0.48\textwidth} 
        \begin{algorithm}[H]
{\bf Input:} distributions $\nu_1,\ldots,\nu_n$, \# steps $K$;\\
initialisation: $\mu_0=\nu_1$;\\
 \For{$k=0,1,\ldots,K$}{
\For {$i=1,2,\ldots,n$}{
Solve the OWT problem between $\mu_k$ and $\nu_i$ to obtain $\pi^{\mu_{k},\nu_i}$;\\
  $S_i = \int y\d\pi_x^{\mu_{k},\nu_i}(y)$
 }
$\mu_{k+1}=(\sum_{i=1}^n \lambda_i S_i)\#\mu_{k}$
 }
 \caption{Weak barycenter}
 \label{algo:weakbar}
\end{algorithm}
\end{minipage}
\hspace{2em}\begin{minipage}[t]{0.48\textwidth} 
        \begin{algorithm}[H]

{\bf Input:}  number of steps $K$;\\
initialise distribution $\mu_0\sim\Q$;\\
 \For{$k=0,1,\ldots,K$}{
     Sample $\nu^k \sim \Q$;\\
     Update $\gamma_k$;\\
     Solve the OWT problem to obtain $\pi^{\mu_{k},\nu^k}$\\
	$S_k = \int y\d\pi_x^{\mu_{k},\nu^k}(y)$;\\
 	$\mu_{k+1} =\left[(1-\gamma_k)\id +\gamma_k S_k\right]\#\mu_{k}$;\\
 }
 \caption{Weak population barycenter}
  \label{algo:weakbarpop}
\end{algorithm}

\end{minipage}

\section{Computational aspects}
\label{sec:computation}

{\bf Setting and computation of OWTs.} Both Algorithms \ref{algo:weakbar} and \ref{algo:weakbarpop} require the computation of the optimal barycentric projection associated to the OWT problem in Eq.~\eqref{def:weak_transport}. For two discrete measures $\mu=\sum_{i=1}^r a_i\delta_{x_i}$ and $\nu=\sum_{j=1}^m b_j\delta_{y_j}$, this boils down to solving the following quadratic programming problem
\begin{equation}
\label{weakOT_discrete}
\min_{\pi\in\R^{r\times m}} \left\{\sum_{i=1}^r a_i  \left\Vert x_i- \left(\frac{\pi {\bf y}}{\bf a}\right)_i\right\Vert^2, \pi_{ij}\geq 0, \ \pi\1 = a, \ \pi^T\1 = b\right\},	
\end{equation}
which can be solved using a solver such as {\bf cvxpy}. We also propose to solve the OWT problem in Eq.~\eqref{weakOT_discrete} with a proximal algorithm. The optimal barycentric projection is then constructed as $\frac{\pi{\bf y}}{{\bf a}}$. The details and examples are presented in Appendix \ref{sec:appendix_proximal}.

{\bf Comparison setting.} In the next section, we compare our proposed computation for weak barycenters in Definition \ref{def:popweakbar} (Algorithm \ref{algo:weakbarpop}) to the classic Wasserstein barycenter in particular for a stream of a.c.~measures.
Namely, we will run Algorithm \ref{algo:weakbarpop} by, following \cite{cuturi2014fast,seguy2017large}, replacing optimal barycentric projections by the barycentric projections associated either to i) an optimal plan in the Kantorovich problem \eqref{def:OT}, or ii) the optimal Sinkhorn plan in the entropy regularised OT problem \cite{sinkhorn_lightspeed} given by
%Namely, we will run the algorithm $\mu_{k+1} = \left[(1-\gamma_k)\textnormal{id}+\gamma_k T_{\mu_k}^{\nu^k}\right]\#\mu_k,$ where $T_{\mu_k}^{\nu^k}$ is the Monge map. Since this map does not always exist (in particular for discrete measures), we follow \cite{cuturi2014fast,seguy2017large} and replace it by the barycentric projection associated either to i) an optimal plan in the Kantorovich problem referred to as \emph{OT barycenter}, or ii) the optimal Sinkhorn plan in the entropy regularised OT \cite{sinkhorn_lightspeed}, referred to as \emph{OT Sinkhorn barycenter} given by the KL divergence as 
\begin{equation}
\label{def:sinkhorn}
\argmin_{\pi\in\Pi(\mu,\nu)} \int \Vert x-y\Vert^2\d\pi(x,y) + \varepsilon KL(\pi | \mu \otimes\nu),
\end{equation}
where KL denotes the Kullback-Leibler divergence. The associated barycenters will be referred to as \emph{OT barycenter} and \emph{OT Sinkhorn barycenter} respectively. The optimal plans for OT and regularised OT problem were computed using \emph{POT toolbox} \cite{flamary2021pot}. Notice that what we call OT barycenter (resp.~OT Sinkhorn barycenter) is not solving a Wasserstein barycenter problem (resp.~a regularised Wasserstein barycenter problem).  Therefore, our method for barycentric computation differs from previous ones in the literature (see Section \ref{sec:background}) in that it i) can process a {\it stream} of an unknown number of measures, ii) does not require the measures to be a.c., and iii) does not appeal to additional regularisation of the measures or the Wasserstein metric.

\section{Experimental results}
\label{sec:expe}

%As Algorithms \ref{algo:weakbar} and \ref{algo:weakbarpop} are guaranteed to converge towards a fixed point but not necessarily towards the weak barycenters,
This section is devoted to the empirical validation of our proposal on both synthetic and real-world data. We first focused on Algorithm \ref{algo:weakbarpop} since multiple algorithms to compute a Wasserstein barycenter for a fixed number of distributions are already available \cite{cuturi2014fast,staib2017parallel}. We present two robustness to outliers experiments, then we validate our OWT barycenter on synthetic dataset and real-world ones. The overall conclusion of our experiments is that the weak barycenter is more likely to maintain the common (or shared) geometric features of the measures involved, as expected from Theorem \ref{theo:latent}. Additional experiments are presented in Appendix \ref{sec:additional_expe}, including the comparison of the energy for the computed weak barycenter in Algorithm \ref{algo:weakbar} against the approximated optimal energy (using Eq.\eqref{eq:obj_function} and the plug-in estimator).

\subsection{Robustness to outliers}
\label{sec:outliers}

OT's sensitivity to outliers is a well-known problem that can be addressed  {\emph e.g.} with unbalanced OT \cite{balaji2020robust}. We observed that OWT also allows to deal with outliers, which is coherent with the latent variable interpretation (see Remark \ref{remark:robustness}). We illustrate this with two experiments. In Fig.~\ref{fig:robust_outliers} (left), we consider $50$ sets of $20-30$ observations from different 2D Gaussian measures, where each observation may be corrupted by random translations (Bernoulli $p = 0.05$) thus producing outliers. We show the resulting barycenters (dots), and barycenters without outliers (crosses) for Wasserstein barycenter (red) and weak barycenter (black), which shows robustness to outliers. In Fig.~\ref{fig:robust_outliers} (right), we consider two distributions supported on pair-of-ellipses, and $120$ observations per distribution. Again, each observation may be corrupted by random translations (Bernoulli $p = 0.05$). The weak barycenter (black) shows a better preservation of the shapes than the Wasserstein barycenter (red), in particular, the red dots are more often located outside the ellipses.
\begin{figure}
\centering
\begin{tabular}{C{3.1cm} C{3.2cm} | C{3cm} C{2.9cm}}
\includegraphics[scale=0.18]{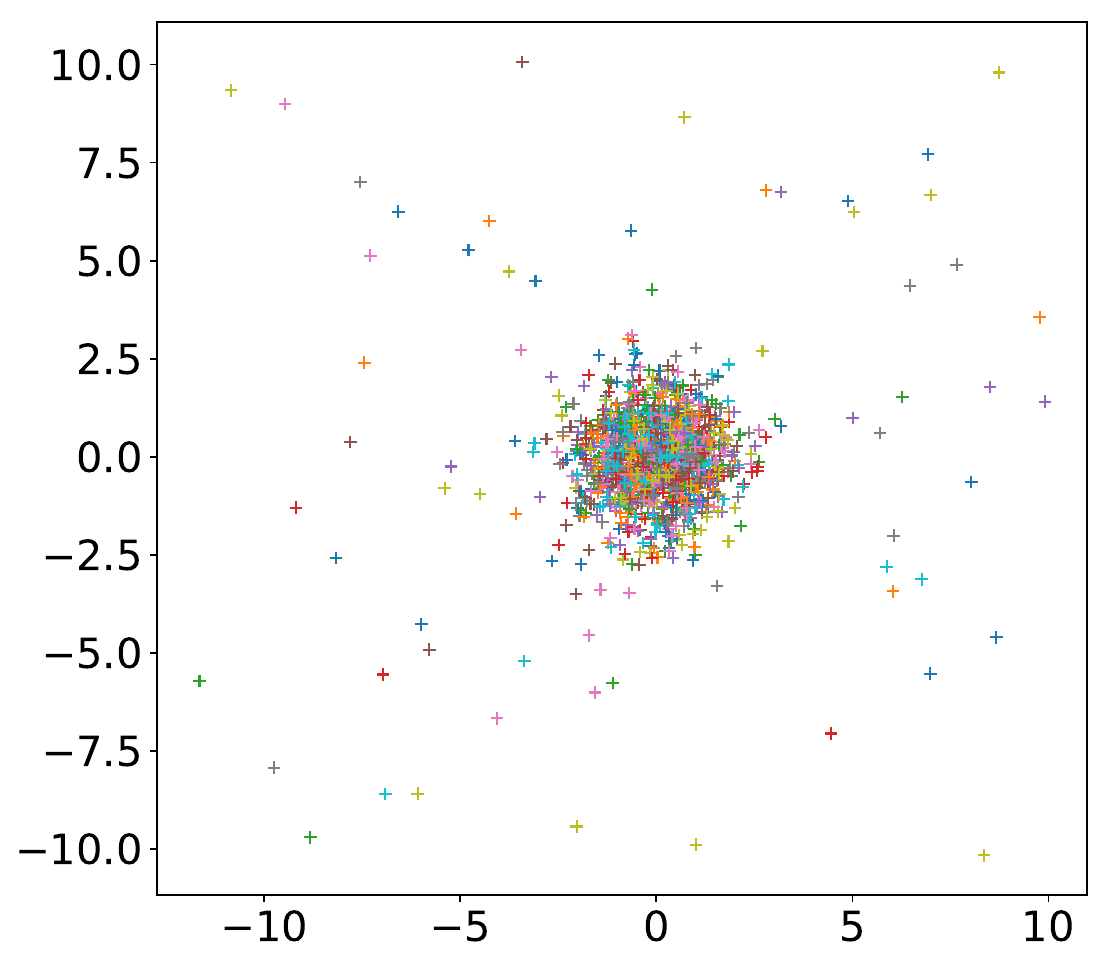} &
\includegraphics[scale=0.18]{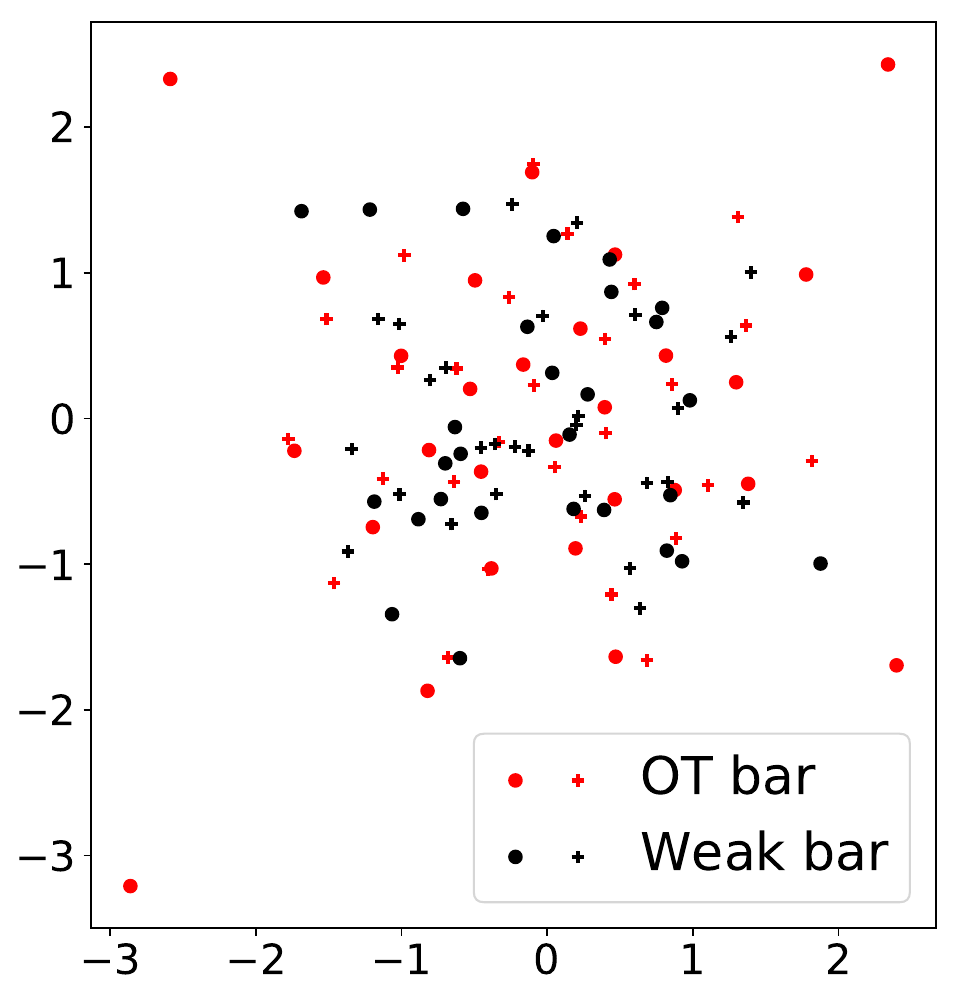} &
\includegraphics[scale=0.18]{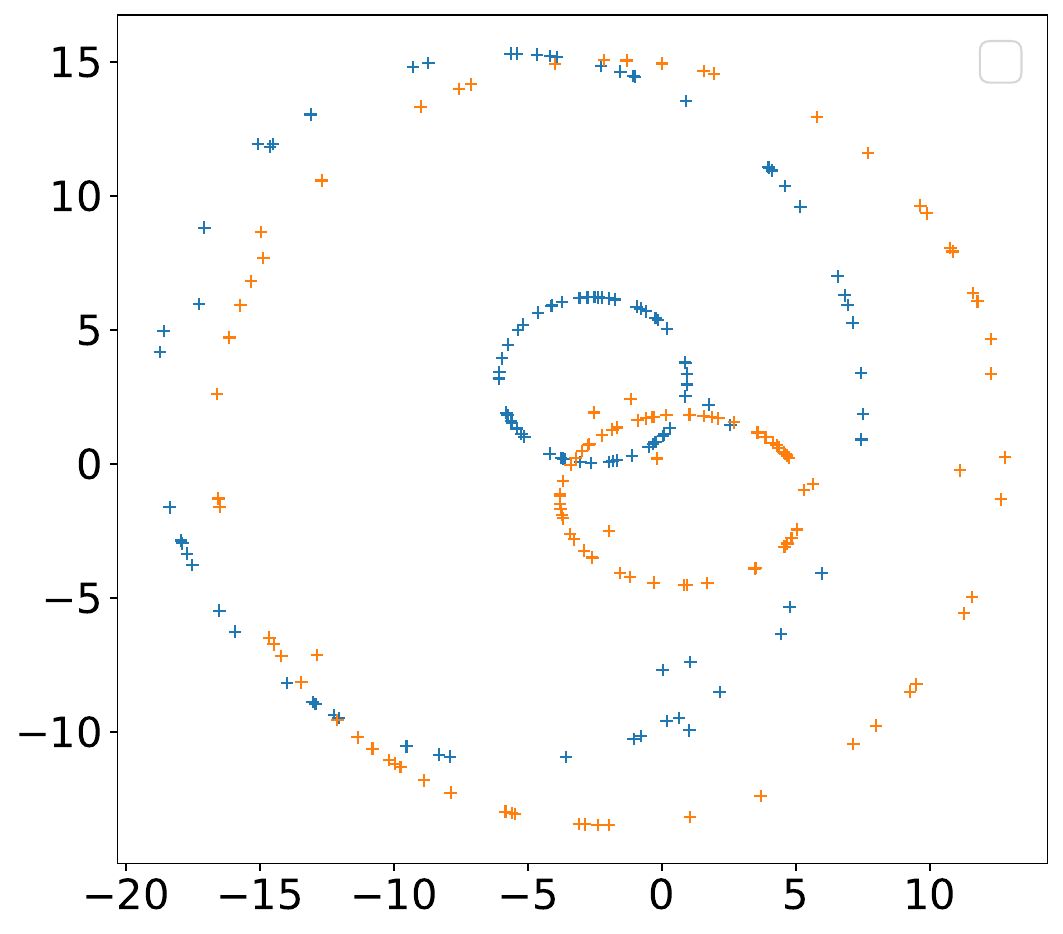} &
\includegraphics[scale=0.18]{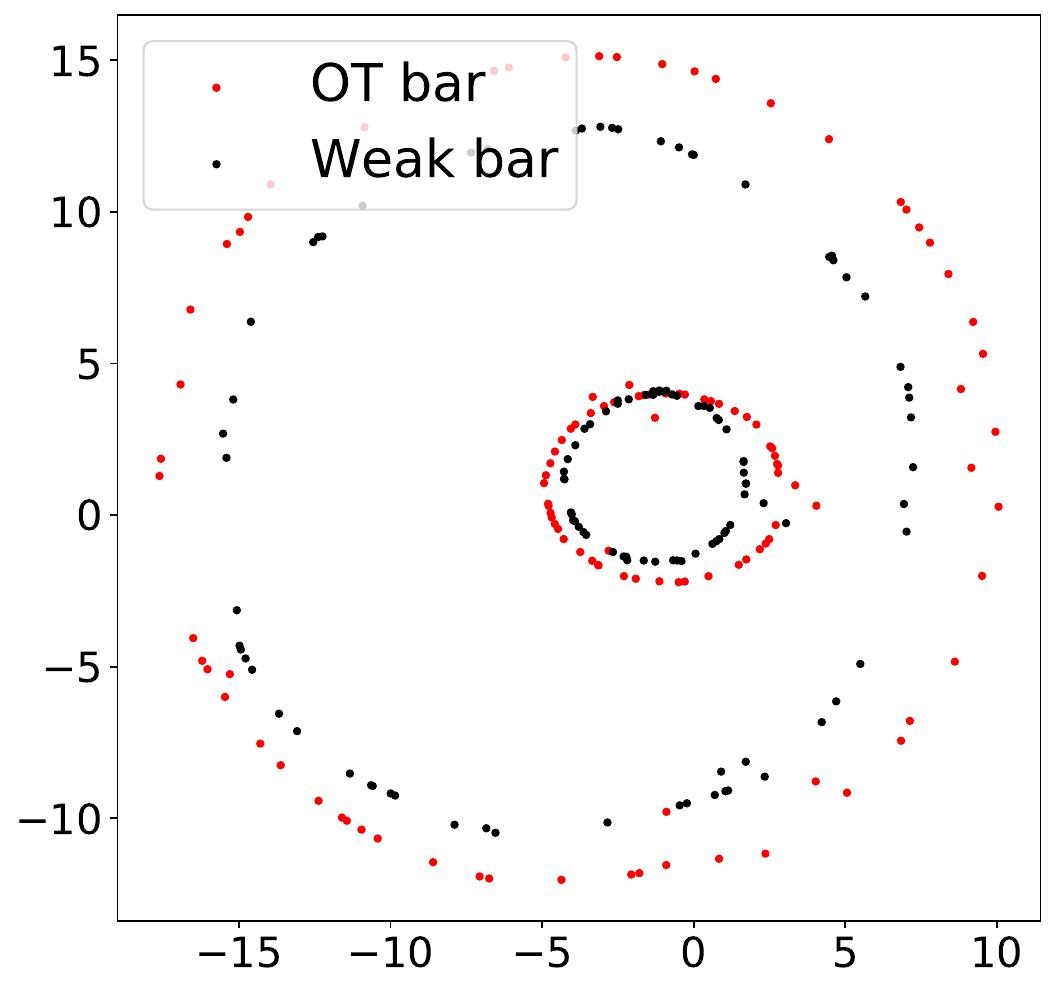} 
\end{tabular}
\caption{(left) Empirical Gaussian distributions and their OWT (black) and OT (red) barycenters for Gaussian observations (crosses) and corrupted observations (dots). (right) Empirical distributions supported on two ellipses and their OWT (black) and OT (red) barycenters.}
\label{fig:robust_outliers}
\end{figure}

\subsection{Synthetic distributions} 
\label{sec:synthetic_distribution}
We implemented the proposed sequential computation of weak barycenters (Algorithm \ref{algo:weakbarpop}) on two examples of synthetic distributions: Gaussians and spirals. In each case, we sampled $r$ observations from a random distribution at each step, and considered $K$ steps (and thus $K$ measures for each case).

{\bf 2D Gaussians} ($r=100$ \& $K=15$). We considered  distributions $\mathcal{N}(m,I)$, with $m$ uniformly distributed on $(-3,3)\times(-5,-5)$ and $I$ the identity matrix. Fig.~\ref{fig:gaussian} (left) shows the empirical distributions together with the OWT and OT barycenters, the weak barycenter being the less spread out as expected. The three remaining plots illustrate the behaviour of the barycenters constructed as stated in Sec.~\ref{sec:computation}. For a small regularisation parameter $\varepsilon$ in Eq.~\eqref{def:sinkhorn}, the OT and OT Sinkhorn barycenters are similar, however, as $\varepsilon$ increases the OT Sinkhorn (OTS) barycenter becomes closer to the weak barycenter and thus even more concentrated, meaning that its samples tend to be closer to each other. Critically, for a very large $\varepsilon$, as the entropy tends to spread the mass in the  regularised optimal plan, the associated barycentric projection will roughly move the mass to the spatial mean of the target distribution's support.
\begin{figure}[ht!]
\hspace{-0.2cm}\begin{tabular}{C{3.05cm} C{3.05cm} C{3.05cm} C{3.05cm}}
\includegraphics[scale=0.2]{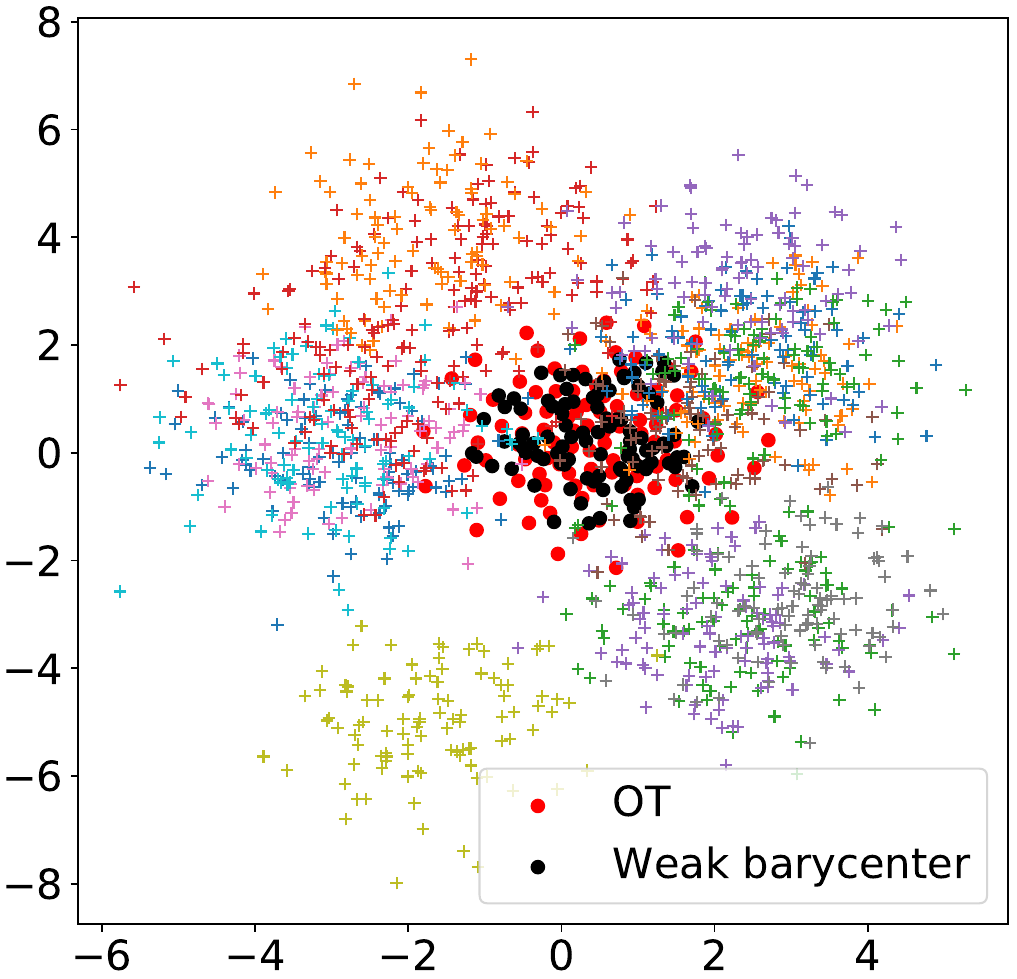} &
\includegraphics[scale=0.2]{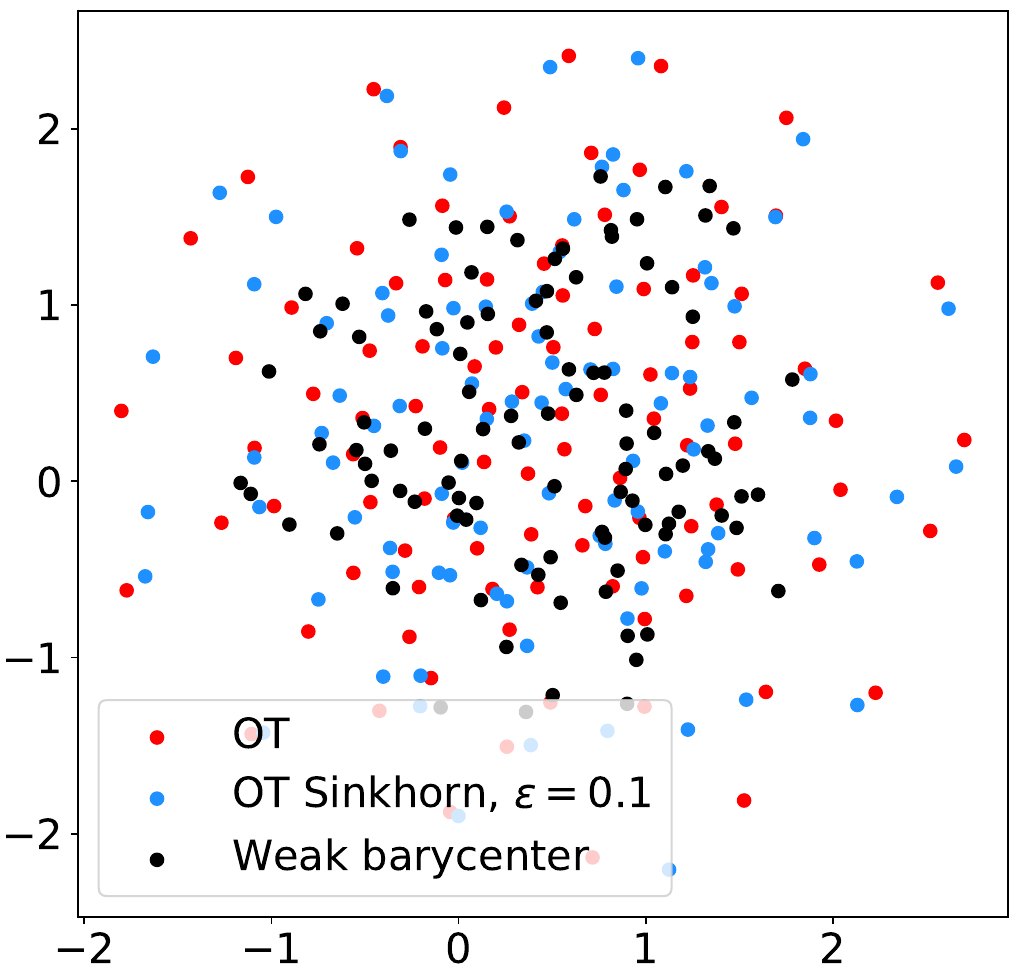} &
\includegraphics[scale=0.2]{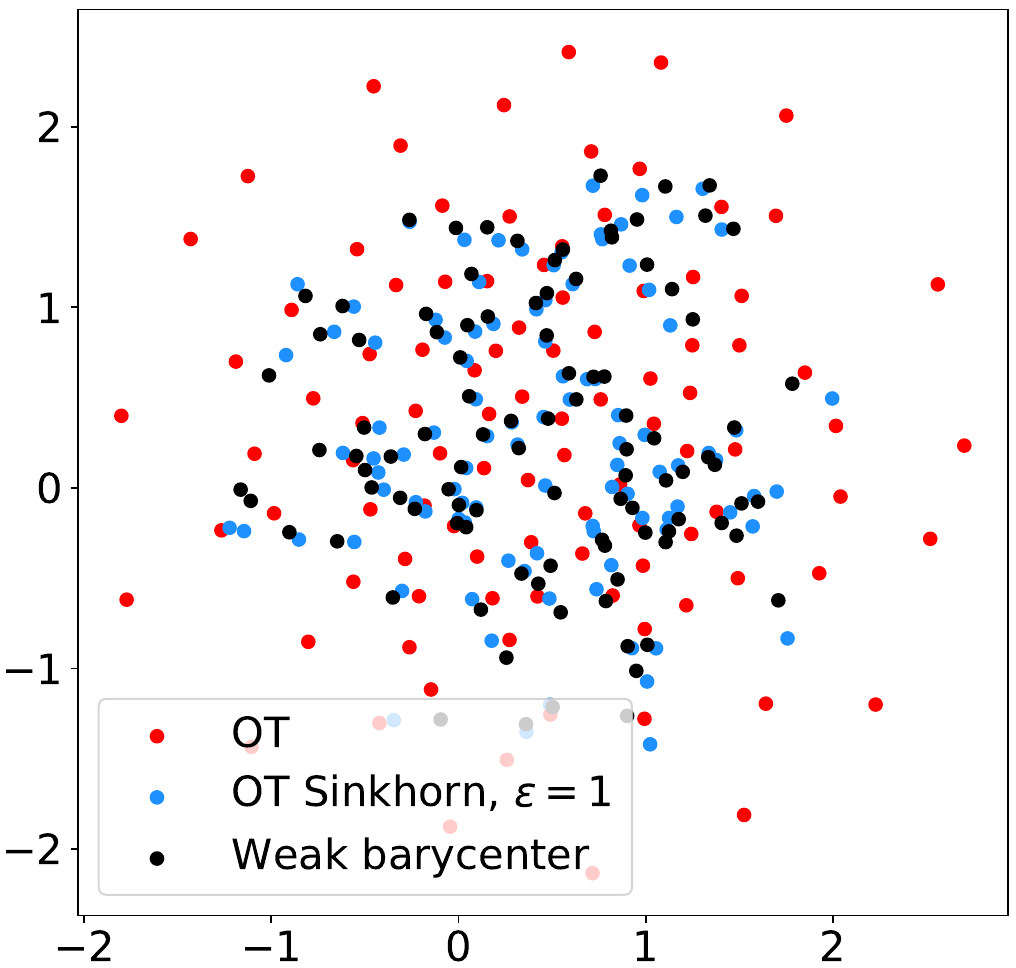} &
\includegraphics[scale=0.2]{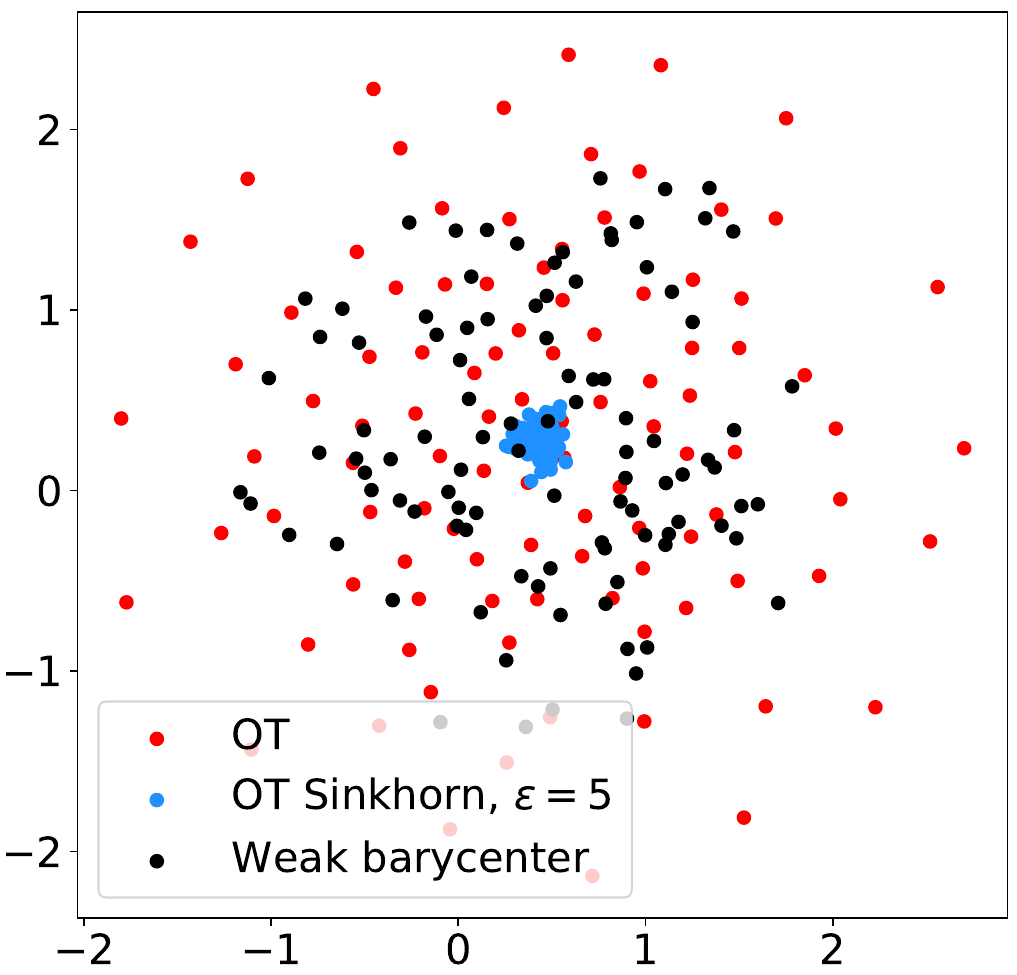} 
\end{tabular}
\caption{(left) Empirical Gaussian distributions and their OWT (black) and OT (red) barycenters computed with Algorithm \ref{algo:weakbarpop}. Illustration of the weak (black), OT (red) and OT Sinkhorn (blue) barycenters for different values of $\varepsilon=0.1,1,5$.}
\label{fig:gaussian}
\end{figure}

\begin{wrapfigure}[9]{r}{8cm}
\vspace{-1.2cm}\begin{tabular}{C{3.3cm} C{3cm}}
\includegraphics[scale=0.2]{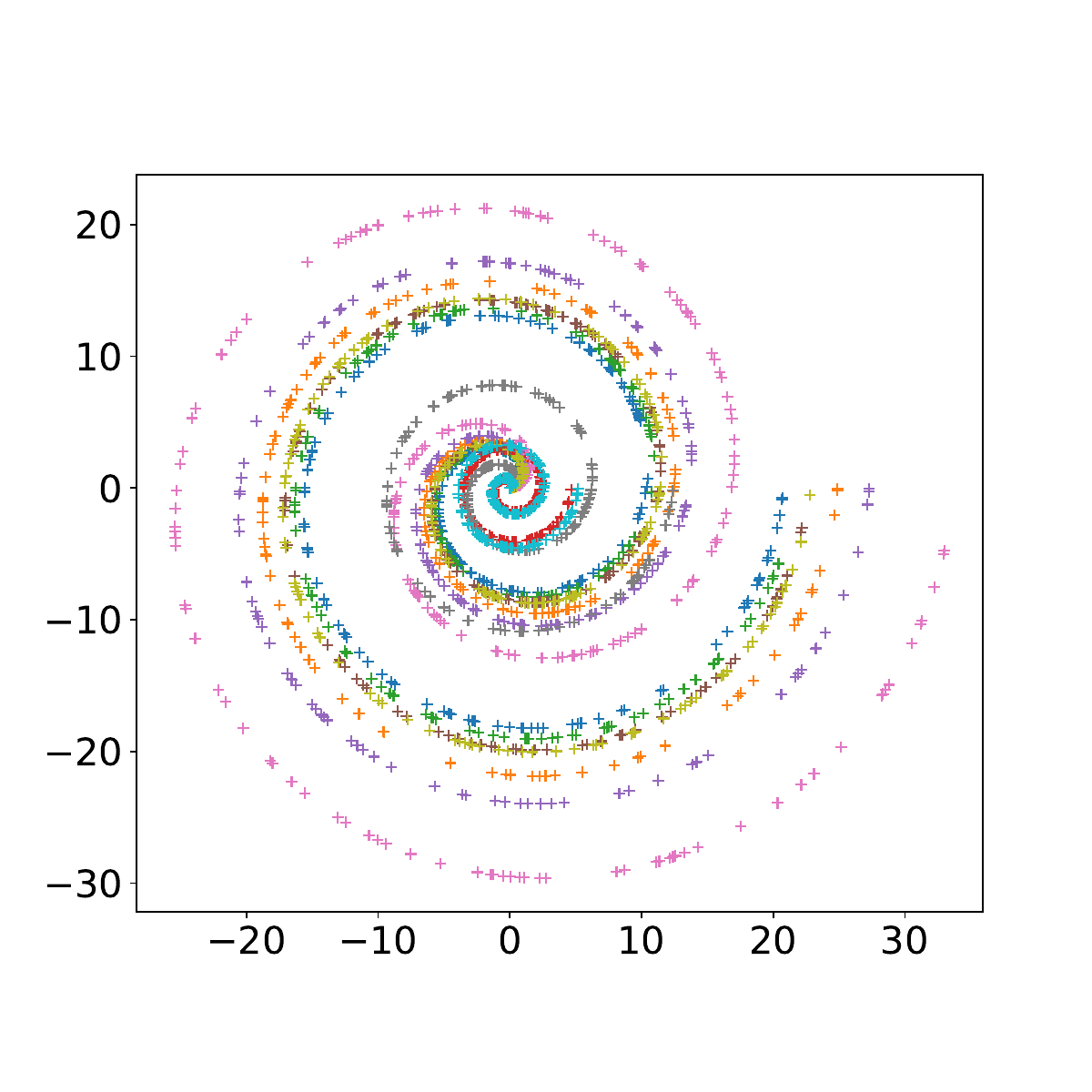} &
\includegraphics[scale=0.2]{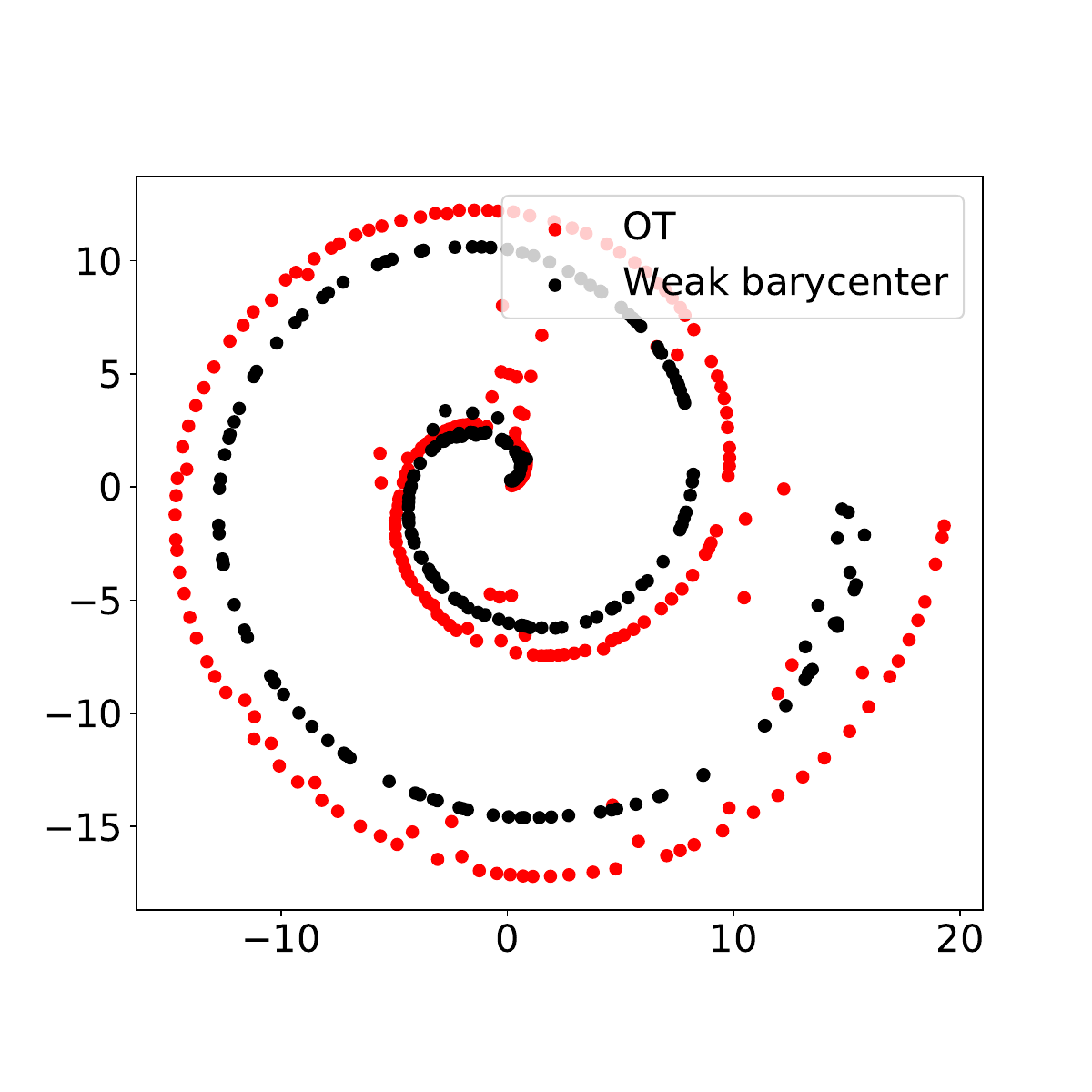} 
\end{tabular}
\vspace{-0.5cm}\caption{(left) Distributions supported on spiral. (right) OWT (black) and OT (red) barycenters computed with Algorithm  \ref{algo:weakbarpop}.}
\label{fig:spiral}
\end{wrapfigure}
\textbf{Spiral distributions.} ($r\in(200,225)$ \& $K=10$). In this experiment, we considered distributions supported on a spiral---see Fig.~\ref{fig:spiral} (left), with random ratio in $(0,3)$. The OT and OWT barycenters are presented in Fig.~\ref{fig:spiral} (right). Again, the weak barycenter seems to better preserve the shape of the spiral than the OT barycenter.
\vspace{1cm}

\subsection{Real-world dataset}
{\bf MNIST dataset.} We considered the well-known MNIST dataset \cite{lecun1998mnist} of grayscale images of handwritten digits. The images, of size $28\times 28$ pixels, can be normalised and thus be interpreted as discrete probability measures supported on a two-dimensional grid of size $28\times 28$. We computed the barycenters with $30$ steps of Algorithm \ref{algo:weakbar} between two digits "$8$", that are noisy versions of the same digit with the aim to produce a more stable barycenter. To produce noisy data, we randomly (Bernoulli $p = 0.1$) move pixels of the prototype digit displayed in Fig.~\ref{fig:mnist_8} (left). Fig.~\ref{fig:mnist_8} (right) shows the barycenters using the OWT, OT, and entropic-OT (for $\varepsilon=1$). This example illustrates how OWT reduces dispersion, so that weak barycenter provides the best uniformly spread results among the barycenters considered, with the two loops of the "$8$" well shaped.

\begin{figure}[h!]
\begin{center}
\begin{tabular}{C{1.9cm} C{1.9cm} C{2.2cm} | C{1.9cm} C{1.9cm} C{1.9cm}}
Prototype "$8$" & 1st noisy "$8$" & 2nd noisy "$8$" & OWT & OT & Regularised OT\\
\includegraphics[scale=0.26]{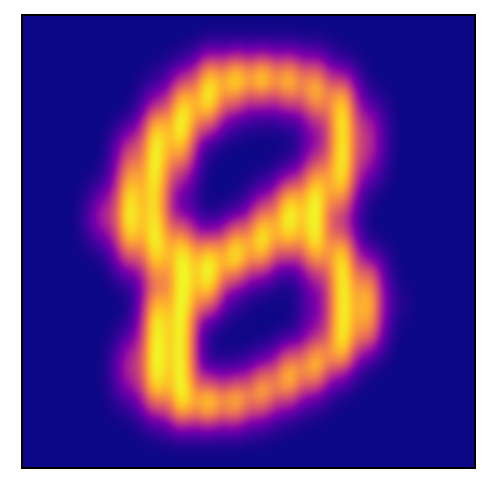} &
\includegraphics[scale=0.26]{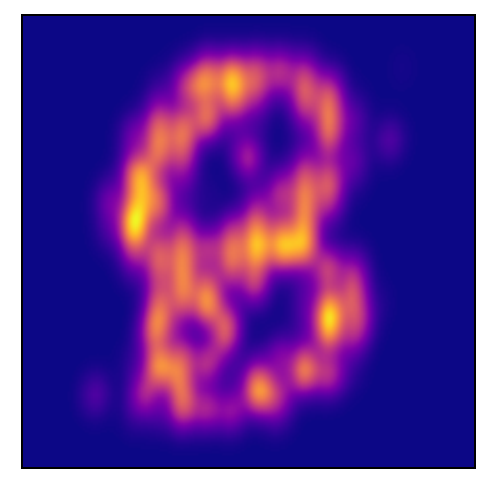} &
\includegraphics[scale=0.26]{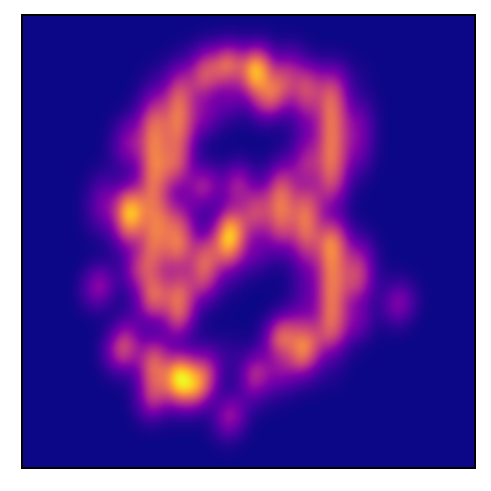} &
\includegraphics[scale=0.26]{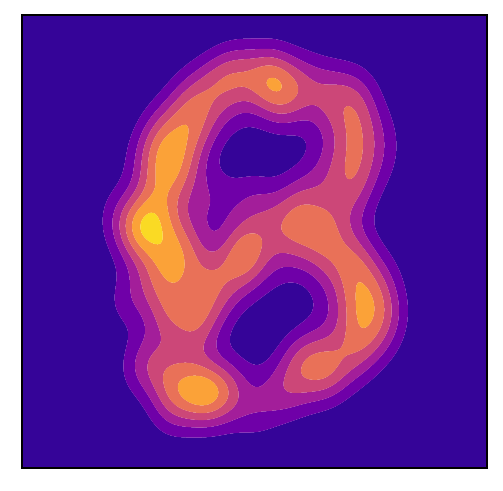}&
\includegraphics[scale=0.26]{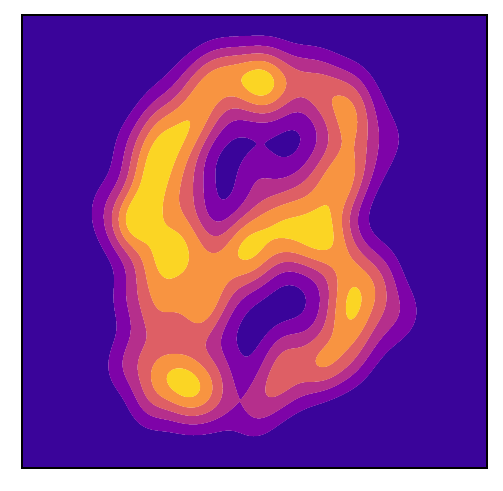}&
\includegraphics[scale=0.26]{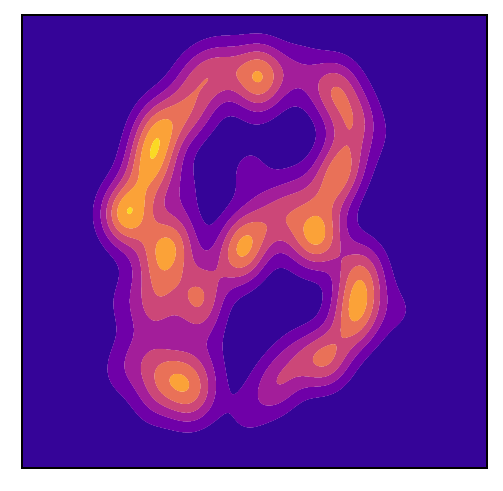}
\end{tabular}
\end{center}
\caption{Digit "$8$" (MNIST). From left to right: Prototype "$8$", first and second noisy versions of the prototype by randomly (Bernoulli $p = 0.1$) moving pixels, three barycenters constructed with Algorithm \ref{algo:weakbar} associted to the OWT plan, an OT plan and the entropy regularised OT plan for $\varepsilon= 1$.}
\label{fig:mnist_8}
\end{figure}

{\bf Cytometry dataset.} In biotechnology, \emph{flow cytometry} is measured through intracellular markers of single cells in a biological sample with the objective of recognising  common features across patients. However, these measurements are often disrupted by acquisition, rather than biological artefacts \cite{hahne2010per}, thus hindering the identification of common features. To address this challenge, we compute the weak barycenter for the forward-scattered light (FSC) and side-scattered light (SSC) cell's markers (using the flowStats package of Bioconductor \cite{gentleman2004bioconductor}). We considered $K=15$ patients and a variable  number of cells per patient between $88$ and $2185$. Fig.~\ref{fig:cytometry} shows the 15 distributions (left) and the computed barycenters (right), thus confirming the ability of the weak barycenter to resolve the alignment of the dataset, while maintaining the expected diamond-shape. Moreover, the advantage of our proposed streaming procedure is fully exploited in this setting, since data from one or several patient can arrive sequentially. Though this setting has been addressed with the Wasserstein barycenter in \cite{bigot2019data}, also in Fig.~\ref{fig:cytometry}, such method required a fixed grid to compute the barycenter unlike  our method, thus revealing the computational simplicity of the weak barycenter. 

\begin{figure}[ht!]
\centering
\begin{tabular}{C{4cm} C{4cm}}
\includegraphics[scale=0.2]{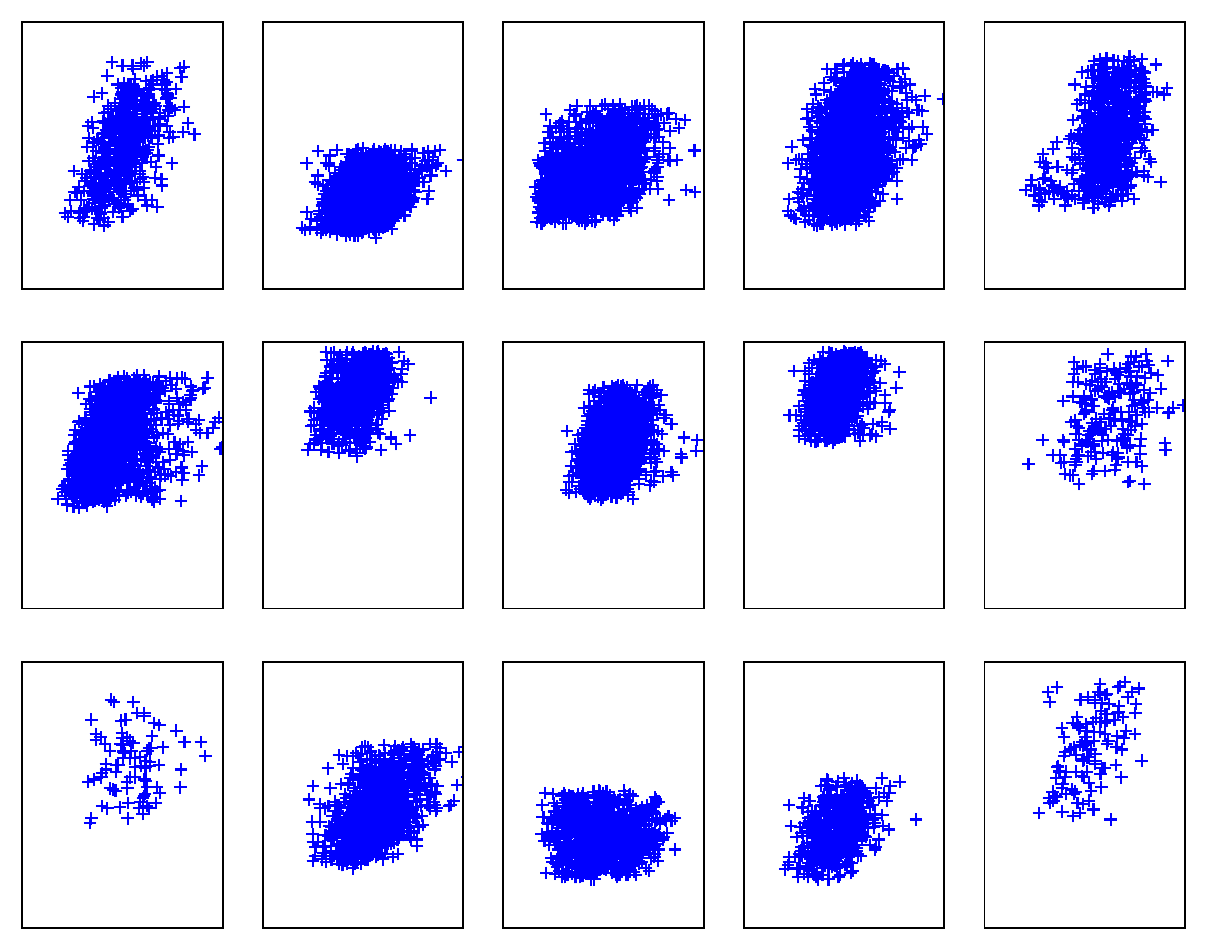} &
\includegraphics[scale=0.2,trim={0 0.4cm 0 0}, clip]{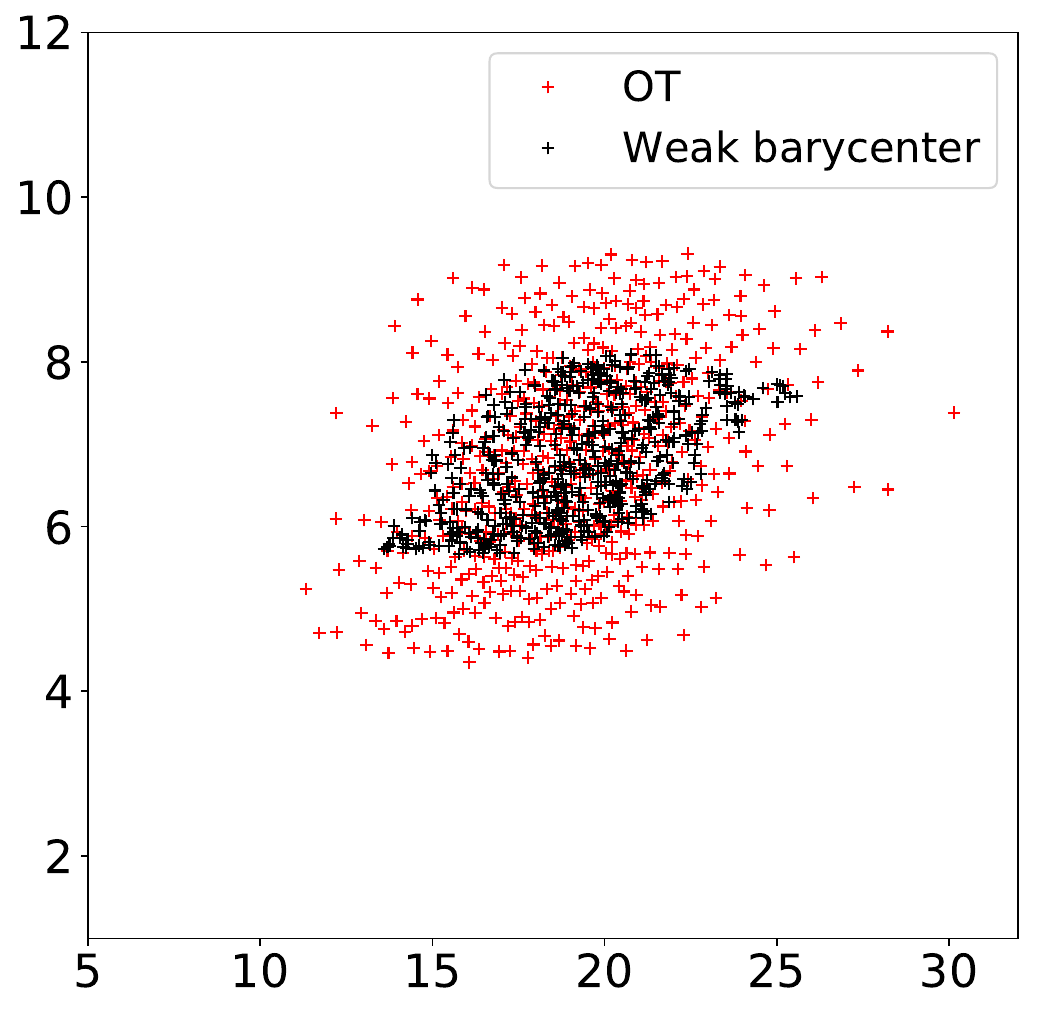}
\end{tabular}
\caption{(left) Cytometry dataset for $n=15$ patients and FSC vs. SSC cell's marker. (right) The weak-barycenter (black) computed with Algorithm \ref{algo:weakbarpop} and the OT barycenter (red). The data are represented with the same axis as the figure of barycenters.}
\label{fig:cytometry}
\end{figure}

\section{Discussion}
\label{sec:conclusion}

We have introduced the weak barycenter, which extracts common geometric information of probability measures on $\R^d$ based on optimal weak transport, and showed that it can be interpreted as a latent variable model. From the fixed-point formulation defined in terms of optimal weak transport maps, irrespective of the regularity assumptions on the measures involved, we developed practical computation via an iterative algorithm with guaranteed convergence. In particular, the proposed algorithms do not require a common grid on the sample space, when processing either observed data  or samples from distributions.
We have also proposed weak barycenters of a possibly infinite population of measures and developed a stochastic procedure for computing it in the streaming data regime where distributions are processes into the weak barycenter as they arrive. This has critical implications for continual-learning methods in the ML community.

Additional studies will focus on deepen the latent variable interpretation of weak barycenters, and its relationship to the aggregate information represented by the Wasserstein barycenter.  Also, we identify two relevant theoretical aspects for further research: i) to exhibit general conditions on the family of input measures (or on the law of the population) for the existence of weak barycenters that are not Dirac masses; and ii) to provide conditions on those input measures  for a "maximal''  weak barycenter (in terms of convex ordering) to exist when $d\geq 2$,  among all the solutions of the weak barycenter problem (and, if possible, a way of constructing it by regularisation most probably).  The statistical behaviour of the weak barycenter can also  be  investigated, in particular when constructed from large empirical random samples of given distributions. Lastly, the weak population barycenter could also be  used to construct a predictive posterior in the context of Bayesian learning, as was done for Wasserstein barycenters in \cite{rios1805bayesian}.

%\section*{Acknowledgments}

\textbf{Acknowledgments.} We thank Julio Backhoff-Veraguas for his valuable insight during the writing of this paper. This work was funded by ANID grants: AFB170001 \& ACE210010 (CMM), FB0008 (AC3E), Fondecyt-Postdoctorado \#3190926 and Fondecyt-Iniciación \#1210606.

{\small

}

\appendix

\section{Additional mathematical background}
\subsection{$p$-Wasserstein distance}
\label{sec:Wass_dist}

 For $p=2$ and $\mu,\nu\in\PP_2(\R^d)$ such that $\mu$ is absolutely continuous (\emph{a.c.}) with respect to Lebesgue measure, the unique optimal plan is concentrated on the graph of a measurable map and Eq.~\eqref{def:OT} boils down to Monge's problem:
\begin{equation}
\label{def:Monge}
W_2(\mu,\nu) = \left(\underset{T\in\T(\mu,\nu)}{\min} \int_{\R^d\times\R^d}\Vert x-T(x)\Vert^2 \d\mu(x)\right)^{1/2},
\end{equation}
where $\T(\mu,\nu)$ is the set of measurable functions $T:\R^d\to\R^d$ such that $\nu=T\#\mu$. The \emph{pushforward} operator $\#$ is defined such that for any measurable set $B\subset\R^d$, we have $\nu(B)=\mu(T^{-1}(B))$. In such a case, the optimal measurable map $T$ in Eq.~\eqref{def:Monge} is uniquely defined (see \textit{e.g.} Th. 9.4 in \cite{villani2008optimal}) and called \emph{Monge map}.

\subsection{Continuity of  $V$}
\label{sec:continueV}
\begin{theorem}[\cite{backhoff2020weak}, Theorem 1.5]
\label{th:continuityV}
Let $(\mu_n)_n\subset\PP_2(\R^d)$ and $(\nu_n)_n\subset\PP_1(\R^d)$. Then
\[\left\{\begin{aligned}
& \mu_n \rightarrow \mu\quad\mbox{in} \ W_2\\
& \nu_n \rightarrow \nu\quad\mbox{in} \ W_1\\
\end{aligned}
\right.
\quad\Longrightarrow \quad\lim_n V(\mu_n | \nu_n) = V(\mu |\nu).
\]
\end{theorem}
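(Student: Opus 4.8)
The plan is to establish the two one-sided bounds $\liminf_n V(\mu_n\,|\,\nu_n)\ge V(\mu\,|\,\nu)$ and $\limsup_n V(\mu_n\,|\,\nu_n)\le V(\mu\,|\,\nu)$. The backbone is the reformulation of Theorem~\ref{th:1.4Julio}: $V(\mu\,|\,\nu)=\inf_{\eta\leq_c\nu}W_2^2(\mu,\eta)=W_2^2(\mu,S_\mu^\nu\#\mu)$, together with the facts that the optimal barycentric projection $S_\mu^\nu=\nabla\psi$ is the gradient of a convex function with $1$-Lipschitz gradient, and that $\eta\leq_c\nu$ forces $\int y\,d\eta=\int y\,d\nu$.

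For the lower bound, fix a subsequence realising the $\liminf$ and write $V(\mu_n\,|\,\nu_n)=W_2^2(\mu_n,\eta_n)$ with $\eta_n=\nabla\psi_n\#\mu_n$, the maps $\nabla\psi_n$ being $1$-Lipschitz. Normalising $\psi_n(0)=0$, one notes that $\int y\,d\eta_n=\int y\,d\nu_n\to\int y\,d\nu$, which combined with the $W_2$-boundedness of $\{\mu_n\}$ and the common Lipschitz constant controls $\{\nabla\psi_n(0)\}$; Arzel\`a--Ascoli then extracts a further subsequence along which $\psi_n\to\psi$ and $\nabla\psi_n\to\nabla\psi$ locally uniformly, with $\psi$ convex and $\nabla\psi$ $1$-Lipschitz. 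Since $\|\nabla\psi_n(\cdot)\|$ grows at most linearly, dominated convergence upgrades this (using $\mu_n\to\mu$ in $W_2$) to $\eta_n\to\eta:=\nabla\psi\#\mu$ in $W_2$. Convex order is stable under these limits --- one passes to the limit in $\int\varphi\,d\eta_n\le\int\varphi\,d\nu_n$ for convex Lipschitz $\varphi$ using uniform integrability of $\{\eta_n\}$ and $\{\nu_n\}$, then exhausts an arbitrary convex function from below by affine minorants --- so $\eta\leq_c\nu$, whence $V(\mu\,|\,\nu)\le W_2^2(\mu,\eta)=\lim_n W_2^2(\mu_n,\eta_n)=\lim_n V(\mu_n\,|\,\nu_n)$.

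For the upper bound, take the optimal plan $\pi^{\mu,\nu}$ for the limit problem (Theorem~\ref{th:1.2Julio}), with barycentric projection $\nabla\psi$, and build a competitor $\pi_n\in\Pi(\mu_n,\nu_n)$ by gluing a $W_2$-optimal coupling of $(\mu_n,\mu)$, the plan $\pi^{\mu,\nu}$, and a coupling of $(\nu,\nu_n)$ into a Markov chain $X_n - X - Y - Y_n$. Conditional Jensen (enlarging $\sigma(X_n)$ to $\sigma(X_n,X)$) together with the Markov property gives $V(\mu_n\,|\,\nu_n)\le\E\|X_n-\E[Y_n\,|\,X]\|^2$; writing $\E[Y_n\,|\,X]=\E[\tau_n(Y)\,|\,X]$ with $\tau_n(y)=\E[Y_n\,|\,Y=y]$ and $\E[Y\,|\,X]=\nabla\psi(X)$, the triangle inequality in $L^2$ yields $\sqrt{V(\mu_n\,|\,\nu_n)}\le W_2(\mu_n,\mu)+\sqrt{V(\mu\,|\,\nu)}+\|\,\E[\,Y-\tau_n(Y)\,|\,X\,]\,\|_{L^2}$, and the last term is bounded by $\|Y-\tau_n(Y)\|_{L^2}$.

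The main obstacle is precisely the control of this last term: the target convergence is only in $W_1$, so $\|Y-\tau_n(Y)\|_{L^1}\le W_1(\nu,\nu_n)\to 0$, but the quadratic norm is not automatically small and $\nu_n$ need not even belong to $\PP_2(\R^d)$, so that the bound above may a priori be vacuous. I would handle this by a truncation/approximation argument: $W_1$-convergence makes $\{\nu_n\}$ uniformly integrable, hence after projecting onto a ball $B_R$ one has $\sup_n W_1(\nu_n,p_R\#\nu_n)\to 0$ as $R\to\infty$, and on the truncated marginals $p_R\#\nu$, $p_R\#\nu_n$ the optimal $W_1$-coupling is supported in a set of diameter $2R$, so the corresponding barycentric-projection error is $O\!\big(\sqrt{R\,W_1(p_R\#\nu,p_R\#\nu_n)}\big)\to 0$ for each fixed $R$, and the competitor now has finite $W_2$-distance to $\mu_n$; threading the truncation through the construction and letting $n\to\infty$ before $R\to\infty$ absorbs the (uniformly small) truncation errors. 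Combining the two semicontinuity bounds gives $\lim_n V(\mu_n\,|\,\nu_n)=V(\mu\,|\,\nu)$. I expect the bookkeeping in this truncation step --- reconciling the convex-ordering constraint, which truncation does not respect, with the quadratic cost --- to be the technically delicate part, and it is where the hypothesis that the target convergence be in $W_1$ (rather than something weaker) is genuinely used.
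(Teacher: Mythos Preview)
The paper does not prove this statement. Theorem~\ref{th:continuityV} is recorded in Appendix~\ref{sec:continueV} purely as a background result imported from \cite{backhoff2020weak}, and is then used as a black box in the proofs of Proposition~\ref{prop:existence_weak_bar}, Lemma~\ref{lemma:Smeasurable}, Proposition~\ref{prop:existence_pop_bar} and Proposition~\ref{prop:convergence_finite}. There is therefore no in-paper proof to compare your attempt against.

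On the substance of your argument: the lower-semicontinuity half is sound. The $1$-Lipschitz property of the maps $\nabla\psi_n$ together with the mean identity $\int y\,d\eta_n=\int y\,d\nu_n$ (forced by $\eta_n\leq_c\nu_n$) gives enough control for Arzel\`a--Ascoli, and the passage of convex order to the limit under $W_1$-convergence of $(\nu_n)$ and $W_2$-convergence of $(\eta_n)$ is correct.

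For the upper bound, the gluing/Markov-chain construction and the conditional Jensen step are fine, and you have correctly isolated the real obstruction: with only $\nu_n\in\PP_1(\R^d)$, the quantity $\|\E[Y-Y_n\,|\,X]\|_{L^2}$ need not be finite, and bounding it by $\|Y-Y_n\|_{L^2}$ is vacuous. Your truncation sketch, however, is not yet a proof. Projecting onto $B_R$ changes the second marginal, so the object you build lies in $\Pi(\mu_n,p_R\#\nu_n)$ rather than $\Pi(\mu_n,\nu_n)$ and is no longer an admissible competitor for $V(\mu_n\,|\,\nu_n)$; you then need an extra step comparing $V(\mu_n\,|\,\nu_n)$ with $V(\mu_n\,|\,p_R\#\nu_n)$, uniformly in $n$, which is essentially a special case of the very continuity you are proving. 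You flag this yourself (``reconciling the convex-ordering constraint, which truncation does not respect''), but the sketch does not close the loop. This is a genuine, if technical, gap; one standard way to avoid it---and the route taken in \cite{backhoff2020weak}---is to argue through the dual formulation of $V$, where the $W_1$-convergence of $\nu_n$ acts linearly on the (convex) dual potentials and no $L^2$ control of $Y_n$ is needed.
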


\subsection{On the barycentric projection}
\label{sec:bar_projection}
For a given transport plan $\pi\in\Pi(\mu,\nu)$, with $\mu,\nu\in\PP_2(\R^d)$, the associated barycentric projection is given by $$S:x\mapsto \int_{\R^d}y\d\pi_x(y).$$
First, for each  $x\in \R^d, S(x)$ realises $\min_z \E_{Y \sim \pi_x} (\Vert z-Y\Vert^2)$. Second, this barycentric map $S$ is actually optimal for the Monge's problem Eq.~\eqref{def:Monge} between $\mu$ and $S\#\mu$, by Theorem \ref{th:1.4Julio}.

\begin{figure}[h!]
\centering
\begin{tabular}{cc}
\includegraphics[scale=0.32, trim={0 1.5cm 0 2cm}, clip]{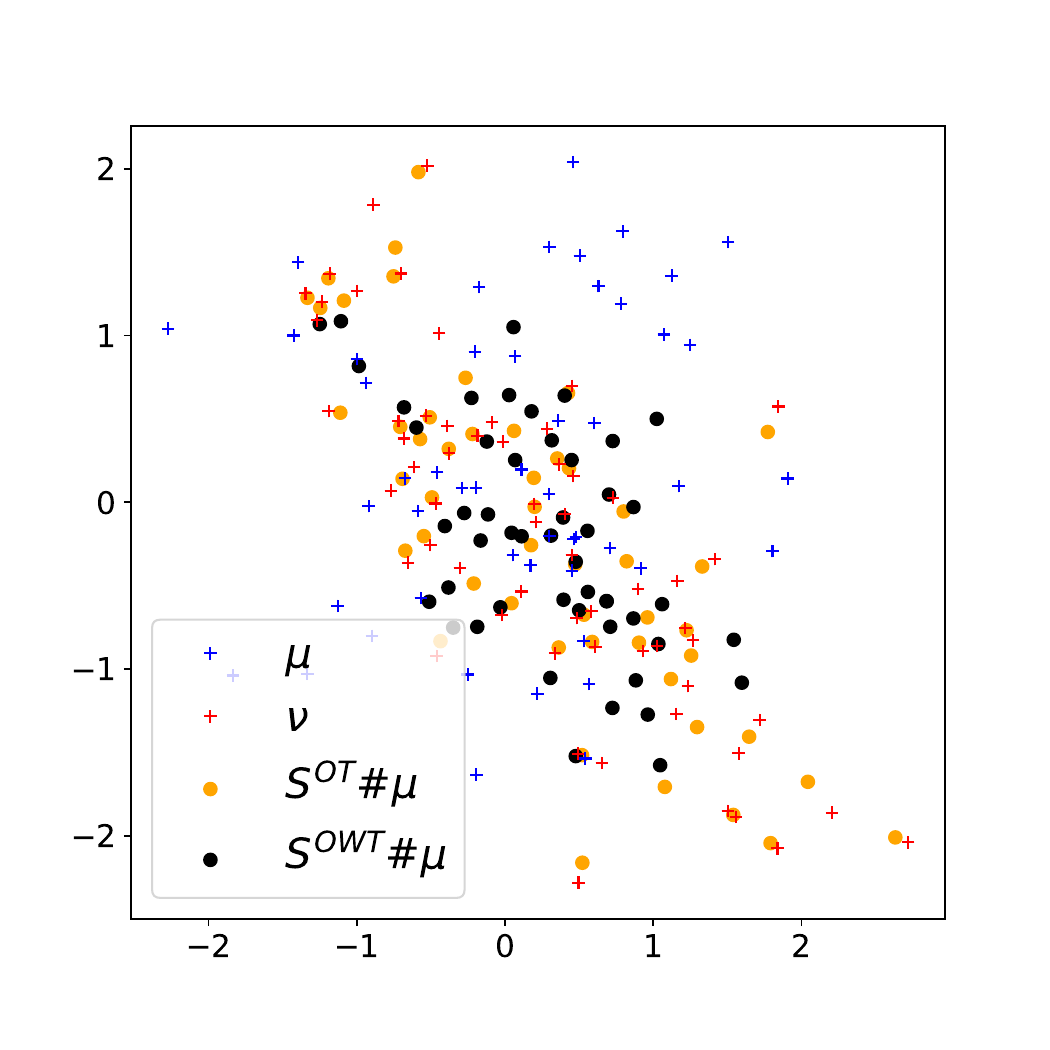} & \includegraphics[scale=0.32]{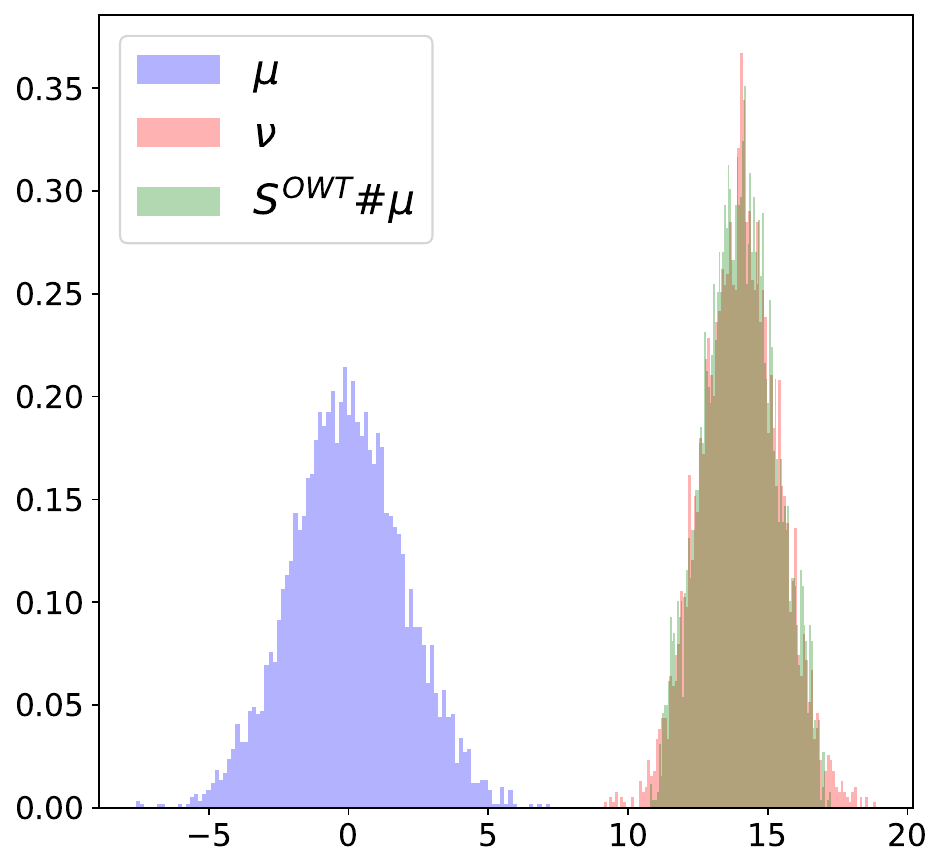}
\end{tabular}
\caption{Example of pushforward measures constructed from barycentric projections for two measures $\mu$ and $\nu$ in two dimensions (left) and one dimension (right).}
\label{fig:example_bary_proj}
\end{figure}
We next illustrate the differences between the optimal barycentric map and a barycentric map constructed from an OT plan in the classical Kantorovich formulation in Eq.~\eqref{def:OT}. We sampled $r=50$ observations $X_i$ and $m=60$ observations $Y_i$, each sets from a 2D Gaussian. We then defined the source and target distributions as $\mu = \frac{1}{r}\sum\delta_{X_i}$ and $\nu= \frac{1}{m}\sum\delta_{Y_i}$ respectively. Figure \ref{fig:example_bary_proj}(left),  shows these discrete distributions together with the pushforward measures $S^{OWT}\#\mu$ and $S^{OT}\#\mu$ constructed from an optimal weak plan $\pi^{OWT}$ and an optimal plan $\pi^{OT}$ respectively. The measure $S^{OT}\#\mu$ reasonably fits the target distribution $\nu$, since when $\mu$ is \emph{a.c.}, $S^{OT}\#\mu=\nu$. In particular, if $\mu$ and $\nu$ had the same number of points,  $S^{OT}\#\mu$ would have matched $\nu$. Regarding the measure $S^{OWT}\#\mu$, recall that $V(\mu | \nu) = \inf_{\eta\leq_c\nu} W_2^2(\mu,\eta) = W_2^2(\mu,S^{OWT}\#\mu)$, and therefore $S^{OWT}\#\mu\leq_c\nu$. Lastly, we have $W_2^2(\mu,\nu)=0.85$, and $V(\mu | \nu) = 0.52 \leq W_2^2(\mu,S^{OT}\#\mu)=0.81$ as expected.

In Figure \ref{fig:example_bary_proj}(right), we present an example in one dimension, where we sample $4000$ observations from $\NN(0,2)$ (resp. $\NN(14,1.4)$) to construct the empirical source measure $\mu$ (resp. empirical target measure $\nu$). The distributions $\mu$ and $\nu$ are presented in the form of histograms. The distribution resulting from an optimal weak transport map $S_{\mu}^{\nu}\#\mu$ is in convex order with $\nu$.

\section{Proofs of Section \ref{sec:weak_bar}}
\label{sec:proofs_weakbar}

\begin{proof}[Proof of Proposition \ref{prop:existence_weak_bar}]
Let $(\mu_m)_m\subset\PP_2(\R^d)$ be a minimising sequence of $F(\mu):=\sum_{i=1}^n\lambda_i V(\mu | \nu_i)$ and let $M<\infty$ be such that $F(\mu_m) \leq M$ for all $m$. Then $(\mu_m)_m$ is tight. Indeed, 
\begin{align*}
\int \Vert x\Vert^2\d\mu_m(x) & \leq 2\sum_{i=1}^n \lambda_i \inf_{\pi\in\Pi(\mu_m,\nu_i)} \left[\int\Vert x-\int y \d\pi_x(y)\Vert^2 \d\mu_m(x)+\int \Vert \int y \d\pi_x(y)\Vert^2 \d\mu_m(x)\right]\\
& \leq  2M + 2\iint\Vert y \Vert^2 \d\pi_{x}(y)\d\mu_m(x) \leq 2M + 2\sum_{i=1}^n \lambda_i \int \Vert y \Vert^2\d\nu_i(y),
\end{align*}
where the second inequality  comes from Jensen's inequality.
By Prokhorov's theorem, there exists a subsequence still denoted $(\mu_m)_m$ that weakly converges toward a probability measure $\mu^* $. Recall that $\mu$ belongs to $\PP_2(\R^d)$ since $\Vert\cdot\Vert^2$ is a l.s.c. function bounded from below and therefore $\int \Vert x\Vert^2 \d\mu(x)\leq \liminf_{m}\int \Vert x\Vert^2 \d\mu_m(x)<\infty$. By Theorem \ref{th:continuityV}, we have that
\[ F(\mu^*)=\sum_{i=1}^n \lambda_i \lim_m V(\mu_m | \nu_i) =  \lim_m F(\mu_m) = \min_{\mu\in \PP_2(\R^d)} F(\mu), \]
thus $F$ admits at least a minimiser.
\end{proof}

\begin{proof}[Proof of Lemma \ref{lemma:diracweak}]
By Strassen's theorem, we can build  $X'\sim \mu'$ and $X\sim \mu$   in the same probability space, in such a way that $\E(X\vert X')= X'$.   
Denote by 
$\eta^*$ the law $\eta$ attaining $ \inf_{\eta \leq_c \nu} W_2^2(\mu,\eta) $, and  let  $(X,Z)=(X, S_{\mu}^{\nu}(X))  $  be the realisation of the optimal  coupling  for $W_2$  of $ \mu $ and $ \eta^*$, which can also be constructed in the same probability space due to its specific form.  Then, by (the conditional version of) Jensen's inequality we have
$$
V(\mu |\nu)  =  W_2^2(\mu,\eta^*) = \E \left[  \E \left( \Vert X - S_{\mu}^{\nu}(X) \Vert^2 \vert X' \right)  \right] \geq \E \Vert  X'  - \E(  S_{\mu}^{\nu}(X)\vert X')  \Vert^2. 
$$
Recall  now that $S_{\mu}^{\nu}(X)= \E(Y\vert X)$, where the conditional expectation is a measurable function only of $X$,  constructed  from the joint law $\pi^{\mu,\nu}$. Thus,  for every nonnegative convex function $\phi$,  by applying twice Jensen's inequality we get
$$\E \phi( \E(  S_{\mu}^{\nu}(X)\vert X') ) \leq \E \phi(  S_{\mu}^{\nu}(X) ) =  \E  \phi( \E(Y\vert X) ) \leq \E \phi (Y), $$
where $Y\sim \nu$.  That is to say, the law $\eta$ of the r.v.  $ \E(  S_{\mu}^{\nu}(X)\vert X') $ satisfies $\eta \leq_c \nu$. 
It follows that
$$ V(\mu |\nu)  \geq  W_2^2(\mu' ,\eta)  \geq \inf_{\tilde{\eta} \leq_c \nu}W_2^2(\mu',\tilde{\eta})= V(\mu' | \nu).$$
This immediately implies that $\mu'$ is a weak barycenter whenever $\mu$ is.  In particular,  if $\mu$ is a weak barycenter, then so is the Dirac mass supported on its mean. We then deduce that the set of minimisers of $\sum_{i} \lambda_i V(\mu | \nu_i)$ admits at least a Dirac  mass $\delta_{\omega}$ and
$$V(\delta_{\omega} | \nu_i)=\int \Vert x-\int y\d\pi_{x}(y)\Vert^2\d\delta_{\omega}(x) = \Vert {\omega} - \E Y_i\Vert^2.$$
This implies that $\inf_{\omega}\sum\lambda_i V(\delta_{\omega} | \nu_i)$ is uniquely attained for $\bar{\omega}=\sum\lambda_i \E Y_i $. 
\end{proof}

\begin{proof}[Proof of Proposition \ref{prop:bary_charact}]
A probability measure $\mu$ is a weak barycenter if and only if $\sum_{i=1}^n \lambda_i V(\mu | \nu_i)$ is equal to the r.h.s. of \eqref{eq:obj_function}. Let us suppose first that $\E Y_i = m$ for all $1\leq i\leq n$, in which case the infimum in \eqref{eq:weak_barycenter} is equal to $0$. Then, $\mu$ is a weak barycenter if and only if $\mu\leq_c \nu_i$ for all $1\leq i\leq n$ by definition of weak optimal transport \eqref{def:weak_transport}, since in this case $V(\mu | \nu_i)=0$. The general case can be reduced to the previous one, noting that 
\begin{align*} 
V(\mu |\nu_i)  & = \inf_{\eta \leq_c \nu_i} W_2^2(\mu,\eta) \\
 & = \inf_{\eta \leq_c \hat{\nu}_i} W_2^2(\hat{\mu},\eta) + \| \E_{\mu}(X)- \E_{\nu_i}(Y_i) \|^2 \\
 & = V(\hat{\mu } |\hat{\nu}_i)+   \| \E_{\mu}(X)- \E_{\nu_i}(Y_i) \|^2,  
 \end{align*}
 so that minimising  $ \sum_{i=1}^n \lambda_i V(\mu | \nu_i) $  over $\mu\in {\cal P}(\R^d) $ is equivalent to minimising $ \sum_{i=1}^n \lambda_i  V(\mu'  | \hat{\nu}_i)  + \sum_{i=1}^n \lambda_i  \Vert {\omega} - \E_{\nu_i} Y_i\Vert^2  $ over the  two independent parameters  $(\omega,\mu')$, with $\omega \in \R^d$  and $\mu'\in {\cal P}(\R^d) $ centered, taking $\mu$ as the law of $X=X'+ \omega$ with $X'\sim \mu'$. 
\end{proof}

\begin{proof}[Proof of Lemma \ref{lemma:weakVSwass}]
Thanks to Prop. 3.3 in \cite{alvarez2016fixed}, we have that the (unique) Wasserstein barycenter verifies $\tilde{\mu}= \left(\sum_{i=1}^n \lambda_i T^{\nu_i}_{\tilde{\mu}}\right)\#\tilde{\mu}$ where $T^{\nu_i}_{\tilde{\mu}}$ is the optimal Monge map between $\tilde{\mu}$ and $\nu_i$ (see \eqref{def:Monge}). Moreover, from Proposition \ref{prop:fixedpoint}, a weak barycenter $\bar{\mu}$ also checks $\bar{\mu} = \left(\sum_{i=1}^n \lambda_i S^{\nu_i}_{\bar{\mu}}\right)\# \bar{\mu}$, where $S^{\nu_i}_{\bar{\mu}}$ is the optimal barycentric projection associated to $\bar{\pi}^i$ for $V(\bar{\mu} | \nu_i)$. Therefore, by Jensen's inequality applied twice,
\begin{align*}
W_2^2&(\bar{\mu}, \tilde{\mu}) \leq \iint \Vert x-y\Vert^2 \d\bar{\mu}(x)\d\tilde{\mu}(y)
= \iint \Vert \sum_{i=1}^n \lambda_i S^i_{\bar{\mu}}(x)-\sum_{i=1}^n \lambda_i T^{i}_{\tilde{\mu}}(y)\Vert^2 \d\bar{\mu}(x)\d\tilde{\mu}(y)\\
& \leq \sum_{i=1}^n \lambda_i\iiint \Vert T^{i}_{\tilde{\mu}}(y) - z\Vert^2 \d\bar{\pi}^i_x(z)\d\bar{\mu}(x)\d\tilde{\mu}(y)
 = \sum_{i=1}^n\lambda_i \iint \Vert T^{i}_{\tilde{\mu}}(y) - z\Vert^2 \d\tilde{\mu}(y) \d\bar{\pi}^i(x,z)\\
 &= \sum_{i=1}^n\lambda_i \iint \Vert y - z\Vert^2 \d\nu_i(y) \d\nu_i(z)
 =2\sum_{i=1}^n\lambda_i \iint \left(\E \Vert Y_i\Vert^2-\Vert\E Y_i\Vert^2\right).
\end{align*}
\end{proof}

%\section{Proofs of Section \ref{sec:latent_variable}}
%\label{sec:proofs_latent}

\begin{proof}[Proof of Theorem \ref{theo:latent}]
  Observe first that, by Theorem \ref{th:1.4Julio}  and Strassen's theorem, solving the OWT problem \eqref{def:weak_transport}  provides a  unique (in law) coupling of three random variables $(X,Y,Z)$ such that:
\vspace{-0.7em}\begin{itemize}
\setlength\itemsep{0em}
\item[i)]  $(X,Y) $ has joint law $\pi^{\mu,\nu}$; in particular $X$ and $Y$ have the laws $\mu$ and $\nu$ respectively, 
\item[ii)] $Z= S_{\mu}^{\nu}(X) = \E( Y\vert X)$ a.s., it has law $\eta^*$ and  it is optimally coupled to $X$ in the sense of the optimal transport problem \eqref{def:OT},
\item[iii)]  $(Z,Y)$ is a martingale, that is $\E( Y\vert Z)=Z$ a.s.. 
\end{itemize}
\vspace{-0.7em}Bringing all together we get the decomposition: 
\begin{equation}\label{eq:YZYZ}
Y= Z + Y -Z = S_{\mu}^{\nu}(X) + Y- \E(Y\vert  X ). 
\end{equation}
Now, by Lemma \ref{lemma:diracweak}, if $X\sim \mu$ then the Dirac mass $\delta_{\E X}$ is a weak barycenter too. Thus we have on one hand: 
\begin{equation}\label{eq:barybardelta}  \sum_{i=1}^n \lambda_i V(\mu | \nu_i) = \sum_{i=1}^n \lambda_i V( \delta_{\E X} | \nu_i).
\end{equation}
 Using  Jensen's inequality, we see on the other hand that
\begin{align*}
V(\mu |\nu_i) & = \inf_{\eta \leq_c \nu_i} W_2^2(\mu,\eta)\\
& = \inf_{\eta \leq_c \nu_i} \E \Vert X - Z\Vert^2, \mbox{ with} \  (X,Z) \mbox{ an optimal  coupling  for }W_2^2 \mbox{ of } \mu \mbox{ and }  \eta \\
& \geq \inf_{\eta \leq_c \nu_i} \Vert \E X - \E Z\Vert^2 = \inf_{\eta \leq_c \nu_i} W_2^2(\delta_{\E X},\delta_{\E Z })\\
& \geq \inf_{\tilde{\eta} \leq_c \nu_i}W_2^2(\delta_{\E X},\tilde{\eta})= V(\delta_{\E X} | \nu_i).
\end{align*}
Identity \eqref{eq:barybardelta}  thus implies  $V(\mu | \nu_i) = V( \delta_{\E X} | \nu_i)$  for all $i$.  Denoting by $\eta_i$ the law $\eta$ attaining $ \inf_{\eta \leq_c \nu_i} W_2^2(\mu,\eta) $, and by $(X,Z_i)$  the optimal  coupling  for $W_2$ of $ \mu $ and $ \eta_i$,  we see that the latter can only occur if the equality case  $ \E \Vert X - Z_i\Vert^2  =   \Vert \E X - \E Z_i\Vert^2$   in Jensen's inequality holds. 
This implies that $X - Z_i$ is deterministic for each $i$. Since $\E Z_i = \E Y_i$, we  thus must have $X - Z_i= \E X - \E Y_i$.  Taking $Z=Z_i$ and $Y=Y_i$ in Eq.~\eqref{eq:YZYZ}, and noting that $ S_{\mu}^{\nu_i}(X) =Z_i = X - (  \E X - \E Y_i )$ the statement follows. 
\end{proof} 

%that is,   $Y$ can be realised as a deterministic transformation of $X$, optimally coupled to $X$, plus a random variable $Y- \E(Y\vert  X )$ in $L^2$, that is centered conditionally on $X$ (and on $S_{\mu}^{\nu}(X)$).  Moreover, conditionally on $X=x$ the random variable  $Y$ has the law $\pi^{\mu,\nu}_x$. 
%
%On this basis, we can reinterpret solutions of the weak barycenter problem \eqref{eq:weak_barycenter} as follows: 
%
%

%taking into account  \eqref{eq:meanwB} and 

\section{Proofs of Section \ref{sec:pop_barycenter}}
\label{sec:proofs_popweakbar}
\begin{proof}[Proof of Lemma \ref{lemma:Smeasurable}]

%\CR{Let $(\mu_m) $ and $ (\nu_m), m\in\N$, be two sequences in $\PP_2(\R^d) $ respectively  converging to $\mu $ and $\nu$ w.r.t. $W_2$. Then,  $(\mu_m)$ and $(\nu_m) $  are tight and thus  the sequence $(\pi^m):= (\pi^{\mu_m,\nu_m})$  is tight too. Let $\pi^{m_k}$ be a weakly convergent subsequence and  $\pi$ its limit.  By  Proposition 2.8 in \cite{backhoff2019existence} we have
%\[ 
%\liminf_m V(\mu_m | \nu_m) =\liminf_m \int \Vert x-\int y\d\pi^{m_k}_x\Vert^2\d\mu_{m_k}(x)\ \geq \int \Vert x-\int y\d\pi_x\Vert^2\d\mu(x) \geq V(\mu | \nu).\]
%However,  we have $\lim_m V(\mu_m | \nu_m)=  V(\mu | \nu) $ thanks to Theorem \ref{th:continuityV},  hence $ \int \Vert x-\int y\d\pi_x\Vert^2\d\mu(x) = V(\mu | \nu)$. By  uniqueness of the optimum for problem  \eqref{def:weak_transport}   we deduce that $\pi=\pi^{\mu,\nu}$. Since the same holds true for any weak limiting  point of $(\pi^m)$, it follows that $\pi^m$ weakly converges to $\pi^{\mu,\nu}$.  Last,  since $\int \Vert x\Vert^2+\Vert y\Vert^2 \, \d\pi^{m}(x,y)=\int\Vert x\Vert^2\d\mu_{m}(x)+\int\Vert y\Vert^2\d\nu_{m}(y)$, this quantity  converges to 
%$\int\Vert x\Vert^2\d\mu(x)+\int\Vert y\Vert^2\d\nu(y) = \int \Vert x\Vert^2+\Vert y\Vert^2 \, d\pi(x,y)$,  whence 
%$W_2( \pi^n,  \pi) \to 0 $,  and  $(\mu,\nu)\in(\PP_2(\R^d))^2 \mapsto \pi^{\mu,\nu} \in \PP_2(\R^{d}\times\R^{d})  $ is continuous,  as required (hence measurable). }

\CB{We will first show that the set-valued mapping (or correspondence) associating  with a pair $(\mu,\nu)\in(\PP_2(\R^d))^2$  the set of plans $\pi\in\Pi^*(\mu,\nu)$ attaining $V(\mu|\nu)$ in \eqref{def:weak_transport} has a measurable selection. That is, there exists a measurable function associating to each such pair $(\mu,\nu)$ a (single) optimal plan denoted $\pi^{\mu,\nu}\in\Pi^*(\mu,\nu)$. Since $\PP_2(\R^{2d})$ is a Polish space, by the Kuratowski–Ryll-Nardzewski Selection Theorem, stated in Theorem 18.13 in  \cite{charalambos2013infinite}, and by Lemma 18.3 therein, it is enough to show that
\begin{itemize}
\item[a)] $\Pi^*(\mu,\nu)$ is a closed subset of $\PP_2(\R^{2d})$ for all $(\mu,\nu)\in(\PP_2(\R^d))^2$, and
\item[b)] for each closed set $F\subseteq\PP_2(\R^{2d})$, the set $\{(\mu,\nu)\in(\PP_2(\R^d))^2 \ :\  \Pi^{\ast}(\mu,\nu)\cap F\neq \varnothing\}$ is closed in $(\PP_2(\R^d))^2$.
\end{itemize}
a) Let $\{\pi^k\}_{k\in\mathbb{N}} \subseteq \Pi^*(\mu,\nu)$ be a sequence converging to $\pi$ w.r.t. $W_2$. By Proposition 2.8 in \cite{backhoff2019existence},
\[ \liminf_k V(\mu_k | \nu_k) =\liminf_k \int \Vert x-\int y\d\pi^{k}_x\Vert^2\d\mu(x)\ \geq \int \Vert x-\int y\d\pi_x\Vert^2\d\mu(x),\]
hence $\pi\in\Pi^*(\mu,\nu)$ and this set is closed.

b) Let $F\subseteq\PP_2(\R^{2d})$ be closed, and $\{(\mu_k,\nu_k)\}_{k\in\mathbb{N}}$ be a sequence in $ (\PP_2(\R^d))^2$ such that $\Pi^*(\mu_k,\nu_k)\cap F\neq \varnothing$ and $(\mu_k)_k$ (resp. $(\nu_k)_k$) converges towards $\mu$ (resp. $\nu$) w.r.t. $W_2$. Then if we take $\pi^k\in\Pi^*(\mu_k,\nu_k)\cap F$ for each $k$, the sequence $\{\pi^k\}_k$ is tight and has a weakly convergent subsequence denoted $\{\pi^{k'}\}_{k'}$, converging towards some $\pi\in\Pi(\mu,\nu)$. By Proposition 2.8 in \cite{backhoff2019existence}, we have
\[ \liminf_{k'} V(\mu_{k'} | \nu_{k'}) =\liminf_{k'} \int \Vert x-\int y\d\pi^{k'}_x\Vert^2\d\mu_{k'}(x)\ \geq \int \Vert x-\int y\d\pi_x\Vert^2\d\mu(x) \geq V(\mu | \nu).\]
However we have $\lim_{k'} V(\mu_{k'} | \nu_{k'}) = V(\mu,\nu)$ thanks to Theorem \ref{th:continuityV}, hence $\pi\in\Pi^*(\mu,\nu)$. Moreover, since
\[\int \Vert x\Vert^2+\Vert y\Vert^2 \, \d\pi^{k'}(x,y)=\int\Vert x\Vert^2\d\mu_{k'}(x)+\int\Vert y\Vert^2\d\nu_{k'}(y) \underset{k' \to \infty}{\longrightarrow} \int\Vert x\Vert^2\d\mu(x)+\int\Vert y\Vert^2\d\nu(y),\]
we have that $W_2(\pi^{k'},\pi)\underset{k' \to \infty}{\longrightarrow} 0$. Since $\pi^{k'}\in F$ for all $k'$, we deduce that $\pi\in F\cap\Pi^*(\mu,\nu)\neq \varnothing$ as required.}

\CB{ From now on, for each $(\mu,\nu)\in\PP_2(\R^{2d})$, we denote by $\pi^{\mu,\nu}\in\Pi^*(\mu,\nu)$ an element such that $(\mu,\nu)\mapsto\pi^{\mu,\nu}$ is measurable.
We next establish the joint measurability of  $(x, \nu)\in  \R^d \times \PP_2(\R^d)\mapsto  S_{\mu}^{\nu}(x)$  for fixed $\mu$. Notice this is a stronger statement than just measurability  in the $x$ variable, for each $(\mu,\nu)$. Write $\bar{B}(x,r)$   for the closed ball of radius $r>0$ centered at $x$. One easily  checks that   the function 
\[  (x,\pi)\mapsto  \Psi_r(x,\pi):=  \frac{ \int y \mathbf{1}_{\{ (y,z) : z\in \bar{B}(x,r)  \} }\d\pi(z,y)}{ \int \mathbf{1}_ {\{ (y,z) : z\in \bar{B}(x,r)\} } \d\pi(z,y) }  \]
is measurable w.r.t. the pair $(x,\pi)\in\R^d\times\Pi(\mu,\nu)$, the two integrals being  limits of integrals   with respect to $\d\pi(z,y)$, of some bounded continuous functions of $(x,y,z)$.  Thus, $\limsup_{r\to 0}   \Psi_r(x,\pi) $,  $\liminf_{r\to 0}   \Psi_r(x,\pi) $ and  the function  $ \Phi (x,\pi) : =  \limsup_{r\to 0}   \Psi_r(x,\pi)  \mathbf{1}_{\{ \limsup_{r\to 0}   \Psi_r(x,\pi) =\liminf_{r\to 0}   \Psi_r(x,\pi) \}}$  depend in a measurable way on  $(x,\pi)$.  It follows that  $(x,\mu,\nu)\mapsto  \Phi (x,\pi^{\mu,\nu})$ is  measurable as the composition of two measurable functions. But notice that for each  fixed $\mu \in  \PP_2(\R^d)$ one has  $ \Psi_r(x,\pi^{\mu,\nu} ) = \frac{ \int_{ \bar{B}(x,r)}\left[ \int  y \d\pi^{\mu,\nu}_z(y)  \right] \d \mu(z) }{ \mu (\bar{B}(x,r))}   $ which,  by the Lebesgue derivation theorem for Radon measures (\emph{see e.g.} \cite{botsko2003elementary}), converges $\d \mu$ a.s.  in $x$,  to $\int  y \d\pi^{\mu,\nu}_x(y)$. By Theorem \ref{th:1.4Julio}  this measurable function of $x$ does not depend on the particular selection of an element  $\pi^{\mu,\nu}\in\Pi^*(\mu,\nu) $ and is $ \d \mu(x)  $ a.e. equal to  $ S_{\mu}^{\nu}(x)$.  Thus,  for each $\mu \in  \PP_2(\R^d)$, 
 \[  S_{\mu}^{\nu}(x) = \Phi (x,\pi^{\mu,\nu})   \quad \mbox{for all } \nu\in  \PP_2(\R^d) \mbox{ and } \d \mu   \mbox{ a.s. in } x, \]
 with $(x,\nu) \mapsto  \Phi (x,\pi^{\mu,\nu})$ a measurable function. The conclusion follows.  }
 
\end{proof}

\begin{proof}[Proof of Proposition \ref{prop:existence_pop_bar}]
By Theorem 6.16 in \cite{villani2008optimal}, we know that their exists a sequence of discretely supported distributions $(\Q_n)_n\subset\PP_2(\PP_2(\R^d))$ of the form $\Q_n=\sum_{i=1}^n\lambda_i\delta_{\nu_i}$, with $(\lambda_i)_{1\leq i\leq n}$ in the simplex, and such that $W_2^2(\Q,\Q_n):=\inf_{\pi\in\Pi(\Q,\Q_n)}\int W_2^2(\nu,\tilde{\nu})\d\pi(\nu,\tilde{\nu})\rightarrow 0$. We set
$$L_n(\mu):=\int_{\PP_2(\R^d)}V(\mu|\nu)\d\Q_n(\nu)=\sum_{i=1}^n\lambda_i V(\mu | \nu_i).$$
We denote $\mu^n\in\PP_2(\R^d)$ the minimiser of $L_n$. Let us prove that $(\mu^n)_n$ is tight. First, $\mu^n$ admits moments of order $2$ thanks to Jensen's inequality:
\begin{align*}
\int\Vert x\Vert^2 & \d\mu^n(x)\leq \sum_{i=1}^n\lambda_i\left[\int\Vert x-S_{\mu^n}^{\nu_i}(x) \Vert^2\d\mu^n(x)+\int\Vert S_{\mu^n}^{\nu_i}(x)\Vert^2\d\mu^n(x)\right]\\
&\leq \sum_{i=1}^n\lambda_i V(\mu^n | \nu_i)+\sum_{i=1}^n\lambda_i\int \Vert y\Vert^2\d\nu_i(y)\\
& \leq  \sum_{i=1}^n\lambda_i V(\mu | \nu_i)+\sum_{i=1}^n\lambda_i\int \Vert y\Vert^2\d\nu_i(y) \quad\mbox{for some $\mu\in\PP_2(\R^d)$ since $\mu^n$ minimises $L_n$}\\
&\leq 2\int \Vert x\Vert^2\d\mu(x)+3\sum_{i=1}^n\lambda_i\int \Vert y\Vert^2\d\nu_i(y),
\end{align*}
where the last inequality comes from $V(\mu | \nu_i)=\int \Vert x-S_{\mu}^{\nu_i}(x)\Vert^2\d\mu(x)\leq 2\int \Vert x\Vert^2\d\mu(x) +2\int \Vert S_{\mu}^{\nu_i}(x)\Vert^2\d\mu(x)$. Moreover, since $W_2^2(\Q,\Q_n)\rightarrow 0$, we have (Lemma 5.1.7 in \cite{ambrosio2008gradient}) that $\int\psi(\nu)\d\Q_n(\nu)\rightarrow\int\psi(\nu)\d\Q(\nu)$ for any function $\psi$ such that $\vert \psi(\nu)\vert\leq a + bW_2^2(\nu,\nu_0), a,b\geq 0$. In particular, choosing $\psi(\nu)=W_2^2(\nu,\delta_0)=\int \Vert y\Vert^2\d\nu(y)$, it implies that $\sum_{i=1}^n\lambda_i\int \Vert y\Vert^2\d\nu_i(y)\rightarrow \int \int \Vert y\Vert^2\d\nu(y)\d\Q(\nu)<\infty$. Therefore $\left(\sum_{i=1}^n\lambda_i\int \Vert y\Vert^2\d\nu_i(y)\right)_n$ is bounded and $(\mu^n)_n$ is tight. Thus by Prokhorov's theorem, there exists a subsequence, still denoted $(\mu^n)_n$, that converges towards $\bar{\mu}$.

Let us now prove that this particular $\bar{\mu}$ minimises the function $L:\mu\mapsto \int_{\PP_2(\R^d)}V(\mu|\nu)\d\Q(\nu)$. First, let $\eta\in\PP_2(\R^d)$,  still by Lemma 5.1.7 in \cite{ambrosio2008gradient} and since $V(\eta | \nu)\leq W_2^2(\eta , \nu)$, we get that $L(\eta)=\int V(\eta | \nu)\d\Q(\nu)\geq \liminf_{n\to\infty} \int V(\eta | \nu)\d\Q_n(\nu)$. Since for each $n$, the distribution $\mu^n$ minimises $L_n$, we have
\begin{equation}
\label{eq:liminf_existence}
\liminf_{n\rightarrow\infty} \int V(\eta | \nu)\d\Q_n(\nu)\geq \liminf_{n\to\infty} \int V(\mu^n | \nu)\d\Q_n(\nu).
\end{equation}
Thanks to Fatou's Lemma for sequences of measures $(\Q)_n$ (see \cite{feinberg2020fatou}), we have that
\[\liminf_{n\to\infty} \int V(\mu^n | \nu)\d\Q_n(\nu) \geq  \int\liminf_{n\to\infty}  V(\mu^n | \nu)\d\Q(\nu)= \int V(\bar{\mu} | \nu)\d\Q(\nu),\]
where the last equality comes from the lower semi-continuity of $V$ (Theorem 2.9 in \cite{backhoff2019existence}). This proves that $\bar{\mu}$ minimises $L$. 
\end{proof}

\section{Proofs of Section \ref{sec:algorithms}}
\label{sec:proof_algorithms}

The proof of Theorem \ref{th:continuityG}, on the continuity of $G:\mu\mapsto \left(\sum_{i=1}^n \lambda_i S_{\mu}^{\nu_i}\right)\# \mu$, leans on the two following technical lemmas.

\begin{lemma}
\label{lemma:UI2moments}
Let  $(\rho_m)_m$  be a given sequence and  $\nu$ be a fixed  law in $ \PP_2(\R^d)$.  For each $m$, let  $S_m:= S_{\rho_m}^{\nu}$ denote the barycenter map associated with an  optimal coupling $\pi^{\rho_m,\nu}$ for \eqref{def:weak_transport}. Then, the sequence of laws $(S_m \# \rho_m)_m $ has uniformly integrable second moments.  
\end{lemma}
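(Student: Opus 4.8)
The plan is to leverage the convex-order relation satisfied by the barycentric projection, together with the finiteness of the second moment of $\nu$, to derive a tail estimate on the laws $S_m \# \rho_m$ that is uniform in $m$. Write $\eta_m := S_m \# \rho_m$. Applying Theorem \ref{th:1.4Julio} with source $\rho_m \in \PP_2(\R^d)$ and target $\nu$, the optimal barycentric projection $S_m = S_{\rho_m}^{\nu}$ satisfies $\eta_m = S_m \# \rho_m \leq_c \nu$ for every $m$; this single structural fact is what will make all subsequent bounds independent of $m$.

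First I would fix $R > 0$ and consider $\phi_R : \R^d \to [0,\infty)$, $\phi_R(z) = (\Vert z \Vert^2 - R)_+$. Since $t \mapsto (t-R)_+$ is convex and nondecreasing on $\R$ while $z \mapsto \Vert z\Vert^2$ is convex, the composition $\phi_R$ is a nonnegative convex function; hence the convex order $\eta_m \leq_c \nu$ gives $\int \phi_R \, \d\eta_m \leq \int \phi_R \, \d\nu$ for all $m$. Combining this with the elementary pointwise bound $\Vert z\Vert^2 \, \mathbbm{1}_{\{\Vert z\Vert^2 > 2R\}} \leq 2(\Vert z\Vert^2 - R)_+$ (on $\{\Vert z\Vert^2 > 2R\}$ one has $\Vert z\Vert^2 - R > \tfrac12 \Vert z\Vert^2$, and the left-hand side vanishes elsewhere) yields
\[ \sup_m \int_{\{\Vert z\Vert^2 > 2R\}} \Vert z\Vert^2 \, \d\eta_m(z) \;\leq\; 2 \int_{\R^d} (\Vert y\Vert^2 - R)_+ \, \d\nu(y). \]

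To conclude, I would note that since $\nu \in \PP_2(\R^d)$, the integrand $(\Vert y\Vert^2 - R)_+$ is dominated by $\Vert y\Vert^2 \in L^1(\nu)$ and tends to $0$ pointwise as $R \to \infty$, so dominated convergence forces the right-hand side above to vanish as $R \to \infty$; since the bound is independent of $m$, this is exactly the assertion that $(S_m \# \rho_m)_m$ has uniformly integrable second moments. (Equivalently, one could invoke the de la Vall\'ee--Poussin criterion: choose a nonnegative nondecreasing convex $\Theta$ on $[0,\infty)$ with $\Theta(t)/t \to \infty$ and $\int \Theta(\Vert y\Vert^2)\,\d\nu(y) < \infty$, observe that $z \mapsto \Theta(\Vert z\Vert^2)$ is convex, and use $\eta_m \leq_c \nu$ to bound $\sup_m \int \Theta(\Vert z\Vert^2)\,\d\eta_m$ uniformly.) I do not expect a genuine obstacle here; the only point needing a moment's care is the convexity on $\R^d$ of $\phi_R$ — or of the de la Vall\'ee--Poussin function composed with $\Vert\cdot\Vert^2$ — which is just the composition rule for a nondecreasing convex scalar function with the convex map $z \mapsto \Vert z\Vert^2$, everything else being a routine truncation argument.
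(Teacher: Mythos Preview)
Your proof is correct and takes a genuinely different route from the paper's. The paper works probabilistically: it represents $S_m\#\rho_m$ as the law of $\E(Y_m\vert X_m)$ where $(X_m,Y_m)\sim\pi^{\rho_m,\nu}$, applies Jensen's inequality to replace $\|\E(Y_m\vert X_m)\|^2$ by $\E(\|Y_m\|^2\vert X_m)$, and then performs a two-parameter truncation (on both $\|\E(Y_m\vert X_m)\|^2\geq M$ and $\|Y_m\|^2\geq K$) to obtain the explicit bound
\[
\sup_m \int_{\{\|x\|^2\geq M\}}\|x\|^2\,\d S_m\#\rho_m(x)\;\leq\; \int_{\{\|x\|^2\geq K\}}\|x\|^2\,\d\nu(x)+\frac{K}{M}\int\|x\|^2\,\d\nu(x).
\]
You instead exploit the convex-order conclusion $\eta_m\leq_c\nu$ of Theorem~\ref{th:1.4Julio} directly, testing against the single convex function $\phi_R(z)=(\|z\|^2-R)_+$ and using the pointwise inequality $\|z\|^2\mathbbm{1}_{\{\|z\|^2>2R\}}\leq 2\phi_R(z)$. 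This is shorter and arguably more transparent: once one knows $\eta_m\leq_c\nu$, uniform integrability of second moments is immediate from the convexity of $\phi_R$ (or, as you note, from de la Vall\'ee--Poussin). The paper's approach, on the other hand, yields an explicit two-parameter tail bound that it reuses verbatim later (integrated against $\Q(\d\nu)$ in the proof of Proposition~\ref{th:lsc}); your bound $2\int(\|y\|^2-R)_+\,\d\nu(y)$ would serve that purpose equally well.
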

\begin{proof}[Proof of Lemma \ref{lemma:UI2moments}]
Let  $(X_m,Y_m)$ be  a pair of  random variables (r.v.) with joint law $\pi^{\mu_m,\nu}$,  defined on some probability space  $(\Omega, {\cal F},\P)$. Notice that $S_m \# \rho_m$ is the law of the  r.v. $\E(Y_m\vert X_m)$.   Then, for each  $M , K\geq 0$  we   have
\begin{align*}
 \int_{\{\Vert x\Vert^2\geq M\}}  \Vert x\Vert^2 \d S_m\#\rho_m(x) 
  =  & \,  \E( \|  \E(Y_m\vert X_m) \|^2 \mathbf{1}_{ \{ \| \E(Y_m\vert X_m) \|^2 \geq M}\}  )\\
  \leq  & \,  \E(  \E(\| Y_m \|^2 \vert X_m)  \mathbf{1}_{ \{ \| \E(Y_m\vert X_m) \|^2 \geq M\} } ) \\
=  & \,  \E(   \| Y_m \|^2  \mathbf{1}_{   \{  \| \E(Y_m\vert X_m) \|^2 \geq M,   \| Y_m \|^2 \geq K \}} ) \\ & +   \E(   \| Y_m \|^2  \mathbf{1}_{  \{  \| \E(Y_m\vert X_m) \|^2 \geq M,   \| Y_m \|^2 < K \}} ) \\
\leq & \,   \E(   \| Y_m \|^2  \mathbf{1}_{  \| Y_m \|^2 \geq K \}} ) + \frac{K}{M}  \E( \|  \E(Y_m\vert X_m) \|^2 ),
\end{align*}
where we  have used Jensen's inequality and the fact that $ \E(Y_m\vert X_m) $ is measurable w.r.t. the $\sigma$-field  generated by $X_m$. Applying Jensen's inequality to the last term again, and recalling that $Y_m$ has law $\nu\in  \PP_2(\R^d)$, we deduce that
\begin{equation}\label{boundUI2mom}
\begin{split}
\sup_m  \int_{\{\Vert x\Vert^2\geq M\}}  \Vert x\Vert^2 \d S_m\#\rho_m(x)
\leq & \,  \int_{\{\Vert x\Vert^2\geq K\}}  \Vert x\Vert^2 \d \nu (x)   + \frac{K}{M}  \int  \Vert x\Vert^2 \d \nu (x), 
 \end{split}
 \end{equation}
which is smaller than a given $\varepsilon>0$, by choosing  $K>0$ and then $M>0$ large enough.
\end{proof}

\begin{lemma}
\label{lemma:contlawbarycentermap}
Let $(\rho_m)_n,\rho $ in $\PP_2(\R^d)$ be such that $W_2(\rho_m,\rho)\rightarrow 0$.  We have: 
\begin{itemize}
\item[i)] For each $\nu \in \PP_2(\R^d)$ the sequence of laws $((\id,  S_{\rho_m}^{\nu} ) \# \rho_m)_m$  converges   w.r.t. $W_2$ in  $ \PP_2(\R^{d}\times\R^{d})$  to $(\id,  S_{\rho}^{\nu} ) \# \rho$ . 
\item[ii)] There exists in some probability space $(\Omega,{\cal F}, \P)$,  a sequence of r.v.  $(X_m)_m$ of laws $(\rho_m)_m $ and a r.v. $X$ of law $\rho$ such that, for each  $\nu \in \PP_2(\R^d)$, the sequence    $(X_m, S^{\nu}_{\rho_m}  (X_m))_m$  (with laws $((\id,  S_{\rho_m}^{\nu} ) \# \rho_m)_m$) converges   in $\L^2(\Omega,{\cal F}, \P)$ to  $(X, S^{\nu}_{\rho}  (X))$  (with law $(\id,  S_{\rho}^{\nu} ) \# \rho$). 
\end{itemize}
\end{lemma}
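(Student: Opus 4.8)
The plan is to exploit the rigidity furnished by Theorem~\ref{th:1.4Julio}: for each $m$ the barycentric projection $S_{\rho_m}^{\nu}$ coincides $\rho_m$-a.s.\ with the gradient of a convex $C^1$ function whose gradient is globally $1$-Lipschitz, so one may work throughout with the canonical $1$-Lipschitz representative $T_m:=S_{\rho_m}^{\nu}$, defined on all of $\R^d$; this changes neither $T_m\#\rho_m$, nor $\int\|x-T_m(x)\|^2\,\d\rho_m(x)$, nor the law of $T_m(X_m)$ when $X_m\sim\rho_m$. The first step is to prove that $(T_m)_m$ is equibounded on every compact set (it is automatically equi-Lipschitz). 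Since $W_2(\rho_m,\rho)\to0$ the moments $\int\|x\|^2\,\d\rho_m$ are bounded uniformly in $m$; and since $T_m\#\rho_m\leq_c\nu$ and $y\mapsto\|y\|^2$ is convex and nonnegative, $\int\|T_m\|^2\,\d\rho_m=\int\|y\|^2\,\d(T_m\#\rho_m)(y)\leq\int\|y\|^2\,\d\nu(y)<\infty$. A Markov--type averaging argument from these two bounds produces, for each $m$, a point $x_m^{\ast}$ lying in a fixed ball and with $\|T_m(x_m^{\ast})\|$ bounded uniformly in $m$; the $1$-Lipschitz property then gives $\sup_m\sup_{\|x\|\leq r}\|T_m(x)\|<\infty$ for every $r>0$. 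Simultaneously, using $W_2(\rho_m,\rho)\to0$ I would build, once and for all and independently of $\nu$, a probability space carrying $X_m\sim\rho_m$ and $X\sim\rho$ with $X_m\to X$ in $\L^2$, e.g.\ by gluing a fixed sequence of (near-)optimal couplings of $\rho$ and $\rho_m$.

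Now fix an arbitrary subsequence. By Arzela--Ascoli, a further subsequence (not relabelled) satisfies $T_m\to T$ locally uniformly, where $T:\R^d\to\R^d$ is $1$-Lipschitz; refining once more, we may also assume $X_m\to X$ a.s. Then $T_m(X_m)\to T(X)$ a.s.\ (local uniform convergence of $T_m$, continuity of $T$, and $X_m\to X$), and since $(T_m\#\rho_m)_m$ has uniformly integrable second moments by Lemma~\ref{lemma:UI2moments}, this convergence holds in $\L^2$ as well. Hence $(X_m,T_m(X_m))\to(X,T(X))$ in $\L^2$ along this subsequence, and in particular the joint laws $(\id,T_m)\#\rho_m$ converge to $(\id,T)\#\rho$ in $W_2$ over $\PP_2(\R^{2d})$.

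It remains to identify $T$ with $S_{\rho}^{\nu}$. From the $W_2$-convergence of the joint laws, $\int\|x-T(x)\|^2\,\d\rho(x)=\lim_m\int\|x-T_m(x)\|^2\,\d\rho_m(x)=\lim_m V(\rho_m\mid\nu)=V(\rho\mid\nu)$, the last equality by Theorem~\ref{th:continuityV}. For every nonnegative convex $\phi$, weak lower semicontinuity together with $T_m\#\rho_m\leq_c\nu$ yields $\int\phi\,\d(T\#\rho)\leq\liminf_m\int\phi\,\d(T_m\#\rho_m)\leq\int\phi\,\d\nu$, so $T\#\rho\leq_c\nu$. By Strassen's theorem there is a martingale coupling from $T\#\rho$ to $\nu$; gluing it with $(\id,T)\#\rho$ gives $\pi\in\Pi(\rho,\nu)$ with $\int y\,\d\pi_x(y)=T(x)$ for $\rho$-a.e.\ $x$, whose weak transport cost is $\int\|x-T(x)\|^2\,\d\rho(x)=V(\rho\mid\nu)$; thus $\pi$ attains the infimum in \eqref{def:weak_transport}, so $\pi=\pi^{\rho,\nu}$ by uniqueness (Theorem~\ref{th:1.2Julio}) and $T=S_{\rho}^{\nu}$ $\rho$-a.s. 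Consequently, along this subsequence $T_m(X_m)\to S_{\rho}^{\nu}(X)$ a.s. As the subsequence was arbitrary, every subsequence of $(T_m(X_m))_m$ has a further subsequence converging a.s.\ to $S_{\rho}^{\nu}(X)$, whence $T_m(X_m)\to S_{\rho}^{\nu}(X)$ in probability, and then in $\L^2$ by the uniform integrability of Lemma~\ref{lemma:UI2moments}. Together with $X_m\to X$ in $\L^2$ this proves (ii); and (i) follows immediately, since $\L^2$-convergence of the random vectors $(X_m,S_{\rho_m}^{\nu}(X_m))$ entails $W_2$-convergence of their laws.

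The main obstacle is the identification of the subsequential limit $T$. One cannot pass to the limit directly in $T_m(x)=\E(Y_m\mid X_m=x)$, because conditional expectations are not stable under weak convergence of the joint law; this is precisely why the argument leans on the $1$-Lipschitz convex-gradient structure of Theorem~\ref{th:1.4Julio} to obtain the Arzela--Ascoli compactness, and then uses the continuity of $V$ and the uniqueness of the optimal weak transport plan to pin down the limit as $S_{\rho}^{\nu}$.
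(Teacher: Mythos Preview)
Your proof is correct and follows the same overall strategy as the paper (Arzel\`a--Ascoli on the $1$-Lipschitz barycentric maps, uniform integrability from Lemma~\ref{lemma:UI2moments}, and identification of the limit via the continuity of $V$ and the uniqueness of the optimal weak plan), but it streamlines several steps. The paper proves part~(i) first at the level of laws and then uses it inside the proof of~(ii); you reverse this, establishing~(ii) directly and reading off~(i) as a corollary. More notably, the paper handles compactness by truncating the maps, $S_m^N=(S_m)^N$, to force boundedness before Arzel\`a--Ascoli, and only removes the truncation at the end via a tightness argument; your Markov-type averaging argument (a point $x_m^\ast$ in a fixed ball with $\|T_m(x_m^\ast)\|$ uniformly bounded, then the $1$-Lipschitz bound) gives equiboundedness on compacts directly and eliminates the truncation layer entirely. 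Finally, the paper identifies the limit coupling as $(\id,S_\rho^\nu)\#\rho$ by recognising it as the unique $W_2$-optimal coupling between $\rho$ and $S_\rho^\nu\#\rho$; you instead build an admissible competitor $\pi\in\Pi(\rho,\nu)$ by Strassen-gluing and invoke the uniqueness of the OWT minimiser (Theorem~\ref{th:1.2Julio}) --- this is slightly cleaner, since it does not rely on any uniqueness statement for the classical $W_2$ coupling. One minor caveat: the step ``$\int\phi\,\d(T\#\rho)\leq\int\phi\,\d\nu$ for nonnegative convex $\phi$, so $T\#\rho\leq_c\nu$'' tacitly uses that $T\#\rho$ and $\nu$ have the same mean; this does follow (from $T_m\#\rho_m\leq_c\nu$ and the $\L^1$-convergence $T_m(X_m)\to T(X)$), but is worth stating.
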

\begin{proof}[Proof of Lemma \ref{lemma:contlawbarycentermap}]
For the entire proof, we fix a $\nu\in \PP_2(\R^d)$ and we write $S_m:= S_{\rho_m}^{\nu}$  and $S:= S_{\rho}^{\nu}$ for simplicity. 

i)  By Theorem \ref{th:1.4Julio} and Part 1. of Theorem 1.5 in \cite{backhoff2020weak},  $ (S_m \# \rho_m)_m$ converges to  $ S \# \rho$ w.r.t. $W_1$ and, by Lemma \ref{lemma:UI2moments}, also with respect to $W_2$.   In particular,  the sequence    $((\id,  S_m ) \# \rho_m)_m$  has tight marginals, and therefore it is tight too.

 Let us identify its weak limiting points.  For simplicity we rename  $((\id,  S_m  ) \# \rho_m)_m$  a weakly convergent subsequence.  By the previous discussion, its weak limit $\d \hat{\rho}(x, z) $ clearly has first and second marginal laws equal to $\d\rho( x)$ and  $\d S\# \rho (z)$ respectively.   Moreover, $\int \| x\|^2 \d \rho_m(x)  + \int \|z\|^2 \d S_m\# \rho_m (z) \to  \int \| x\|^2+ \|z\|^2 \d \hat{\rho}(x,z) $, hence  $((\id,  S_m  ) \# \rho_m)_m$ converges to some $\hat{\pi}$ with respect to $W_2$ in $ \PP_2(\R^{d}\times\R^{d})$.  
 
 Now, by the characterisation of optimisers in  Theorem \ref{th:1.4Julio},   we have $V(\rho_m \vert \nu)=W_2^2(\rho_m, S_m \# \rho_m)=  \int \| x- S_m(x)\|^2 \d \rho_m(x) $.  Taking $m\to \infty$, and thanks to Theorem \ref{th:continuityV}, we finally obtain
  \[    V(\rho \vert\nu)=W_2^2(\rho, S\# \rho)=  \int \| x - z\|^2 \d \hat{\pi}(x,z).  \]
  In particular, using again  Theorem \ref{th:1.4Julio},  we conclude that  $ \d \hat{\pi}(x,z)  $ must be of the form $(\id, S)  \# \rho$.

ii) By Skorohod's representation theorem,  one can  construct  simultaneously in some probability space $(\Omega,{\cal F}, \P)$,  a sequence of r.v. $(X_m)_m$ of laws $(\rho_m)_m $ and a r.v. $X$ of law $\rho$ such that $(X_m)_m$ converges  $\P-$ a.s. to  $X$.  Moreover, since the sequence  $(\rho_m)_m$ converges  w.r.t. $W_2$ in  $ \PP_2(\R^{d})$, it has uniformly integrable second order moments. It follows that the sequence of  r.v. $( |X_m|^2)_m$ is uniformly integrable and,  by the Vitali convergence theorem,  that 
  $(X_m)_m$ also converges  to $X$  in $\L^2(\Omega,{\cal F}, \P)$. 
 
 Now,  by Lemma \ref{lemma:UI2moments}, the sequence of  r.v. $ (| S_m  (X_m)|^2)_n$  is uniformly integrable too. Thus, by the Vitali convergence theorem,  the statement will follow by proving that  $S_m  (X_m)$ converges  in $\P-$probability to $S (X)$.

 For each $N\in \N$, let $y\mapsto (y)^N$ denote the truncation of a vector $y\in \R^d$ obtained by projecting it onto the  centered ball of radius $N$, $(y)^N:= (1\wedge \frac{N}{|y|})y$, which is a  $1-$Lipschitz function bounded by $N$.   By Theorem \ref{th:1.4Julio}, the functions  $S^N_m:= (S_m)^N $  are then  $1-$Lipschitz and bounded uniformly   in $m\in \N$. Therefore,  by the Arzela-Ascoli theorem, their  restrictions to each compact cylinder set $R$ of $\R^d $ defines a relatively compact set of functions,  with respect to the uniform topology  in $C(R,\R^d)$.  It follows by a diagonal argument that some subsequence $(S_{m_k}^N)_k$ converges, uniformly on compact sets, to some continuous function $\tilde{S}$ on $\R^d$.  Since $X_n$ converges a.s. to the finite value $X$, we deduce that  $\P-$a.s. as $k\to \infty$, 
 \[ (X_{m_k} ,  S_{m_k}^N(X_{m_k}))\to (X,\tilde{S}(X)). \]
 Notice now that $ (X_{m_k} ,  S_{m_k}^N(X_{m_k}))$ has the law $(\id, (\cdot)^N\circ S_{m_k} ) \# \rho_{m_k}$ for each $k$ and thus, by part a) and continuity of the mapping $(x,y)\mapsto (x,(y)^N)$, the r.v.   $ (X,\tilde{S}(X))$ has the law $(\id, (\cdot)^N\circ S) \# \rho$. Hence we deduce that
  $$  (X,\tilde{S}(X)) =  (X, (S (X))^N)$$  
 $\P-$ almost surely.  The previous arguments can be applied not just to $(X_m)_m$ but to any subsequence of it. That is, we can similarly prove that any subsequence  of $(X_m, (S_m (X_m))^N)_m$  has  a subsequence that a.s. converges to $ (X, (S (X))^N)$. This means  that, for each $N\in \N$ 
 \[(X_m,  (S_m (X_m))^N)\to (X,  (S (X))^N) \]
 in $\P-$ probability when $n\to \infty$.  To conclude, by tightness we can find for each $\eta>0$ some $N\in \N$ large enough so that $\P( | S(X)|\geq N ) \leq \eta$ and $\P( | S_m(X_m)|\geq N )\leq \eta$ for all $m\in \N$, which yields for each $\varepsilon>0$, 
  \[\P( | S_m(X_m)- S(X)|\geq \varepsilon) \leq 2\eta + \P( | (S_m(X_m))^N- (S(X))^N|\geq \varepsilon).  \]
 Thus  $\limsup_m  \P( | S_m(X_m)- S(X)|\geq \varepsilon) \leq 2\eta  $ for arbitrary $\eta>0$ or, equivalently,  $\P( | S_m(X_m)- S(X)|\geq \varepsilon) \to 0$ as $m\to \infty$, which concludes the proof of b).
\end{proof}

We can now proceed to the proof of continuity of $G:\mu\mapsto \left(\sum_{i=1}^n \lambda_i S_{\mu}^{\nu_i}\right)\# \mu$.
\begin{proof}[Proof of Theorem \ref{th:continuityG}] 
Let  $(\rho_m)_m,\rho $ in $\PP_2(\R^d)$ such that $W_2(\rho_m,\rho)\rightarrow 0$. We need to  prove that $W_2^2(G(\rho_m),G(\rho))\rightarrow 0$. For each $m$, we write  $S_m^i:= S_{\rho_m}^{\nu_i}$ and $S^i:= S_{\rho}^{\nu_i}$.

By Lemma \ref{lemma:contlawbarycentermap}.ii),  there exists in some probability space  a sequence  $(X_m)_m$ of laws $(\rho_m)_m $ and a r.v. $X$ of law $\rho$ such that
\[(S_m^1(X_m),..., S_m^n(X_m) )\to (S^1(X),... ,S^n(X) )\quad \mbox{in} \quad\L^2(\P).\]
Therefore,   $\sum_{i=1}^n \lambda_i S_m^i(X_m)$ converges to $\sum_{i=1}^n\lambda_i S^i(X)$ in $\L^2(\P)$.  Since $\sum_{i=1}^n \lambda_i S_m^i(X_m)$ has law $G(\rho_m)$ and  $\sum_{i=1}^n\lambda_i S^i(X)$ has law $G(\rho)$, the proof is complete. 
\end{proof}

\begin{proof}[Proof of Proposition \ref{prop:fixedpoint}]
 As in \cite{alvarez2016fixed}, we easily see  that
\[\sum_{i=1}^n\lambda_i \int\Vert x-S_{\mu}^{\nu_i}(x)\Vert^2\d\mu(x) = \sum_{i=1}^n\lambda_i \int\Vert \bar{S}(x)-S_{\mu}^{\nu_i}(x)\Vert^2\d\mu(x) + \int\Vert x-\bar{S}(x)\Vert^2\d\mu(x).\]
But $\int\Vert x-S_{\mu}^{\nu_i}(x)\Vert^2\d\mu(x) = W_2^2(\mu,S^{\nu_i}_{\mu}\#\mu)$ since from Thm 1.4 in \cite{backhoff2019existence}  the barycentric map $S_{\mu}^{\nu_i}$ is an optimal map for the Monge problem between $\mu$ and $S_{\mu}^{\nu_i}\#\mu$. Moreover, by definition $G(\mu) = \bar{S}\#\mu$, therefore $\int\Vert x-\bar{S}(x)\Vert^2d\mu(x)\geq W_2^2(\mu,G(\mu))$. Finally, since $S_{\mu}^{\nu_i}\#\mu\leq_c \nu_i$,  we have that    $\int\Vert \bar{S}(x)-S_{\mu}^{\nu_i}(x)\Vert^2d\mu(x)\geq V(G(\mu)\vert \nu_i)$. This , recalling that $V(\mu | \nu_i) = W_2^2(\mu,S_{\mu}^{\nu_i}\#\mu)$,  yields
\begin{equation}
\label{ineq:fixedpoint}
 \sum_{i=1}^n\lambda_i V(\mu | \nu_i)  \geq \sum_{i=1}^n\lambda_i V(G(\mu)\vert \nu_i) + W_2^2(\mu,G(\mu)).
 \end{equation}
Therefore, if $\mu$ is a weak barycenter, we readily get that $\mu = G(\mu)$. 
\end{proof}

\begin{proof}[Proof of Proposition \ref{prop:convergence_finite}]
As in the proof of Theorem \ref{th:continuityG}, we denote $S_k^i$ an optimal barycentric projection associated to $\pi^{k,i}\in\Pi(\mu_k,\nu_i)$. First, we easily have that $\mu_{k+1}\in\PP_2(\R^d)$, indeed by Jensen's inequality
\[\int \Vert x \Vert^2\d\mu_{k+1}(x)= \iint \Vert \sum_{i=1}^n \lambda_i S_k^i(x)\Vert^2\d\mu_k(x) \leq  \sum_{i=1}^n \lambda_i \int\Vert y\Vert^2\d\nu_i(y) < \infty.\]
Then $(\mu_k)_k$ is tight,  with uniformly integrable $2$-moments by Lemma \ref{lemma:UI2moments}. Therefore $(\mu_k)_k$ admits a convergent subsequence in $W_2$. Let $\tilde{\mu}$ be a weak limit of a subsequence $(\mu_{k_j})_j$, then we have $W_2(\mu_{k_j},\tilde{\mu})\xrightarrow[j\to\infty]{} 0$. By continuity of $G$ in Theorem \ref{th:continuityG},  we get $W_2(\mu_{k_j+1},G(\tilde{\mu})) \xrightarrow[j\to\infty]{} 0$. Moreover, by Theorem \ref{th:continuityV} we have for $F(\mu):= \sum_{i=1}^n \lambda_i V(\mu | \nu_i)$ that $F(\mu_{k_j})\to F(\tilde{\mu})$ and $F(\mu_{k_j+1})\to F(G(\tilde{\mu}))$ as $j\to \infty$ . Let us prove that these two limits coincide.  From \eqref{ineq:fixedpoint}, we have
\[F(\mu_{k_j}) \geq \sum_{i=1}^n\lambda_i V(G(\mu_{k_j}) | \nu_i) = \sum_{i=1}^n\lambda_i V(\mu_{k_j+1}| \nu_i) = F(\mu_{k_j+1}).
\]
Iterating this inequality leads to   $F (\mu_{k_j})  \geq F(\mu_{k_j+1}) \geq F(\mu_{k_{j+1}})$  which yields   $F(\tilde{\mu}) = F(G(\tilde{\mu}))$   and then  $\tilde{\mu}=G(\tilde{\mu})$,  using inequality  \eqref{ineq:fixedpoint}.  Thus $(\mu_{k_j})_j$ converges w.r.t. $W_2$ to a probability distribution $\tilde{\mu}$ which is a fixed point of $G$.

%Finally, if $G$ admits a unique fixed point, then the weak barycenter $\bar{\mu}$ is unique, and any convergent subsequence admits a subsequence convergent towards $\bar{\mu}$ and we have $W_2^2(\mu_m,\bar{\mu})\to 0$.
\end{proof}

\begin{proof}[Proof of Lemma \ref{lemma:fixed_point}]
The proof is similar to that of \cite[Lemma 3.8]{rios1805bayesian}. For the sake of clarity, we rewrite it in our setting. We assume that $x=\int S_{\bar{\mu}}^{\nu}(x)\d\Q(\nu), \bar{\mu}(x)$-a.s. is not true, then
\begin{align*}
0 &< \int \Vert x-\int S_{\bar{\mu}}^{\nu}(x)\d\Q(\nu)\Vert^2\d\bar{\mu}(x)\\
& = \int\Vert x\Vert^2\d\bar{\mu}(x)-2\iint \langle x, S_{\bar{\mu}}^{\nu}(x)\rangle\d\Q(\nu)\d\bar{\mu}(x)+\int\Vert \int S_{\bar{\mu}}^{\nu}(x)\d\Q(\nu)\Vert^2\d\bar{\mu}(x).
\end{align*}
Moreover, $S_{\bar{\mu}}^{\mu}\#\bar{\mu} \leq_c \mu$, therefore by Theorem 1.4 in \cite{backhoff2019existence}, we get
\begin{align*}
\int V\left( \left[\int S_{\bar{\mu}}^{\nu}\d\Q(\nu)\right]\#\bar{\mu} | \mu\right)\d\Q(\mu)
&\leq \int \Vert \int S_{\bar{\mu}}^{\nu}\d\Q(\nu) - S_{\bar{\mu}}^{\mu}\Vert_{\L^2(\bar{\mu})}^2\d\Q(\mu)\\
& = \iint \Vert S_{\bar{\mu}}^{\nu}(x)\Vert^2\d\bar{\mu}(x)\d\Q(\nu)-\int\Vert \int S_{\bar{\mu}}^{\nu}\d\Q(\nu)\Vert^2\d\bar{\mu}(x).
\end{align*}
Finally, noticing that $\iint \Vert x-S_{\bar{\mu}}^{\nu}(x)\Vert^2\d\bar{\mu}(x)\d\Q(\nu) =\int V(\bar{\mu} | \nu)\d\Q(\nu)$, we hence get
\[\int V\left( \left[\int S_{\bar{\mu}}^{\nu}\d\Q(\nu)\right]\#\bar{\mu} | \mu\right)\d\Q(\mu) < \int V(\bar{\mu} | \nu)\d\Q(\nu),\]
which is in contradiction with $\bar{\mu}$ weak barycenter of $\Q$.
\end{proof}

In order to study the convergence of the iterative scheme in \eqref{eq:iterative_algo}, we define the following objects:
\begin{align}
L(\mu) &:= \frac{1}{2}\int V(\mu | \nu) \d\Q(\nu)\label{def:L}\\
H(\mu)(x) &:= -\int(S_{\mu}^{\nu}-\id)\d\Q(\nu)(x) \quad \, x\in \R^d .\label{def:H}
\end{align} 
Moreover, we denote by $\{\FF_k\}_k$ the filtration of the i.i.d. sample $\nu^k\sim\Q$, namely $\FF_{-1}$ is the trivial sigma-algebra and $\FF_{k+1}$ is the sigma-algebra generated by $\nu^0,\ldots,\nu^k$ and therefore $\mu_k$ in \eqref{eq:iterative_algo} is $\FF_k$-measurable.

The next two Propositions are needed to prove Theorem \ref{th:conv_iter}.

\begin{proposition}
\label{th:lsc}
The functions $\mu\in\PP_2(\R^d) \mapsto \Vert H(\mu)\Vert^2_{\L^2(\mu)}$  and $\mu\in\PP_2(\R^d) \mapsto L(\mu)$ are continuous w.r.t $W_2$.
\end{proposition}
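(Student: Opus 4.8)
The plan is to treat the two functionals separately; $L$ is straightforward while $\mu\mapsto\|H(\mu)\|^2_{\L^2(\mu)}$ is the substantive one. \textbf{Continuity of $L$.} Fix a sequence $\mu_m\to\mu$ in $W_2$. For each fixed $\nu$, applying Theorem~\ref{th:continuityV} with the constant sequence $\nu_m\equiv\nu$ gives $V(\mu_m|\nu)\to V(\mu|\nu)$. To pass this limit inside $\int\cdot\,\d\Q(\nu)$ I would dominate: since $V(\mu'|\nu)=\inf_{\eta\leq_c\nu}W_2^2(\mu',\eta)\leq W_2^2(\mu',\nu)$ for every $\mu'$ (take $\eta=\nu$), and since $C:=\sup_m W_2^2(\mu_m,\mu)<\infty$, one has $V(\mu_m|\nu)\leq 2C+2W_2^2(\mu,\nu)$, and $\nu\mapsto 2C+2W_2^2(\mu,\nu)$ is $\Q$-integrable by the standing integrability assumption on $\Q$. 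Dominated convergence then yields $2L(\mu_m)=\int V(\mu_m|\nu)\,\d\Q(\nu)\to\int V(\mu|\nu)\,\d\Q(\nu)=2L(\mu)$.

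\textbf{Continuity of $\mu\mapsto\|H(\mu)\|^2_{\L^2(\mu)}$: set-up.} Write $\bar{S}_\mu(x):=\int S_{\mu}^{\nu}(x)\,\d\Q(\nu)$, so that $H(\mu)(x)=x-\bar{S}_\mu(x)$ and $\|H(\mu)\|^2_{\L^2(\mu)}=\int\|x-\bar{S}_\mu(x)\|^2\,\d\mu(x)$. The map $\bar{S}_\mu$ is well defined $\mu$-a.e. and lies in $\L^2(\mu)$: joint measurability of $(x,\nu)\mapsto S_{\mu}^{\nu}(x)$ comes from Lemma~\ref{lemma:Smeasurable}, and by Jensen's inequality (recalling $S_{\mu}^{\nu}(x)=\int y\,\d\pi_x^{\mu,\nu}(y)$) one has $\int\|S_{\mu}^{\nu}(x)\|^2\,\d\mu(x)\leq\int\|y\|^2\,\d\nu(y)$, whence $\iint\|y\|^2\,\d\nu(y)\,\d\Q(\nu)=\int W_2^2(\delta_0,\nu)\,\d\Q(\nu)<\infty$. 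Now fix $\mu_m\to\mu$ in $W_2$ and invoke Lemma~\ref{lemma:contlawbarycentermap}(ii): on a common probability space $(\Omega,{\cal F},\P)$ there exist random variables $X_m\sim\mu_m$ and $X\sim\mu$ --- crucially chosen \emph{independently of $\nu$} --- such that $X_m\to X$ in $\L^2$ and, for each fixed $\nu$, $S_{\mu_m}^{\nu}(X_m)\to S_{\mu}^{\nu}(X)$ in $\L^2$.

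\textbf{Key estimate and conclusion.} Applying the integral form of Jensen's inequality to $\|\cdot\|^2$ and the probability measure $\Q$, pathwise, then taking expectations and using Tonelli,
\[
\E\bigl\|\bar{S}_{\mu_m}(X_m)-\bar{S}_\mu(X)\bigr\|^2\leq\int\E\bigl\|S_{\mu_m}^{\nu}(X_m)-S_{\mu}^{\nu}(X)\bigr\|^2\,\d\Q(\nu).
\]
For each $\nu$ the integrand tends to $0$ by Lemma~\ref{lemma:contlawbarycentermap}(ii); moreover it is bounded by $2\E\|S_{\mu_m}^{\nu}(X_m)\|^2+2\E\|S_{\mu}^{\nu}(X)\|^2\leq 4\int\|y\|^2\,\d\nu(y)$ (again by Jensen, since $S_{\mu}^{\nu}(X)=\E(Y\mid X)$ with $Y\sim\nu$), a $\Q$-integrable bound independent of $m$. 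Dominated convergence gives $\bar{S}_{\mu_m}(X_m)\to\bar{S}_\mu(X)$ in $\L^2$, and combining with $X_m\to X$ in $\L^2$ we obtain $X_m-\bar{S}_{\mu_m}(X_m)\to X-\bar{S}_\mu(X)$ in $\L^2$. Hence
\[
\|H(\mu_m)\|^2_{\L^2(\mu_m)}=\E\|X_m-\bar{S}_{\mu_m}(X_m)\|^2\to\E\|X-\bar{S}_\mu(X)\|^2=\|H(\mu)\|^2_{\L^2(\mu)}.
\]

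\textbf{Main obstacle.} The delicate step is the second functional, namely turning the per-$\nu$ $\L^2$-convergence of barycentric projections into convergence of their $\Q$-average. This rests on three ingredients meshing together: that Lemma~\ref{lemma:contlawbarycentermap}(ii) delivers a Skorohod coupling $(X_m,X)$ that is uniform in $\nu$, so the $\Q$-integration and the almost-sure coupling can live on one space; that $(x,\nu)\mapsto S_{\mu}^{\nu}(x)$ is jointly measurable (Lemma~\ref{lemma:Smeasurable}), which legitimises Fubini/Tonelli on $\E\int\cdots\,\d\Q(\nu)$; and that the Jensen bound $\E\|S_{\mu_m}^{\nu}(X_m)\|^2\leq\int\|y\|^2\,\d\nu(y)$ furnishes a single $m$-independent, $\Q$-integrable dominating function. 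Once these are in place the dominated-convergence argument is routine.
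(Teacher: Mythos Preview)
Your proof is correct and, in fact, somewhat more direct than the paper's. The paper handles both functionals at once by enlarging the probability space $(\Omega,{\cal F},\P)$ to $(\bar\Omega,\bar{\cal F},\bar\P)=(\Omega\times\PP_2(\R^d),\FF\otimes\BB(\PP_2(\R^d)),\P\otimes\Q)$, introducing an independent random measure $\bfnu\sim\Q$, and then proving $(X_m,S^{\bfnu}_{\mu_m}(X_m))\to(X,S^{\bfnu}_{\mu}(X))$ in $L^2(\bar\P)$ via convergence in $\bar\P$-probability plus a Vitali argument based on the uniform-integrability bound \eqref{boundUI2mom} integrated against $\Q$. From this, continuity of $L$ drops out immediately as $\bar\E\|X_m-S^{\bfnu}_{\mu_m}(X_m)\|^2\to\bar\E\|X-S^{\bfnu}_{\mu}(X)\|^2$, while for $\|H(\mu)\|^2_{\L^2(\mu)}$ the paper further identifies $\bar S_\mu(X)=\bar\E(S^{\bfnu}_\mu(X)\mid X)=\bar\E(S^{\bfnu}_\mu(X)\mid\FF_\infty)$ and invokes the $L^2$-continuity of conditional expectation with respect to a fixed $\sigma$-field.

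You avoid all of this machinery. For $L$ you go through Theorem~\ref{th:continuityV} and a direct dominated-convergence bound $V(\mu_m|\nu)\leq 2C+2W_2^2(\mu,\nu)$, which is cleaner. For $\|H(\mu)\|^2_{\L^2(\mu)}$ you stay on the original Skorohod space and use the pathwise Jensen inequality $\|\int(\cdot)\,\d\Q\|^2\leq\int\|\cdot\|^2\,\d\Q$ to reduce to $\int\E\|S^{\nu}_{\mu_m}(X_m)-S^{\nu}_\mu(X)\|^2\,\d\Q(\nu)$, then apply dominated convergence with the $m$-free envelope $4\int\|y\|^2\,\d\nu(y)$. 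This sidesteps both the product-space construction and the conditional-expectation step. What the paper's route buys is a single unified $L^2(\bar\P)$ limit from which both continuities follow; what your route buys is a shorter argument with more elementary tools, at the cost of treating the two functionals separately.
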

\begin{proof}
Let  us first assume that $(\rho_m)_m,\rho $ in $\PP_2(\R^d)$ are such that $W_2(\rho_m,\rho)\rightarrow 0$. We want  to  prove that 
\begin{equation}\label{convergenceH}
\Vert H(\rho_m)\Vert^2_{\L^2(\rho_m)} \rightarrow \Vert H(\rho)\Vert^2_{\L^2(\rho)}
\end{equation}
when $m\to \infty$.    Consider the  probability space $(\Omega,\FF, \P)$ and r.v.'s   $(X_m)_m$ and  $X$ constructed in   Lemma  \ref{lemma:contlawbarycentermap}.ii), and recall that, for each $\nu \in \PP_2(\R^d)$,  the r.v.'s   $(X_m, S^{\nu}_{\rho_m}  (X_m))$ have law $(\id,  S_{\rho_m}^{\nu} )\# \rho_m$  for each $m$ and converge in  $\L^2(\Omega,\FF, \P)$ to  the r.v. $(X, S^{\nu}_{\rho}  (X))$, which  has the law $(\id,  S_{\rho}^{\nu} ) \# \rho$.
We  next extend this  construction in order to suitably randomise $\nu$.    More precisely, we enlarge the probability space $(\Omega,\FF, \P)$ to   the product space $(\bar{\Omega},\bar{\FF}, \bar{\P})= ( \Omega \times \PP_2(\R^d) , \FF\otimes \BB(\PP_2(\R^d)) , \P\otimes \Q)$, that is, we add an independent random variable, called $\bfnu$,  taking   values in   $ \PP_2(\R^d)$ and which has  distribution  $\Q$.  

Thanks to the measurability of the mappings  $(x,\nu)\mapsto S^{\nu}_{\rho_m}$  and  $(x,\nu)\mapsto S^{\nu}_{\rho}$ proven in Lemma   \ref{lemma:Smeasurable},  by  replacing $\nu$ by $\bfnu$ in the previous objects we obtain random  vectors   $ (X_m, S^{\bfnu}_{\rho_m}  (X_m) ) $ and $(X, S^{\bfnu}_{\rho}  (X))$ defined in  $(\bar{\Omega},\bar{\FF}, \bar{\P})$ which  have, conditionally on $\{\bfnu=\nu\} $,  the laws  $(\id,  S_{\rho_m}^{\nu}  ) \# \rho_m$ and  $(\id,  S_{\rho}^{\nu}  ) \# \rho$ respectively.   
Moreover, $\bfnu$ is independent of  the r.v. $X,X_1,\dots X_m$ under $\bar{\P}$. 

Now,  by conditioning on $\{\bfnu=\nu\} $, using  the convergence result in  Lemma  \ref{lemma:contlawbarycentermap}.ii)  and the dominated convergence Theorem,  we can easily check that  $ ((X_m, S^{\bfnu}_{\rho_m}  (X_m) ))_m $  converges to  $(X, S^{\bfnu}_{\rho}  (X))$   in $ \bar{\P}-$probability. 
 Furthermore, one can integrate w.r.t.  $\Q$ the bound \eqref{boundUI2mom} obtained for fixed $\nu$  in   the proof of  Lemma \ref{lemma:UI2moments} and, denoting $\bar{\E}$ the expectation with respect to $\bar{\P}$, deduce that
\begin{align*}
\sup_m  \bar{\E}   \left(   \| S^{\bfnu}_{\rho_m}  (X_m) \|^2 \mathbf{1}_{\{  \| S^{\bfnu}_{\rho_m}  (X_m) \|^2  \geq M\} }   \right)  \leq  \int & \int_{\{\Vert x\Vert^2\geq K\}}  \Vert x\Vert^2 \d \nu (x)  \Q(\d \nu)\\
&  + \frac{K}{M} \int \int  \Vert x\Vert^2 \d \nu (x)   \Q(\d \nu),
\end{align*}
for each $M,K\geq 0$, where the r.h.s. is finite since  $\Q\in\PP_2(\PP_2(\R^d))$.  It follows that the  sequence $( (X_m, S^{\bfnu }_{\rho_m}  (X_m) ))_m $ has uniformly integrable second moments, and therefore converges also in $L^2 (\bar{\Omega},\bar{\FF}, \bar{\P})$ to  $(X, S^{\bfnu}_{\rho}  (X))$, thanks to  the Vitali  convergence theorem. In particular, as $m$ tends to infty, we have
$$\int V(\mu_m | \nu) \d\Q(\nu) = \E \| X_m -S^{\bfnu}_{\rho_m}  (X_m) \|^2 \rightarrow  \E \| X-  S^{\bfnu}_{\rho}  (X) \|^2 = \int V(\mu | \nu) \d\Q(\nu),$$
which proves the continuity of the function $\mu\in\PP_2(\R^d) \mapsto L(\mu)$.
 
We observe now that   $ \bar{\E} \left(  S^{\bfnu }_{\rho_m}  (X_m)  \vert X_m \right) = \int S_{\rho_m}^{\nu}(X_m)\d\Q(\nu) $  and $\bar{\E} \left(  S^{\bfnu}_{\rho}  (X)  \vert X \right) = \int S_{\rho}^{\nu}(X)\d\Q(\nu) $, $ \bar{\P}-$ a.s., Moreover,  if $\mathcal{F_{\infty}}$ denotes the $\sigma$-algebra generated by $(X_1,X_2,\ldots)$, one  has   $ \bar{\E} \left(  S^{\bfnu }_{\rho_m}  (X_m)  \vert \mathcal{F_{\infty}}  \right) =   \bar{\E} \left(  S^{\bfnu }_{\rho_m}  (X_m)  \vert X_m \right)$    and     $  \bar{\E} \left(  S^{\bfnu }_{\rho}  (X)  \vert \mathcal{F_{\infty}}  \right) = \bar{\E} \left(  S^{\bfnu }_{\rho}  (X)  \vert X \right)$.    
Using the continuity in $L^2 (\bar{\Omega},\bar{\FF}, \bar{\P})$ of the conditional expectation with respect to  $\mathcal{F_{\infty}}$, we deduce that 
\begin{equation}
\label{eq:convXSL2}
  X_m-  \bar{\E} \left(  S^{\bfnu }_{\rho_m}  (X_m)  \vert X_m \right) \to  X-  \bar{\E} \left(  S^{\bfnu }_{\rho}  (X)  \vert X \right)
  \end{equation}
in $L^2 (\bar{\Omega},\bar{\FF},  \bar{\P})$. 
We conclude that  $\bar{\E}\Vert X_m- \bar{\E}(S_{\rho_m}^{\bfnu}(X_m)|X_m)\Vert^2 \to  \bar{\E}\Vert X- \bar{\E}S_{\rho}^{\bfnu}(X)|X)\Vert^2$ as $m\to \infty$, which is exactly   the required convergence \eqref{convergenceH}. 

\end{proof}

\begin{proposition}
\label{prop:ineqsto}
For the sequence $(\mu_k)_k$ defined in \eqref{eq:iterative_algo}, we have
\begin{equation}
\E(L(\mu_{k+1})-L(\mu_k) | \FF_k) \leq \gamma_k^2L(\mu_k)-\gamma_k\Vert H(\mu_k)\Vert^2_{\L^2(\mu_k)}.
\end{equation}
\end{proposition}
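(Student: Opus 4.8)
The plan is to control $V(\mu_{k+1}\,|\,\nu)$ \emph{pointwise} in the test measure $\nu$ by a single, explicit coupling, then integrate against $\Q$ and take the conditional expectation given $\FF_k$. Throughout, write the update map as $T_k:=\id+\gamma_k(S_{\mu_k}^{\nu^k}-\id)$, so that $\mu_{k+1}=T_k\#\mu_k$, and abbreviate $v^{\nu}(x):=S_{\mu_k}^{\nu}(x)-x$; recall that $\int\|v^{\nu}\|^2\,\d\mu_k=V(\mu_k\,|\,\nu)$ by the very definition of the optimal barycentric projection.

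\textbf{Key pointwise bound.} Fix $\nu\in\PP_2(\R^d)$. By Theorem~\ref{th:1.4Julio} the measure $\eta:=S_{\mu_k}^{\nu}\#\mu_k$ satisfies $\eta\leq_c\nu$, and $(T_k,S_{\mu_k}^{\nu})\#\mu_k$ is a transport plan between $\mu_{k+1}$ and $\eta$. Hence
\[
V(\mu_{k+1}\,|\,\nu)\;=\;\inf_{\eta'\leq_c\nu}W_2^2(\mu_{k+1},\eta')\;\leq\;\int\big\|T_k(x)-S_{\mu_k}^{\nu}(x)\big\|^2\,\d\mu_k(x).
\]
Since $T_k(x)-S_{\mu_k}^{\nu}(x)=\gamma_k\,v^{\nu^k}(x)-v^{\nu}(x)$, expanding the square gives
\[
V(\mu_{k+1}\,|\,\nu)\;\leq\;\gamma_k^2\!\int\!\|v^{\nu^k}\|^2\,\d\mu_k\;-\;2\gamma_k\!\int\!\langle v^{\nu^k},v^{\nu}\rangle\,\d\mu_k\;+\;V(\mu_k\,|\,\nu).
\]

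\textbf{Integrating over $\Q$ and conditioning.} Integrate this inequality against $\Q(\d\nu)$, using Lemma~\ref{lemma:Smeasurable} for the joint measurability of $(x,\nu)\mapsto S_{\mu_k}^{\nu}(x)$ and $\Q\in\PP_2(\PP_2(\R^d))$ together with $\mu_k\in\PP_2(\R^d)$ to justify Fubini and the finiteness of all integrals. The cross term becomes $-2\gamma_k\int\langle v^{\nu^k},\int v^{\nu}\,\d\Q(\nu)\rangle\,\d\mu_k=2\gamma_k\int\langle v^{\nu^k},H(\mu_k)\rangle\,\d\mu_k$ by the definition of $H$, and the last term becomes $2L(\mu_k)$; dividing by $2$ yields
\[
L(\mu_{k+1})\;\leq\;\tfrac{\gamma_k^2}{2}\!\int\!\|v^{\nu^k}\|^2\,\d\mu_k\;+\;\gamma_k\!\int\!\langle v^{\nu^k},H(\mu_k)\rangle\,\d\mu_k\;+\;L(\mu_k).
\]
Finally, condition on $\FF_k$: since $\mu_k$ (hence $H(\mu_k)$) is $\FF_k$-measurable while $\nu^k$ is independent of $\FF_k$ with law $\Q$, taking $\E[\,\cdot\,|\,\FF_k]$ replaces $\nu^k$ by an integral against $\Q$, so $\E[\tfrac{\gamma_k^2}{2}\int\|v^{\nu^k}\|^2\,\d\mu_k\,|\,\FF_k]=\gamma_k^2 L(\mu_k)$ and $\E[\int\langle v^{\nu^k},H(\mu_k)\rangle\,\d\mu_k\,|\,\FF_k]=\int\langle -H(\mu_k),H(\mu_k)\rangle\,\d\mu_k=-\|H(\mu_k)\|_{\L^2(\mu_k)}^2$. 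Rearranging gives the claimed inequality.

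\textbf{Obstacles.} The argument has no deep step; the only points requiring care are (i) that $\mu_k\in\PP_2(\R^d)$ for every $k$ (an induction using convexity of $\|\cdot\|^2$ and Jensen's inequality, exactly as in the proof of Proposition~\ref{prop:existence_pop_bar}, valid once $\gamma_k\leq 1$, which holds eventually since $\sum_k\gamma_k^2<\infty$); (ii) the interchanges of $\int\d\Q(\nu)$ with $\int\d\mu_k$ and with $\E[\,\cdot\,|\,\FF_k]$, which are legitimate by Lemma~\ref{lemma:Smeasurable}, nonnegativity, and the moment bound $\int\!\int\|y\|^2\,\d\nu(y)\,\d\Q(\nu)<\infty$; and (iii) the conditional independence of $\nu^k$ from $\FF_k$, which is immediate from the definition of the filtration. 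The substantive content is the one-line coupling bound for $V(\mu_{k+1}\,|\,\nu)$ obtained from Theorem~\ref{th:1.4Julio}.
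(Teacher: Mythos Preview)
Your proof is correct and follows essentially the same approach as the paper's: both fix a test measure $\nu$, use the coupling $(T_k,S_{\mu_k}^{\nu})\#\mu_k$ together with $S_{\mu_k}^{\nu}\#\mu_k\leq_c\nu$ to bound $V(\mu_{k+1}\,|\,\nu)$, expand the square, integrate against $\Q$, and then condition on $\FF_k$. The only differences are cosmetic (you introduce the shorthand $v^{\nu}=S_{\mu_k}^{\nu}-\id$, and you make the Fubini/measurability justifications via Lemma~\ref{lemma:Smeasurable} explicit, which the paper leaves implicit).
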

\begin{proof}
The arguments are similar to the ones used for the population Wasserstein barycenter iterative scheme in the proof of Proposition 3.2 in \cite{bakchoff2022}. Let us set them for the present problem. Let $\nu\in\textrm{supp}(Q)$, then $([(1-\gamma_k)\id+\gamma_k S_{\mu_k}^{\nu^k}],S_{\mu_k}^{\nu}]\#\mu_k$ belongs to $\Pi(\mu_{k+1},S_{\mu_k}^{\nu}\#\mu_k)$. Therefore we have
\begin{align*}
V(\mu_{k+1} | \nu) &\leq W_2^2(\mu_{k+1}, S_{\mu_k}^{\nu}\#\mu_k)\quad \mbox{since} \ S_{\mu_k}^{\nu}\#\mu_k\leq_c \nu\\
& \leq \int \Vert (1-\gamma_k)x+\gamma_k S_{\mu_k}^{\nu^k}(x)-S_{\mu_k}^{\nu}(x)\Vert^2\d\mu_k(x)\\
&= \int \Vert x-S_{\mu_k}^{\nu}(x)\Vert^2\d\mu_k(x)-2\gamma_k\int\langle  x-S_{\mu_k}^{\nu}(x), x-S_{\mu_k}^{\nu^k}(x)\rangle\d\mu_k(x)\\
& \qquad +\gamma_k^2\int \Vert x-S_{\mu_k}^{\nu^k}(x)\Vert^2\d\mu_k(x)\\
& = V(\mu_k | \nu) + \gamma_k^2V(\mu_k | \nu^k) -2\gamma_k\int\langle  x-S_{\mu_k}^{\nu}(x), x-S_{\mu_k}^{\nu^k}(x)\rangle\d\mu_k(x).
\end{align*}
Integrating with respect to $\nu$, and divided by $2$ we get
$$L(\mu_{k+1})\leq L(\mu_k)+\frac{\gamma_k^2}{2} V(\mu_k | \nu^k)-\gamma_k\int\langle  H(\mu_k)(x), x-S_{\mu_k}^{\nu^k}(x)\rangle\d\mu_k(x).$$
We can then take the conditional expectation with respect to the filtration $\FF_k$, knowing that $\mu_k$ is $\FF_k$-measurable and that $\nu^k$ is independently sampled from $\FF_k$, we have
\begin{align*}
\E(L(\mu_{k+1}) | \FF_k)&\leq L(\mu_k) +\frac{\gamma_k^2}{2} \int V(\mu_k | \nu)\d\Q(\nu)- \gamma_k\int\langle  H(\mu_k)(x), \int x-S_{\mu_k}^{\nu}(x)\d\Q(\nu)\rangle\d\mu_k(x)\\
&= L(\mu_k) +\gamma_k^2 L(\mu_k)- \gamma_k\int\langle  H(\mu_k)(x), H(\mu_k)(x)\rangle\d\mu_k(x)\\
&=(1+\gamma_k^2) L(\mu_k) - \gamma_k \Vert H(\mu_k)\Vert^2_{\mu_k}.
\end{align*}
\end{proof}

\begin{proof}[Proof of Theorem \ref{th:conv_iter}]

We will proceed in a similar way as in the proof of Theorem 1.4 in \cite{bakchoff2022}. Let us first note that the set $K_{\Q}$ is compact in ${\cal P}_2(\R^d)$ w.r.t. $W_2$ (see \cite{villani2008optimal}). Moreover,  the sequence   $(\mu_k)_k$ is a.s. included in this compact set, as can be seen by induction using the facts that the function $|\cdot |^{2+\epsilon}$ is convex and that $S_{\mu_k}^{\nu_k}\# \mu_k \leq_c \nu_k$, with $\nu^k \sim\Q$.   Now let $\bar{\mu}$ be a weak population barycenter, i.e. $\bar{\mu}$ minimises $L$ defined in \eqref{def:L}, and write $\bar{L}:=L(\bar{\mu}) $. We introduce the sequences
$$h_k:=L(\mu_k)-\bar{L}\geq 0 \quad \mbox{and} \quad \alpha_k:=\prod_{i=1}^{k-1}\frac{1}{1+\gamma_k^2}.$$
From condition \eqref{cond:gamma}, the sequence $(\alpha_k)_k$ converges to some $\alpha_{\infty}>0$. By Proposition \ref{prop:ineqsto}, we have
\begin{align}
&\E(h_{k+1}-(1+\gamma_k^2)h_k|\FF_k)\leq \gamma_k^2\bar{L} -\gamma_k\Vert H(\mu_k)\Vert^2_{\mu_k}\leq \gamma_k^2\bar{L}  \nonumber\\
\Rightarrow & \, \E(\alpha_{k+1}h_{k+1}-\alpha_kh_k | \FF_k)\leq  \alpha_{k+1}\gamma_k^2\bar{L}  \quad\mbox{upon multiplying by}\ \alpha_{k+1}.\label{eq:th:conv_iter_proof}
\end{align} 
Defining now
$$\delta_k :=\left\{
    \begin{array}{ll}
        1 & \mbox{if } \ \E(\alpha_{k+1}h_{k+1}-\alpha_kh_k | \FF_k)>0\\
        0 & \mbox{otherwise,}
    \end{array}
\right.   $$
we deduce that
\begin{align*}
\sum_{k=1}^{\infty} \E(\delta_k(\alpha_{k+1}h_{k+1}-\alpha_kh_k)) &= \sum_{k=1}^{\infty} \E(\delta_k \E(\alpha_{k+1}h_{k+1}-\alpha_kh_k| \FF_k))\\
&\leq L(\bar{\mu})\sum_{k=1}^{\infty} \alpha_{k+1}\gamma^2_k\leq L(\bar{\mu})\sum_{k=1}^{\infty} \gamma^2_k<\infty.
\end{align*}
Since $h_k\alpha_k\geq 0$, by the quasi-martingale convergence theorem $(h_k\alpha_k)_k$ converges almost surely. Since $(\alpha_k)_k$ converges to $\alpha_{\infty}>0$, then $(h_k)_k$ also converges almost surely to some $h_{\infty}\geq 0$. Taking expectations in Eq.~\eqref{eq:th:conv_iter_proof} and summing we get
$$\E(\alpha_{k+1}h_{k+1})\leq \E( \alpha_0h_0) + \bar{L}\sum_{m=1}^k\alpha_{m+1}\gamma_m^2\leq C.$$
Fatou's Lemma yields $ \E(\alpha_{\infty}h_{\infty})<\infty$, and since $\alpha_{\infty}<\infty$, we deduce that $h_{\infty}$ is almost surely finite. Our goal  now is to show that $h_{\infty}=0$.  
From \eqref{eq:th:conv_iter_proof} we deduce as before that 
$$\E (\alpha_{k+1}h_{k+1})-\E(\alpha_0h_0)\leq \bar{L} \sum_{m=1}^k\alpha_{m+1}\gamma_m^2- \sum_{m=1}^k\alpha_{m+1}\gamma_m\E\left(\Vert H(\mu_m)\Vert^2_{\L_2(\mu_m)}\right).$$
Taking liminf, using  Fatou on the l.h.s. and monotone convergence on the r.h.s. we obtain that
$$-\infty<\E(\alpha_{\infty}h_{\infty}) -\E(\alpha_0h_0)\leq C-\E\left(\sum_{m=1}^{\infty}\alpha_{m+1}\gamma_m\Vert H(\mu_m)\Vert^2_{\L_2(\mu_m)}\right)$$
hence, in particular,
\begin{equation}
\label{eq:sum_product}
\sum_{k=1}^{\infty}\gamma_k\Vert H(\mu_k)\Vert^2_{\L_2(\mu_k)}<+\infty \quad \mbox{a.s.}
\end{equation}

Note that the conditions in Eqs.~\eqref{eq:sum_product} and \eqref{cond:gamma} imply that
	\begin{equation}\label{eq:liminf0} \textstyle
	\liminf_{t\to \infty} \Vert H(\mu_k)\Vert^2_{\L_2(\mu_k)} =0,\,\, \text{a.s.}
\end{equation}
	Observe also that, by the compactness of $ K_\Q $ and the continuity of $L$ in  Proposition \ref{th:lsc}, the set $\{ \rho : L(\rho) \geq \bar{L}  +\delta  \} \cap K_\Q $  is  $W_2$-compact.  Therefore, the  function  $\rho\mapsto \lVert H(\rho)\rVert^2_{\L^2(\rho)}$,  also $W_2$-continuous by  Proposition \ref{th:lsc},   attains its minimum on that set.  That minimum cannot be zero, as otherwise we would have obtained a fixed-point that is not a weak barycenter, contradicting our hypothesis. It follows that
	\begin{equation}\label{eq:infepsK} \textstyle
	\forall \delta >0 , \, \inf_{\{ \rho : L(\rho) \geq \bar{L}  +\delta \} \cap K_\Q}   \lVert H(\rho)\rVert^2_{\L^2(\rho)}>0.  
	\end{equation}
	Since $\{ \mu_k\}_k\subset K_{\Q}$ a.s.,  we deduce from the previous result the  a.s.\ inclusions of  events:
	\begin{equation*}
	\begin{split}
	\{ h_{\infty} \geq  2\delta  \} \subset \, &  \textstyle\left\{ \mu_t \in   \{ \rho : L(\rho) \geq L(\bar{\mu}) +\delta  \} \cap K_\Q 
	\, \forall t \mbox{ large enough} \right\} \\%\cup   \{ \exists t \in \N: \mu_t \not \in K\}  \\
	  \subset \, & \textstyle \bigcup_{\ell\in \N}     \left\{  \lVert H(\mu_t)\rVert^2_{\L^2(\mu_t)} > 1/\ell :\, \forall  t \mbox{ large enough} \right\} % \cup   \{ \exists t \in \N: \mu_t \not \in K\}  
	  \subset \, \textstyle \left\{ \liminf_{t\to \infty} \lVert H(\mu_t)\rVert^2_{\L^2(\mu_t)}>0\right\}  ,%\cup   \{ \exists t \in \N: \mu_t \not \in K\} , 
	 \end{split}
	 \end{equation*}
	 where Eq.~\eqref{eq:infepsK} was used to obtain the second inclusion. It follows using  Eq.~\eqref{eq:liminf0}    that $\P(h_{\infty} \geq  2\delta )=0$ for every $\varepsilon>0$, hence $h_{\infty}=0$ a.s. as claimed. In other words,  $L(\mu_k)\to \bar{L}$  a.s. as $k\to \infty$.
	 
	 We already established that $\{ \mu_k\}_k\subset K_{\Q}$,  hence the sequence is relatively compact. We finally conclude that the limit $\hat{\mu}$ of any convergent subsequence $\{ \mu_{k_j} \}_{k_j}$ satisfies $L(\hat{\mu})= \lim_j L (\mu_{k_j}) = \bar{L} $, whence, it is a weak barycenter.

\begin{remark}
\label{remark:assumptionA}
Assumption (A) can be replaced by the following more general condition:

\begin{minipage}{1cm}
\vspace{-0.4cm}(A')
\end{minipage}
\begin{minipage}{12cm} $\Q$ gives full measure to a $W_2$-compact set $K_\Q$ which is  \emph{``weakly geodesically closed''}, in the sense that for any $\mu,\nu\in K_\Q$ and $t\in[0,1]$,  $((1-t)\textnormal{id}+tS_\mu^\nu)\#\mu\in K_\Q$. \end{minipage} 

\end{remark}

\end{proof}

%Let us now assume that  $(\rho_m)_m $ in $\PP_2(\R^d)$ are uniformly bounded w.r.t $W_2$ and converge to $\rho\in \PP(\R^d)$ w.r.t $W_q$ with $q\in [1,2) $. The previous arguments can be easily adapted to show that convergence \eqref{eq:convXSL2} holds in $L^q (\bar{\Omega},\bar{\FF},  \bar{\P})$. Moreover, it can be easily checked that for every $M>0$,  the mapping $Y\mapsto \bar{\E}( \Vert Y \Vert^2\wedge M)$ is continuous in $L^q (\bar{\Omega},\bar{\FF},  \bar{\P})$. It  follows that  
% \begin{equation*}
%\begin{split}
%  \liminf_{m\to \infty}\  \bar{\E}\left( \Vert X_m- \bar{\E}(S_{\rho_m}^{\bfnu}(X_m)|X_m)\Vert^2\right) & \geq \   \liminf_{m\to \infty} \bar{\E}\left( \Vert X_m- \bar{\E}(S_{\rho_m}^{\bfnu}(X_m)|X_m)\Vert^2 \wedge M\right)  \\  
%  & =  
%   \bar{\E}\left(\Vert X- \bar{\E}S_{\rho}^{\bfnu}(X)|X)\Vert^2\wedge M \right) \\
%  \end{split}
%  \end{equation*} 
%  Letting $M\to \infty$ and using monotone convergence in the last term, the stated property follows.

\section{Numerical results}
\label{sec:more_expe}

\subsection{Proximal algorithm for the computation of the OWT plan}
\label{sec:appendix_proximal}

This section is dedicated to the resolution of the OWT problem. Let $\mu=\sum_{i=1}^r a_i\delta_{x_i}$ and $\nu=\sum_{j=1}^m b_i\delta_{y_j}$, be two discrete measures, the OWT problem boils down to solving
\begin{equation}
\label{eq:solve_S}
\min_{\pi\in\R^{r\times m}} \underbrace{\sum_{i=1}^r a_i  \Vert x_i- \left(\frac{\pi {\bf y}}{\bf a}\right)_i\Vert^2}_{f(\pi)} + \underbrace{\vphantom{\sum_i} 1_{\Pi(\mu,\nu)}(\pi)}_{g(\pi)},
\end{equation}
where $1_C$ is the indicator function of the set $C$ i.e.
$$
1_C(\pi) = \left\{
    \begin{array}{ll}
        \pi & \mbox{if } \pi \in C \\
        \infty & \mbox{otherwise.}
    \end{array}
\right.
$$
The proximal algorithm to solve Eq.~\eqref{eq:solve_S} then reads:
\begin{equation}
\label{eq:prox_algo}
\pi^{\ell+1} = {\rm prox}_{\theta^{\ell}g}(\pi^{\ell}-\theta^{\ell}\nabla f(\pi^{\ell})).
\end{equation}
As $\Pi(\mu,\nu)$ is a closed non-empty convex set, the proximal operator of $g$ reduces to the Euclidean projection onto $\Pi(\mu,\nu)$:
$${\rm proj}_{\Pi(\mu,\nu)}(P) = \argmin_{\pi\in\R^{r\times m}} \Vert P-\pi\Vert^2 = \argmin_{\pi\in\R^{r\times m}} \ \langle \pi,-P\rangle + \frac{1}{2}\Vert \pi\Vert^2$$
where $\Vert\cdot\Vert$ is the Frobenius norm. This projection problem can be solved by Dykstra’s algorithm with alternate Bregman projections \cite{dessein2016regularized} or by stochastic dual approaches of OT regularised by an $\L_2$ norm \cite{seguy2017large}. This method is summarised in Algorithm \ref{algo:bary_proj}. In particular, we used an accelerated version of Eq.~\eqref{eq:prox_algo} via FISTA \cite{beck2009fast} (with $\omega_{\ell} \in[0,1)$ an extrapolation parameter and $\theta_{\ell}$ the usual stepsize chosen by a line search) in order to compute an optimal plan $\pi_{\mu}^{\nu}$ in the weak transport problem. The optimal barycentric projection is then given by $S_{\mu}^{\nu}=\frac{\pi_{\mu}^{\nu}{\bf y}}{{\bf a}}$. We initialised the algorithm with a random matrix whose elements sum to $1$. Observe that, from Algorithm \ref{algo:weakbar}, the $K$ optimal barycentric projection computations can be parallelised for each step $n$.
\begin{algorithm}
\SetAlgoLined
{\bfseries Output:} $\pi_{\mu}^{\nu}$;\\
{\bfseries Input:} $\mu = \sum_{i=1}^r a_i\delta_{x_i}$ and $\nu = \sum_{j=1}^m b_j\delta_{y_j}$;\\
 Initialise $\pi_0$ random matrix;\\
 \While{not converge}{
$P_{{\ell}+1} := \pi_{\ell}+\omega_{\ell}(\pi_{\ell}-\pi_{{\ell}-1})$;\\
$\pi_{{\ell}+1} := {\rm proj}_{\Pi(\mu,\nu)}(P_{{\ell}+1}-\theta_{\ell}\nabla f(P_{{\ell}+1}))$;\\
 }
 \caption{Computation of the optimal weak plan}
 \label{algo:bary_proj}
\end{algorithm}

With respect to the efficiency of this algorithm, Figure \ref{fig:efficiency} shows a comparison of different settings for Eq.~\eqref{eq:prox_algo} in order to compute an optimal weak transport plan. For that purpose, we considered two discrete distributions $\mu$ and $\nu$ each constructed from $r=m=10, 100$ and $250$ samples of two dimensional Gaussian measures. We illustrate the convergence for both the standard and accelerated versions of the proximal algorithm, as well as for the projection into $\Pi(\mu,\nu)$ via Dykstra’s algorithm or the stochastic dual approach. As expected, the accelerated version of Eq.~\eqref{eq:prox_algo} converges faster than the classical proximal algorithm, and the projection step in more stable with Dykstra’s algorithm. Moreover, the smaller the number of support points, the faster the convergence. We have also noted that the random initialisation does not affect the convergence towards the minimiser of Eq.~\eqref{eq:solve_S}.
\begin{figure}
\hspace{-0.3cm}\begin{tabular}{C{4.3cm} C{4.3cm} C{4.3cm}}
\includegraphics[scale=0.29]{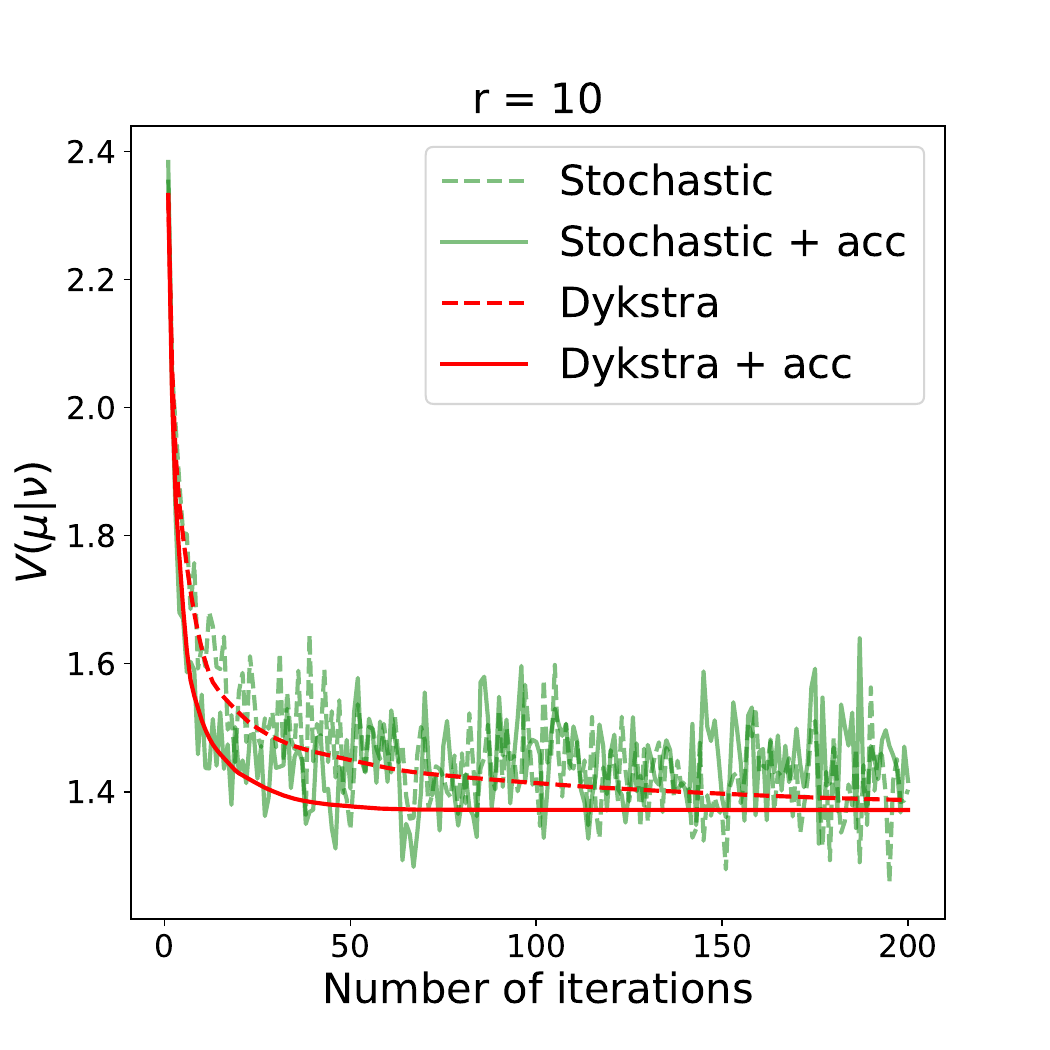} &
 \includegraphics[scale=0.29]{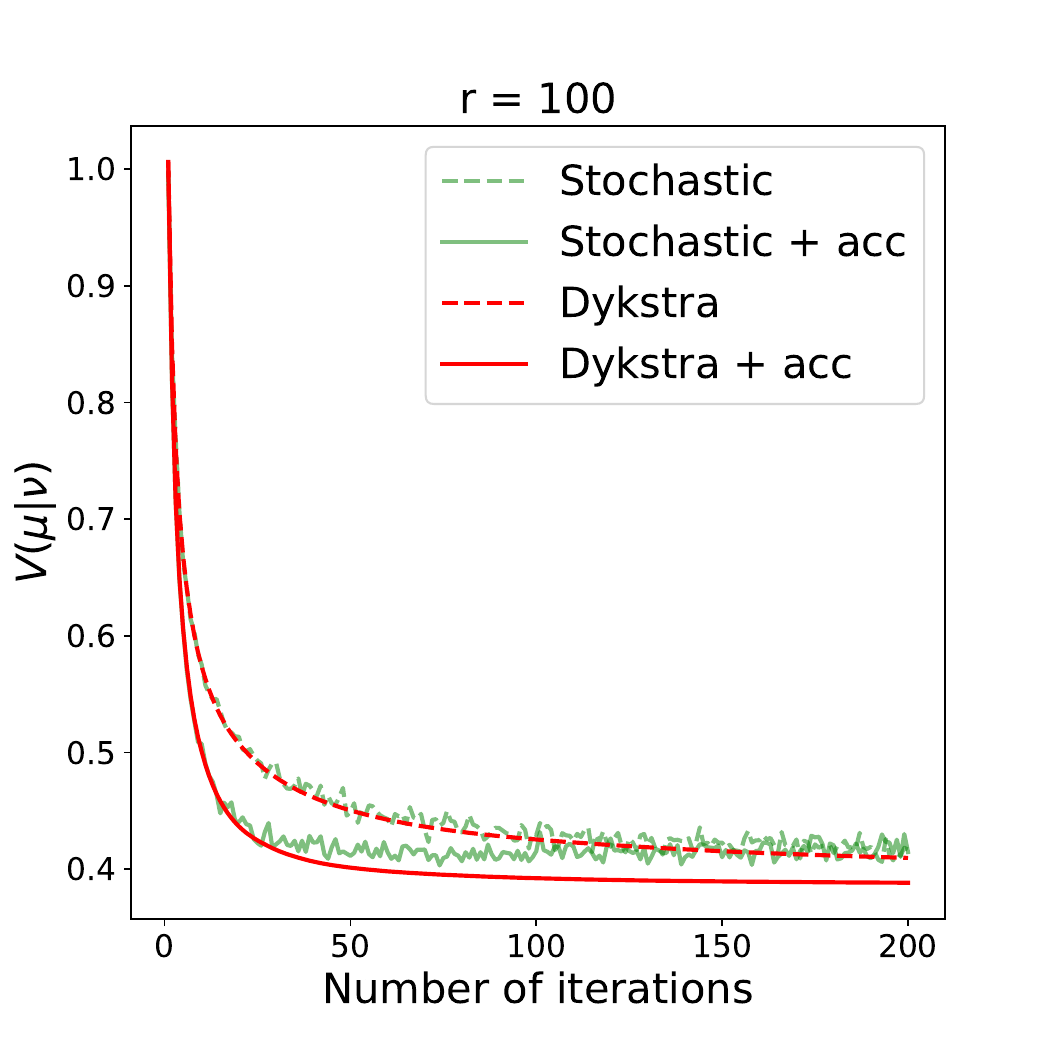} &
  \includegraphics[scale=0.29]{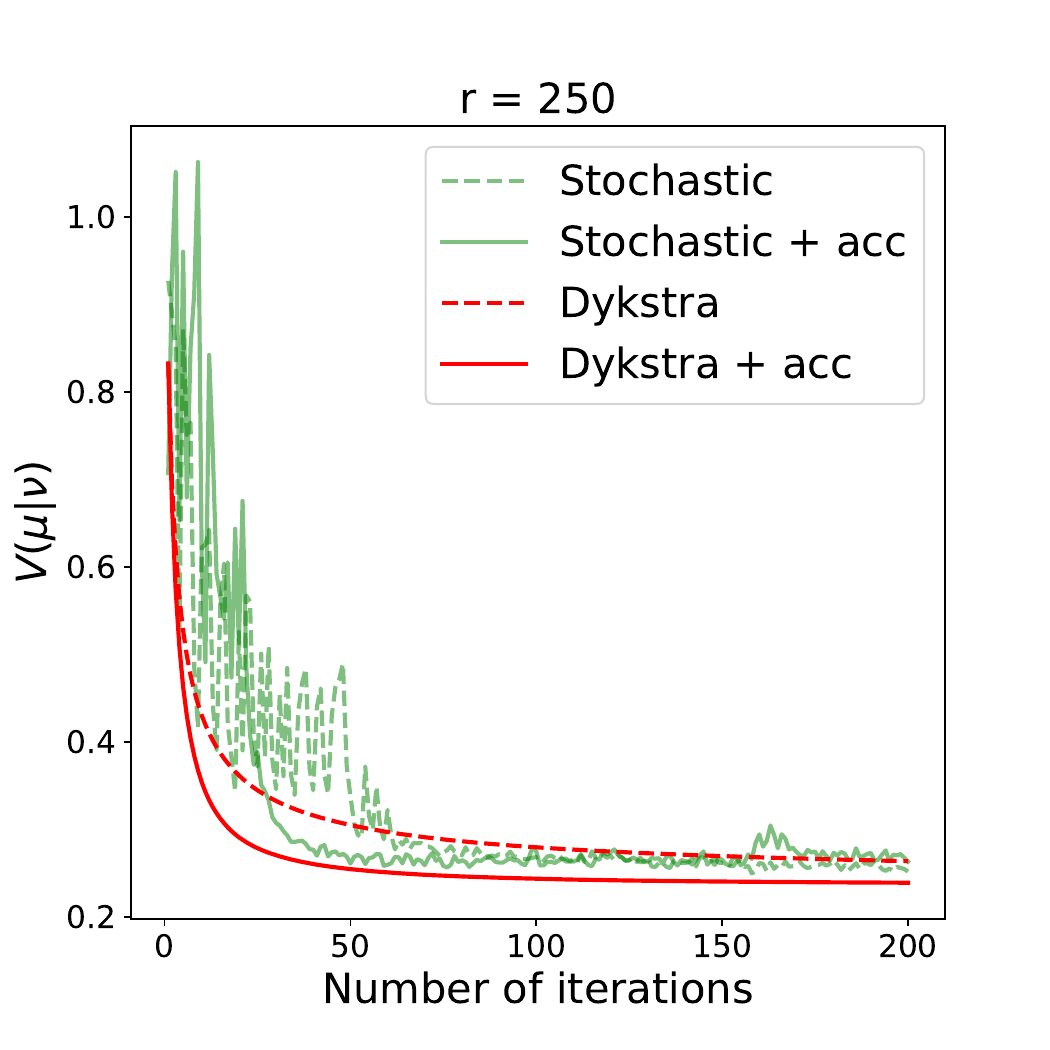} 
\end{tabular}
\caption{Convergence of the algorithm \eqref{algo:bary_proj} in several settings for measures $\mu$ and $\nu$ supported on $r=m$ points.}
\label{fig:efficiency}
\end{figure}

\subsection{Additional experiments}
\label{sec:additional_expe}

{\bf Gaussian distributions} As in Section 5 of \cite{alvarez2016fixed}, we computed a weak barycenter between two 2D centered ellipses $E(\Sigma_i)=\{s\in\R^2 : x^t\Sigma_i^{-1}x=1\}$ with covariances matrices
$$\Sigma_1=\begin{pmatrix} 2 & 0 \\ 0 & 1\end{pmatrix}\quad \mbox{and} \quad \Sigma_2=\begin{pmatrix} 1 & 0 \\ 0 & 2\end{pmatrix},$$
by considering $300$ random observations foe each ellipse. We then executed the iterations of Algorithm \ref{algo:weakbar} until the difference of the objective function (i.e., the sum in Eq.~\eqref{eq:weak_barycenter}) between two successive iterations was smaller than $1e-5$. This occurred at the 8th iteration, and the resulting weak barycenter was a circle within both ellipses. As we have access to the value of the weak barycenter problem (see Eq.~\eqref{eq:obj_function}), we also compared the value of the objective function at the 8th iteration (that is $3.62e-4$) to $\frac{1}{2}\sum_{i=1}^2  \Vert \E(Y_i)\Vert^2-\Vert \frac{1}{2} \sum_{i=1}^2  \E(Y_i)\Vert^2$, with a plug-in estimator for $\E(Y_i)$. The approximated objective was equal to $3.21e-4$, therefore, Algorithm \ref{algo:weakbar} gave a satisfactory optimised weak barycenter.

{\bf Ellipse distributions} ($r=100$ \& $K=15$). We considered ellipse distributions with random center in $(-5,5)$, random semi-major and semi-minor axes in $(6,14)$. The results are presented in Fig.~\ref{fig:rings}, where the same conclusions as in the Gaussian examples hold.
\begin{figure}[ht!]
\hspace{-0.4cm}\begin{tabular}{C{4.5cm} C{4.5cm} C{4.5cm}}
\includegraphics[scale=0.23]{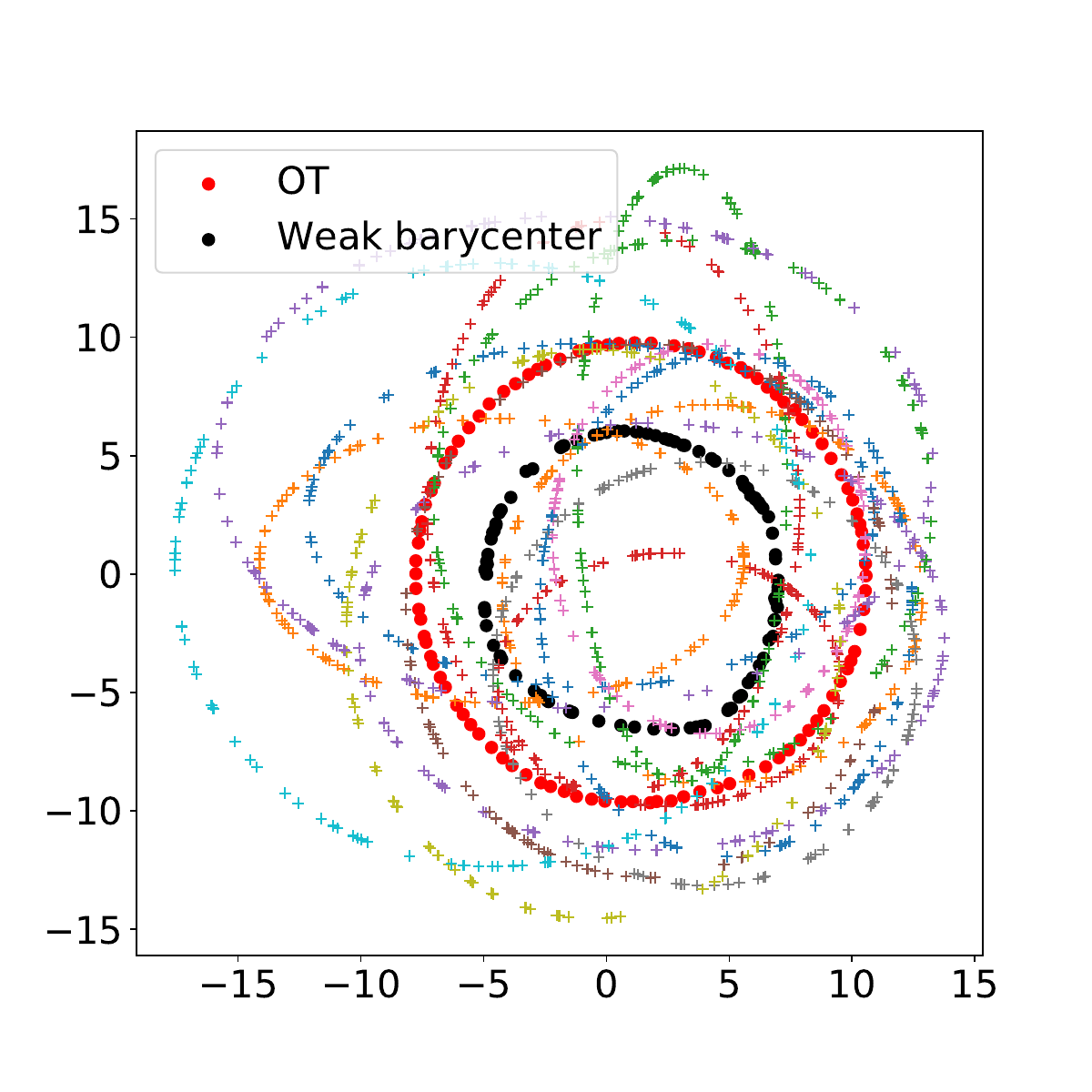} &
\includegraphics[scale=0.23]{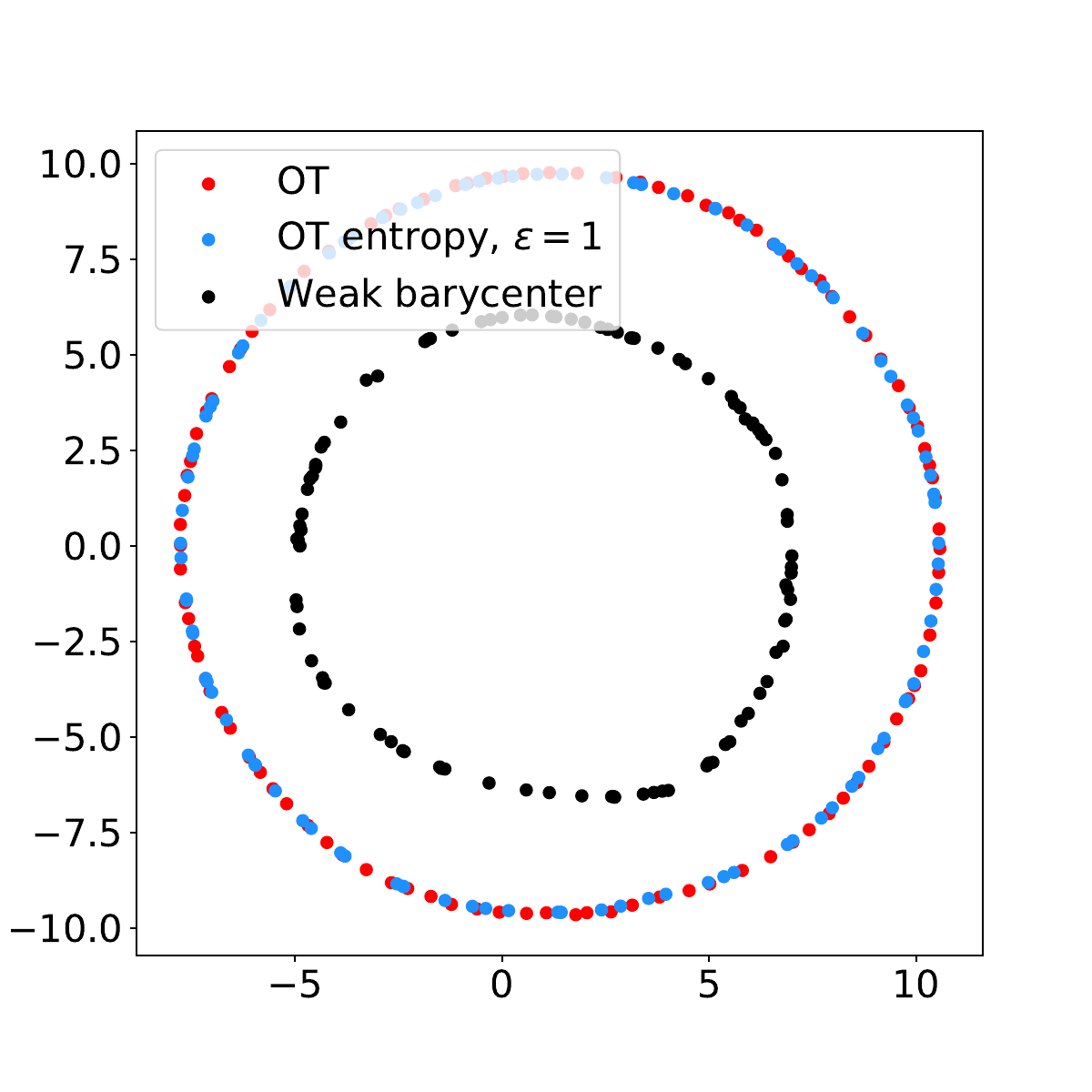} &
\includegraphics[scale=0.23]{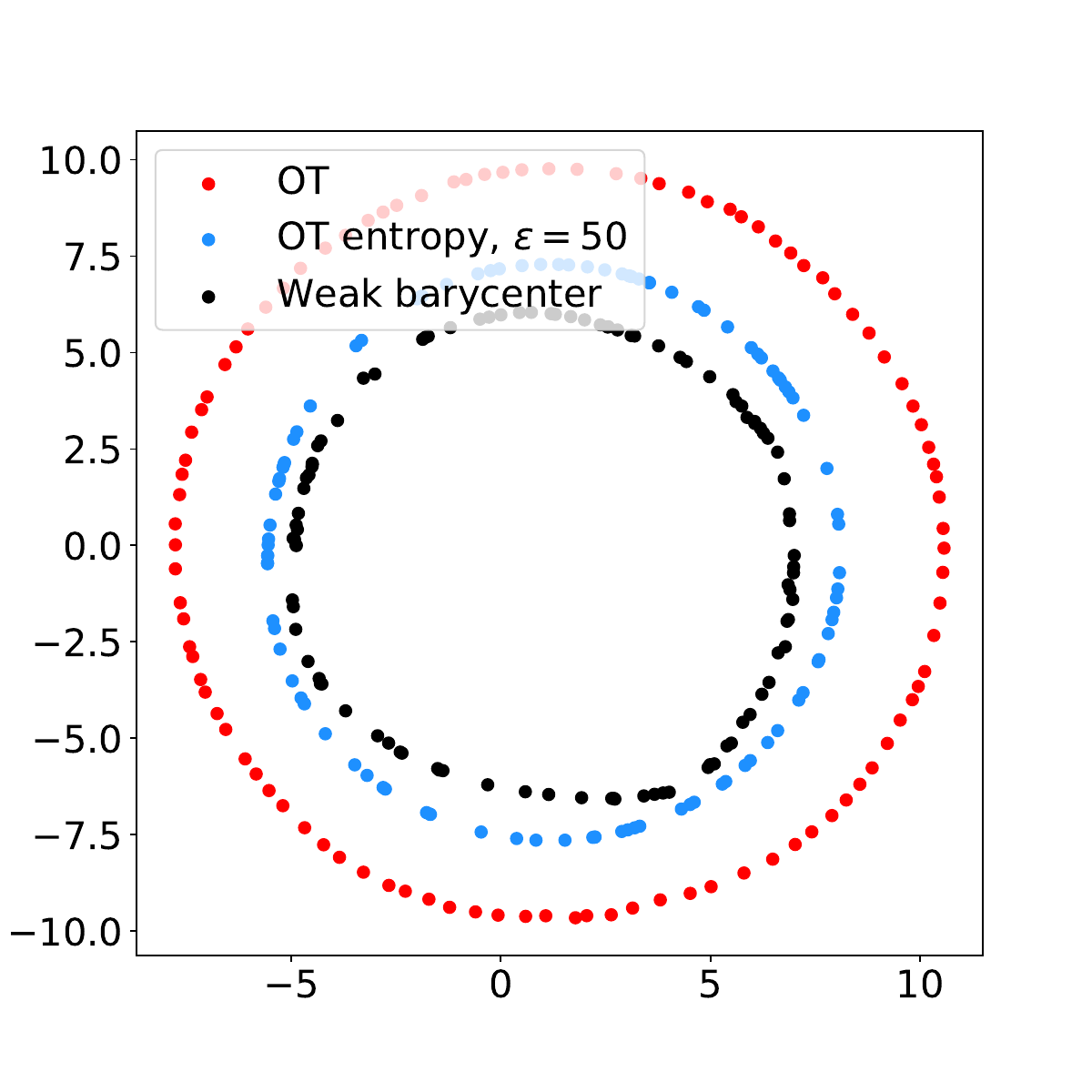} 
\end{tabular}
\caption{(left) Ellipse distributions and their OWT (black) and OT (red) barycenters computed with Algorithm \ref{algo:weakbarpop}. (center $\&$ right) Illustration of the weak (black), OT (red) and OT Sinkhorn (blue) barycenters for different values of $\varepsilon=1,50$.}
\label{fig:rings}
\end{figure}

{\bf Pair-of-ellipses} ($r\in (200,300)$ \& $K=10$). In the same fashion, we considered  distributions supported on two ellipses with random centers in $(-5,5)$, random semi-major and semi-minor axes in $(1,7)$ and $(7,13)$ respectively. Fig.~\ref{fig:pair_ellipses} shows the distributions (left) as well as the OT and OWT barycenters (right) computed from random samples of the distributions. Observe that, once again, the weak barycenter better preserved the structure of the distributions when computing Algorithm \ref{algo:weakbarpop}.

\begin{figure}[ht!]
\centering
\begin{tabular}{cc}
\includegraphics[scale=0.26]{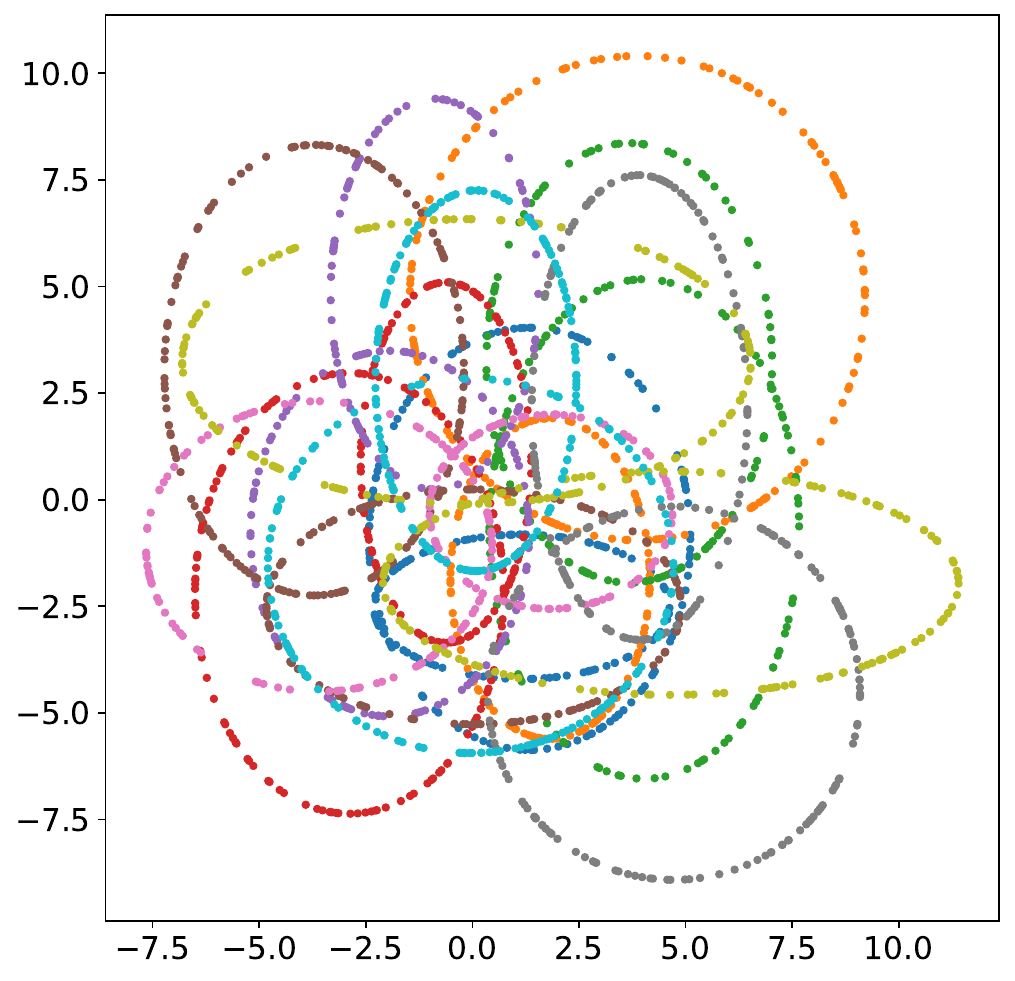} &
\includegraphics[scale=0.26]{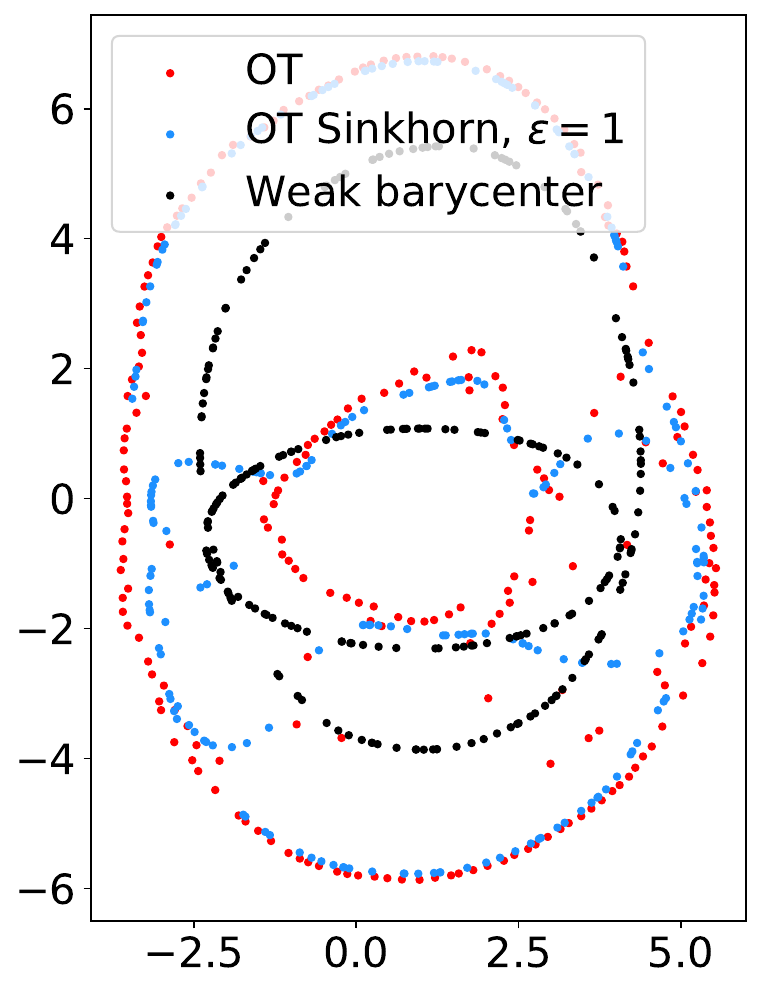} 
\end{tabular}
\caption{(left) Distributions supported on a pair-of-squares. (right) OWT (black), OT (red) and OT Sinkhorn for $\varepsilon=1$ (blue) barycenters  computed with Algorithm \ref{algo:weakbarpop}.}
\label{fig:pair_ellipses}
\end{figure}

\end{document}